\theoremstyle{plain}
\newtheorem{theorem}{Theorem}[section]
\newtheorem{proposition}[theorem]{Proposition}
\newtheorem{lemma}[theorem]{Lemma}
\theoremstyle{definition}
\newtheorem{definition}[theorem]{Definition}
\newtheorem{example}[theorem]{Example}
\newtheorem{assumption}[theorem]{Assumption}
\theoremstyle{remark}
\newtheorem{remark}[theorem]{Remark}
\icmltitlerunning{A Meta-learner for Heterogeneous Effects in Difference-in-Differences}
\newcommand{\E}{\mathbb{E}}
\newcommand{\ba}{\begin{array}}
\newcommand{\ea}{\end{array}}
\newcommand{\bs}{\begin{align}\begin{split}\nonumber}
\newcommand{\bsnumber}{\begin{align}\begin{split}}
\newcommand{\es}{\end{split}\end{align}}
\newcommand{\Var}{\ensuremath{\text{Var}}}
\def\balign#1\ealign{\begin{align}#1\end{align}}
\def\balignat#1\ealign{\begin{alignat}#1\end{alignat}}
\def\bitemize#1\eitemize{\begin{itemize}#1\end{itemize}}
\def\benumerate#1\eenumerate{\begin{enumerate}#1\end{enumerate}}
\newenvironment{talign}
 {\csname align\endcsname}
 {\endalign}
\def\balignt#1\ealignt{\begin{talign}#1\end{talign}}%
\begin{document}

\twocolumn[
\icmltitle{A Meta-learner for Heterogeneous Effects in Difference-in-Differences}




\begin{icmlauthorlist}
\icmlauthor{Hui Lan}{icme}
\icmlauthor{Haoge Chang}{columbia}
\icmlauthor{Eleanor Dillon}{msr}
\icmlauthor{Vasilis Syrgkanis}{mse}
\end{icmlauthorlist}

\icmlaffiliation{icme}{Institute of Computational and Mathematical Engineering, Stanford University, Stanford, USA}
\icmlaffiliation{mse}{Department of Management Science and Engineering, Stanford University, Stanford, USA}
\icmlaffiliation{msr}{Microsoft Research, New England}
\icmlaffiliation{columbia}{Department of Economics, Columbia University}

\icmlcorrespondingauthor{Hui Lan}{huilan@stanford.edu}

\icmlkeywords{Machine Learning, ICML}

\vskip 0.3in
]



\printAffiliationsAndNotice{*Part of this work is done during an internship at Microsoft Research. **Vasilis Syrgkanis and Hui Lan are Supported by NSF Award IIS-2337916.}  

\begin{abstract}
We address the problem of estimating heterogeneous treatment effects in panel data, adopting the popular Difference-in-Differences (DiD) framework under the conditional parallel trends assumption. We propose a novel doubly robust meta-learner for the Conditional Average Treatment Effect on the Treated (CATT), reducing the estimation to a convex risk minimization problem involving a set of auxiliary models. Our framework allows for the flexible estimation of the CATT, when conditioning on any subset of variables of interest using generic machine learning. Leveraging Neyman orthogonality, our proposed approach is robust to estimation errors in the auxiliary models. As a generalization to our main result, we develop a meta-learning approach for the estimation of general conditional functionals under covariate shift. We also provide an extension to the instrumented DiD setting with non-compliance. Empirical results demonstrate the superiority of our approach over existing baselines.

\end{abstract}
\section{Introduction}
Difference-in-Differences estimators have become a foundational tool for causal inference in economics \cite{roth2023s}, social sciences \cite{reanalysis} and healthcare \cite{wang2024advances} for evaluating causal effects of policy interventions or treatments when both pre- and post-treatment outcomes are observed. In contrast to cross-sectional data, having panel data enables researchers to work with different assumptions that are often considered more plausible in application. Due to the non-random assignment of treatments, estimating the causal effect of a treatment or intervention in observational studies often requires strong assumptions, such as conditional exogeneity, which rules out unobserved confounding. Panel data consist of repeated observations of the same units over time, which allows researchers to control for certain types of unobserved, time-invariant characteristics. Due to its flexibility and robustness in handling non-experimental data, the DiD approach has gained significant traction in empirical research, especially in the evaluation of policy interventions \citep[e.g.][etc.]{thome2024understanding}, labor market changes \citep[e.g.][etc.]{min_wage, rossin2013effects, pierce2016surprisingly}, environmental regulations \citep[e.g.][etc.]{gao2020evaluation}, and public health \citep[e.g.][etc.]{finkelstein2012oregon, dimick2014methods}.

Despite several recent methodological advances in the DiD literature \cite{roth2023s, reanalysis}, most state-of-the-art approaches are still only able to generate average causal effects, or at best group average causal effects for predefined subpopulations. On the contrary, in many empirical applications, especially on large-scale datasets that stem from digital platforms, practitioners are interested in treatment effect heterogeneity for personalized decision making. The estimation of heterogeneous treatment effects has gained considerable attention in recent years due to its potential to uncover variation in how different subpopulations respond to an intervention. Motivated by the success of machine learning techniques in learning complex tasks, many studies have employed them in learning heterogeneous treatment effects, see for instance \citealp{shalit2017estimating, shi2019adapting,x-learner,R-learner, oprescu2019orthogonal,DR-learner}, etc. However, estimating heterogeneous treatment effects for panel data remains relatively unexplored in literature. 

In this paper, we explore the estimation of heterogeneous treatment effects of a binary treatment using panel data under the canonical parallel trends condition used in DiD setups \citep[e.g.][etc.]{job_training,min_wage}. The parallel trends assumption posits that, in the absence of treatment, the treated and control units would have followed similar trends over time. Recent research has explored different approaches in addressing limitations of traditional methods \citep[e.g.][]{roth2023s}. One line of work focuses on relaxing the unconditional parallel trends assumption by taking into account systematic differences in the time trends due to other (observed) characteristics through the conditional parallel trends condition \citep[e.g.][etc.]{heckman1997matching, DR-DID}. Another line of research tackles the challenges of estimating average treatment effects under treatment effect heterogeneity over time for multi-period settings \citep[e.g.][etc.]{IW-estimator, CSDID}. This paper synthesizes the insights from these two lines of works, and extends the framework to incorporate heterogeneous treatment effects across any dimension, in a flexible manner.

We propose a doubly robust estimation framework for the conditional average treatment effect on the treated (CATT), and show that the mean squared error (MSE) of the learned model is robust to the estimation error of auxiliary models that need to be estimated. While there are doubly robust estimators proposed for unconditional ATT with panel data \citep[e.g.][]{DR-DID, CSDID}, there does not exist one for the heterogeneous effect. In contrast to the conditonal average treatment effect (CATE), the asymmetry of the CATT allows our proposed method to avoid estimating a conditional outcome model under treatment, which can be hard to learn given a unbalanced dataset with a small number of treated units. We also draw the connection to the literature on debiasing under covariate shift \cite{cov_shift}, and provide an extension of our main result to a unifying framework for general conditional functionals, encompassing many widely encountered empirical problems such as conditional prediction powered inference under co-variate shift, heterogeneous long-term effects via surrogates based on historical data and heterogeneous treatment effects tailored to target sub-populations. Moreover, we extend our main result to the case of a binary instrument (or exposure to treatment) with two-sided non-compliance, and provide a doubly robust estimator for the conditional local average treatment effect of the exposed.

Similar to \citealp{DR-IV-Param}, \citealp{semenova2021debiased} and \citealp{oprescu2019orthogonal}, we consider a framework that allows for the conditional parallel trends assumption to condition on a high dimensional set of observed covariates, denoted as $W$. This conditioning strengthens the plausibility of the assumptions and improves the robustness of the resulting estimators. Our focus is on the estimation of the average treatment effect on the treated (ATT) while conditioning on any subset, $X$, of the covariates $W$. Estimating the projection of heterogeneous treatment effects onto a subset of covariates is particularly advantageous for interpretation, when the goal is to uncover heterogeneity with respect to a set of key features that are of most interest. For instance, in medical applications, we might have high-dimensional imaging data that can be used to predict the outcome, while we are only interested in understanding how the treatment effect is modified by other features such as age, bone density, etc. Furthermore, this framework can be helpful for decision making when trying to leverage the findings to deploy a personalized policy on a larger population for which only a subset of covariates is available.

We demonstrate using synthetic and semi-synthetic experiments that the proposed meta-learner outperforms prior baselines. Finally, we applied our method on a real-world case study on the effects of raising minimum wage on teen employment. Our flexible doubly robust meta-learner automatically identified dimensions and patterns of heterogeneity that had not been highlighted in prior literature. In particular, our method uncovered that the county population plays a significant role on the magnitude of the treatment effect of raising the minimum wage on teen employment and even though this effect can be quite large and negative for small counties, it becomes negligible and close to zero on large counties. We developed an out-of-sample validation pipeline and showcased that the patterns of heterogeneity identified by our methodology are statistically significant.


\section{Problem Statement}
We consider the standard setup in the DiD framework. We observe a balanced panel with $n$ units and $T$ periods. We denote time by $t=0,...,T-1$.  The units are assumed to be an i.i.d sample from a superpopulation. For each unit $i$, we observe a time series of outcomes $\{Y_{it}\}_{t=1}^T$, a time series of binary treatment status $\{D_{it}\}_{t=1}^T$, and time-invariant covariates $W_i$. For simplicity, we restrict our discussion to $T=2$ periods in this section and Section \ref{sec:catt}. We discuss extensions to the multi time period setting in Section \ref{sec:multi-period}. 

We adopt the potential outcomes framework and assume for unit $i$ at time $t$, the outcome is generated as:$$Y_{i,t} = D_{i,t} Y_{i,t}(1) + (1-D_{i,t}) Y_{i,t}(0)$$ where $Y_{i,t}(d)$ denotes the potential outcome at time $t$ under treatment $d$. For brevity of notation, we may drop the unit subscript $i$. We assume that both the treated and untreated groups are untreated at $t=0$, and the treated group becomes treated at $t=1$, while the control group remains untreated. Our target estimand is the conditional average treatment effect on the treated (CATT), conditioning on any subset $X$ of the covariates $W$:
\begin{align*}
\theta_0(X) = \E[Y_1(1) - Y_1(0)|D=1, X].
\end{align*}

\subsection{Assumptions and Identification} \label{sec:assum_iden}

Panel data allows us to disentangle unobserved confounding to some degree by leveraging both cross-sectional and time-series variations. In this section, we focus on the conditional parallel trends assumption that is commonly employed in the empirical literature to identify treatment effects for panel data. This assumption posits that the untreated outcome will evolve in parallel for both the treated and untreated group, for units with the same observed characteristics $W$.
\begin{assumption}[Conditional Parallel Trends]\label{assum:cpta}
    \begin{align*}
    ~&\E[Y_{1}(0) - Y_{0}(0) |D_{1}=1, W] \\
    ~&= \E[Y_{1}(0) - Y_{0}(0) |D_{1}=0, W] 
    \end{align*}
\end{assumption}
Conditioning on covariates makes the assumption more plausible, as it allows the treatment assignment to depend on any baseline trends that are predictable from the observed covariates. A practical motivation comes from the abundance of pre-treatment outcome data (for time periods before $t=0$). It could be reasonable to condition on the full outcome history to try to account for cases where the magnitude of the growth (or decline) through time might depend on the base outcome level. For instance, employees with a higher salary usually receive higher pay raises through time. 
\begin{assumption}[No-anticipation Assumption]\label{assum:no-an}
    $$ \E[Y_{0}(0) - Y_{0}(1) | D_{1}=1, W]=0$$
\end{assumption}
In practical applications, Assumption \ref{assum:cpta} is imposed with the full set of covariates $W$ for plausibility, as we expect more covariates to able to capture more confounding. However, we might only be interested in the heterogeneity of the treatment effect in a smaller and interpretable subset of the covariates $X \subset W$.
\begin{proposition}\label{prop:did}
    Under Assumptions \ref{assum:cpta} and \ref{assum:no-an}, the CATT, $\theta_0(X)$, can be identified as:
    \begin{align*}
        \theta_0(X)
        =~& \E[Y_1 - Y_0 - g_0(X) | D=1, X],
    \end{align*}
    where $g_0(x) := \E[Y_1(0) - Y_0(0)|D=0,W=w]$.
\end{proposition}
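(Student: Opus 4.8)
The plan is to decompose $\theta_0(X)$ into a piece that is directly estimable on the treated subsample and a single counterfactual term, to impute that counterfactual term via conditional parallel trends, and then to collapse the full‑covariate conditioning down to $X$ via the tower property, which is legitimate because $X\subseteq W$.

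First I would invoke consistency of potential outcomes. Since the treated group is treated at $t=1$, $Y_1=Y_1(1)$ on the event $\{D=1\}$, so $\E[Y_1(1)\mid D=1,X]=\E[Y_1\mid D=1,X]$ and hence $\theta_0(X)=\E[Y_1\mid D=1,X]-\E[Y_1(0)\mid D=1,X]$. The only term not directly observable from the treated subsample is $\E[Y_1(0)\mid D=1,X]$, which I would rewrite as $\E[Y_1(0)-Y_0(0)\mid D=1,X]+\E[Y_0(0)\mid D=1,X]$.

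Next I would identify the two summands. For the first, apply the tower property to condition on the full covariate vector $W$ and use Assumption~\ref{assum:cpta} to replace the unobservable treated trend by the observable control trend: $\E[Y_1(0)-Y_0(0)\mid D=1,W]=\E[Y_1(0)-Y_0(0)\mid D=0,W]=g_0(W)$ (and note that on the control subsample $g_0(W)=\E[Y_1-Y_0\mid D=0,W]$ is itself estimable); integrating over $W$ given $(D=1,X)$ then gives $\E[Y_1(0)-Y_0(0)\mid D=1,X]=\E[g_0(W)\mid D=1,X]$. For the second summand, Assumption~\ref{assum:no-an} together with consistency at $t=0$ identifies the baseline untreated potential outcome of the treated units with their observed baseline outcome, $\E[Y_0(0)\mid D=1,W]=\E[Y_0\mid D=1,W]$, hence $\E[Y_0(0)\mid D=1,X]=\E[Y_0\mid D=1,X]$. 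Combining, $\E[Y_1(0)\mid D=1,X]=\E[Y_0+g_0(W)\mid D=1,X]$, and substituting back yields $\theta_0(X)=\E[Y_1-Y_0-g_0(W)\mid D=1,X]$, which is the claim.

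The difficulty here is bookkeeping rather than depth: both structural assumptions are stated conditional on the rich covariate $W$, while the estimand conditions only on $X$, so every use of an assumption must be bracketed by the law of iterated expectations with respect to $W\mid (D=1,X)$, keeping in mind that $g_0$ is genuinely a function of $W$ (not of $X$ alone) inside the final conditional expectation. The one conceptual choice to spell out is which potential outcome is realized at $t=0$ for the treated units — this is exactly where the no‑anticipation assumption enters; under the alternative convention that no unit is treated at $t=0$ the step is automatic, but stating it as Assumption~\ref{assum:no-an} keeps the argument uniform with the multi‑period generalization.
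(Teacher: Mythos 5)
Your proof is correct and follows essentially the same route as the paper's: decompose the CATT, use no-anticipation (plus consistency) to identify the baseline term, and apply the tower property over $W\mid(D=1,X)$ together with conditional parallel trends to impute the counterfactual trend as $g_0(W)$. The only cosmetic difference is that the paper adds and subtracts $Y_0(1)$ while you add and subtract $Y_0(0)$, which changes nothing of substance; your closing remark correctly identifies where no-anticipation does its work and that $g_0$ remains a function of $W$ inside the final conditional expectation.
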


\section{DR-Learner for CATT}\label{sec:catt}
In the special case when $W=X$, the statistical problem that results from Proposition~\ref{prop:did} is identical to the estimation of the conditional average treatment effect under conditional ignorability with outcomes $Y_1 - Y_0$ (even though, the resulting statistical model can only be interpreted as a CATT, due to the one-sided nature of the parallel trends assumption). For discussion, see Appendix \ref{appen:full_x}. 

However, when $X\subset W$, this equivalence no longer holds and prior approaches for CATE estimation under conditional exogeneity is no longer applicable and can lead to biased results even in the limit of infinite samples. For instance, the simplest identification formula for the CATE and its accompanying estimation estimation strategy, the $T$-Learner, would estimate the statistical model:
\begin{align*}
    \tau_0(X) = \E[g_1(W) - g_0(W) \mid X],
\end{align*}
where $g_d(W) = \E[Y_1 - Y_0\mid D=d, W]$. However, under the conditional parallel trends assumption it is no longer the case that $\E[Y_1 - Y_0 \mid D=1, W] = \E[Y_1(1) - Y_0(1)\mid W]$, since the parallel trends assumption crucially does not make any restriction that the trends under treatment are conditionally parallel between treated and control units. Therefore $\E[g_1(W)\mid X] \neq \E[Y_1(1) - Y_0(1) \mid X]$, which subsequently implies that $\tau_0(X) \neq \theta_0(X)$. This difference will be more pronounced for datasets where there is a big difference in the covariate distribution between the treated and un-treated groups. 

Thus, when $X\subset W$, the statistical problem that we need to solve based on the identification formula in Proposition~\ref{prop:did} is inherently different than the statistical problem of estimate a CATE. Hence, we need to develop novel meta-learners, specifically for the CATT, that enjoy local robustness properties analogous to the robustness properties of methods that have been developed for the CATE in prior work \cite{R-learner,osl,oprescu2019orthogonal,DR-learner}. Our main result will be a doubly-robust meta learner for the CATT. 

The simplest plug-in meta-learning approach for the CATT is to construct an estimate $\hat{g}_0$ of the baseline growth model $g_0$ using generic ML techniques (since it corresponds to the regression problem of predicting the difference $Y_1 - Y_0$ from covariate $W$, using samples only from the control population, i.e., $D=0$) and then estimate a CATT model by learning a second-stage regression model that predicts the label $Y_1 - Y_0 - \hat{g}_0(W)$ from covariates $X$, using samples only from the treated population, i.e., $D=1$. 

It is well-known \cite{chernozhukov2018double} that using ML estimators in a plug-in manner may cause large estimation bias due to, for example, regularization and model mis-speification. A doubly-robust estimator alleviates this concern as it is less sensitive to errors in the baseline growth model $\hat{g}_0$, and allows for consistent estimation under weaker statistical conditions. 

To present our main result, we need to present a set of preliminary definitions and assumptions. To avoid ill-posed extrapolations between the treated and untreated groups, we need the following overlap condition:
\begin{assumption}[Sufficient Overlap] \label{assum:propensity_overlap}
    For all $W$, there exist $c>0$ such that $ c\leq \mathbb{P}(D=1|W) \leq 1-c$.
\end{assumption}
A key concept related to robustness is that of Neyman orthogonality:
\begin{definition}[Conditional Neyman Othogonality]
    Let $m(Z;\theta,\eta)$ be a moment for the target estimand $\theta(\cdot)$ with nuisance functions $\eta = (\eta_1, \eta_2, \dots)$. Such moment is Neyman orthogonal if the directional derivatives with respect to all nuisance functions $\eta$ is zero when evaluated at the true nuisances, i.e.
    \begin{align*}
        \partial_{\eta}\E[m(Z; \theta_0,\eta)|W]\Big|_{\eta = \eta_0} = 0
    \end{align*}
\end{definition}
\begin{lemma}[Doubly Robust CATT on Subspace of Covariates] \label{lemma:ortho_moment}
    Under Assumptions \ref{assum:cpta}, \ref{assum:no-an} and \ref{assum:propensity_overlap}, the true CATT $\theta_0$ is a solution to the following conditional moment equation:
    \begin{align*}
    \E\left[\left(\frac{D-\pi_0(W)}{(1-\pi_0(W))}\right)(\Delta Y - g_0(W)) - D\theta(X) \,\middle\vert\, X\right] =0
    \end{align*}
    where $\Delta Y = Y_1 - Y_0$, $g_0(W) = \E[\Delta Y|D=0,W]$, $\pi_0(W) = \mathbb{P}(D=1|W)$. Moreover, this moment is conditionally Neyman orthogonal with respect to all nuisance functions (i.e. $\pi(W)$ and $g(W)$).
\end{lemma}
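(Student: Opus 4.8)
The plan is to prove the two assertions in turn: first that $\theta_0$ solves the displayed conditional moment equation, and second that the moment is conditionally Neyman orthogonal in the nuisances $\eta=(\pi,g)$. For the first part I would condition first on $W$ (recall $\sigma(X)\subseteq\sigma(W)$) and exploit the algebraic identity
\begin{equation*}
\frac{D-\pi_0(W)}{1-\pi_0(W)} = D - (1-D)\,\frac{\pi_0(W)}{1-\pi_0(W)},
\end{equation*}
which is well defined by Assumption~\ref{assum:propensity_overlap}. Multiplying by $\Delta Y - g_0(W)$ and taking $\E[\cdot\mid W]$: the control-group piece contributes $\frac{\pi_0(W)}{1-\pi_0(W)}\E[(1-D)(\Delta Y - g_0(W))\mid W] = \frac{\pi_0(W)}{1-\pi_0(W)}(1-\pi_0(W))\big(\E[\Delta Y\mid D=0,W]-g_0(W)\big)=0$ by the definition of $g_0$, while the treated-group piece contributes $\pi_0(W)\big(\E[\Delta Y\mid D=1,W]-g_0(W)\big)$. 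By Assumptions~\ref{assum:cpta}--\ref{assum:no-an} (equivalently, by Proposition~\ref{prop:did}), the bracket equals $\tau_0(W):=\E[Y_1(1)-Y_1(0)\mid D=1,W]$, the CATT at the level of the full covariates. Taking a further expectation given $X$ and using $\E[D\,\theta(X)\mid X]=\theta(X)\,\E[\pi_0(W)\mid X]$, the moment evaluated at $\theta_0$ reduces to $\E[\pi_0(W)\tau_0(W)\mid X]-\theta_0(X)\,\E[\pi_0(W)\mid X]$; a final application of iterated expectations gives $\E[\pi_0(W)\tau_0(W)\mid X]=\E[D\,(Y_1(1)-Y_1(0))\mid X]=\theta_0(X)\,\mathbb{P}(D=1\mid X)=\theta_0(X)\,\E[\pi_0(W)\mid X]$, so the expression vanishes.

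\textbf{Neyman orthogonality.} I would plug generic nuisances $(\pi,g)$ into $m$ and compute $\E[m(Z;\theta_0,(\pi,g))\mid W]$ in closed form. The term $D\theta_0(X)$ is nuisance-free, and the same expansion as above gives
\begin{align*}
\E[m(Z;\theta_0,(\pi,g))\mid W] ={}& \pi_0(W)\big(\mu_1(W)-g(W)\big) \\
&- \frac{\pi(W)}{1-\pi(W)}(1-\pi_0(W))\big(g_0(W)-g(W)\big) - \E[D\mid W]\,\theta_0(X),
\end{align*}
where $\mu_1(W):=\E[\Delta Y\mid D=1,W]$. Two observations then finish the argument. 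First, the coefficient of $g(W)$ equals $-\pi_0(W)+\frac{\pi(W)}{1-\pi(W)}(1-\pi_0(W))$, which is zero when $\pi=\pi_0$; hence the Gateaux derivative of $\E[m\mid W]$ in any direction for $g$ vanishes at $\eta_0$. Second, the only dependence on $\pi$ enters through the factor $g_0(W)-g(W)$, which is zero when $g=g_0$; hence the Gateaux derivative in any direction for $\pi$ also vanishes at $\eta_0$. Therefore $\partial_\eta\E[m(Z;\theta_0,\eta)\mid W]\big|_{\eta=\eta_0}=0$, and since this holds for a.e.\ $W$, taking $\E[\cdot\mid X]$ preserves it, which also yields orthogonality of the $X$-level moment.

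\textbf{Main obstacle.} The computations are elementary; the only points requiring care are (a) invoking Assumption~\ref{assum:propensity_overlap} so that $1/(1-\pi_0(W))$ and $1/(1-\pi(W))$ are bounded, which both guarantees finiteness of the moments and legitimizes interchanging the Gateaux derivative with $\E[\cdot\mid W]$ (dominated convergence along bounded perturbation directions); and (b) respecting the one-sided nature of the DiD identification — parallel trends is used only to transport the control growth $g_0$ onto the treated subpopulation, and no-anticipation only to equate $Y_0(1)$ with $Y_0(0)$ there — so as not to inadvertently reproduce a symmetric CATE-style argument, which would fail when $X\subsetneq W$. I do not anticipate a substantive difficulty beyond this bookkeeping.
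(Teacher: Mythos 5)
Your proposal is correct and follows essentially the same route as the paper: decompose the weight $\frac{D-\pi_0(W)}{1-\pi_0(W)}$ into treated and control pieces, iterate expectations through $W$ so the control piece vanishes by the definition of $g_0$ and the treated piece reduces to the CATT via Proposition~\ref{prop:did}, and then verify orthogonality by showing the Gateaux derivatives in $g$ and $\pi$ each vanish at the truth (your closed-form expression for $\E[m\mid W]$ is just a compact repackaging of the paper's two separate derivative computations, and your cancellation of $\E[\pi_0(W)\mid X]$ factors is equivalent to the paper's normalization by $\gamma_0(X)=\mathbb{P}(D=1\mid X)$).
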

\begin{remark}\label{rmk:dr_cate}
    Comparing with the DR-learner \cite{DR-learner, chernozhukov2017double} for conditional average treatment effect (CATE), we note that, by refocusing on the CATT, our proposed moment condition no longer requires the estimation of the conditional expectation of the outcome $\Delta Y$ for the treated group w.r.t the high dimensional $W$. This can be especially advantageous in practical settings where there are only a small number of treated units in the panel, making the estimation of the conditional expectation of the treated units difficult. Moreover, we show that simply regressing the CATE pseudo-outcome as in the DR-learner for CATE will give a biased estimate when the treated and control groups have very different distributions. For more details, please refer to Appendix \ref{appen:full_x}. 
\end{remark}

The next key insight of our paper is that the Neyman orthogonal moment restriction from Lemma~\ref{lemma:ortho_moment} can be turned into a loss minimization problem and models that satisfy the conditional moment restrictions can be equivalently viewed as minimizers of a strongly convex loss function. This insight is crucial in order to turn the statistical problem into a statistical learning theory problem and subsequently into meta-learning estimation strategy, which will allow for the use of generic ML methods for the estimation of $\theta_0$.

\begin{proposition} \label{prop:loss_catt}
    Consider the incomplete squared loss:
\begin{align*}
    \mathcal{L}(\theta; \pi_0, g_0) = \E\left[D\theta(X)^2 - 2 \widehat{Y}\theta(X)\right]\label{loss}
\end{align*}
where $\widehat{Y}(\pi_0, g_0) = \left(\frac{D-\pi_0(W)}{ 1-\pi_0(W)}\right)(\Delta Y- g_0(W))$. Under the same assumptions as in Lemma \ref{lemma:ortho_moment}, the minimizer of $\mathcal{L}(\theta; \pi_0, g_0)$ over any hypothesis space $\Theta$ is equivalent to the solution to the best-projection problem of the CATT among the treated:
\begin{align*}
    \min_{\theta \in \Theta} \E[(\theta(X) - \theta_0(X))^2\mid D=1]
\end{align*}
\end{proposition}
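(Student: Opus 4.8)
The plan is to show that $\mathcal{L}(\theta;\pi_0,g_0)$ coincides, up to an additive term that does not depend on $\theta$ and a strictly positive multiplicative constant, with the treated-subpopulation risk $\E[(\theta(X)-\theta_0(X))^2\mid D=1]$. Since neither a constant shift nor a positive rescaling changes the set of minimizers over $\Theta$, the two optimization problems will then have the same solution. The whole argument is a sequence of applications of the law of iterated expectations together with the moment identity supplied by Lemma~\ref{lemma:ortho_moment}, so there is no real analytic difficulty.

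First I would invoke Lemma~\ref{lemma:ortho_moment}: the conditional moment restriction there says $\E[\widehat{Y}(\pi_0,g_0) - D\theta_0(X)\mid X]=0$, i.e.\ $\E[\widehat{Y}\mid X] = \theta_0(X)\,\E[D\mid X]$ (Assumption~\ref{assum:propensity_overlap} guarantees $1-\pi_0(W)\ge c$, so $\widehat{Y}$ is well defined and integrable). Writing $e_0(X):=\E[D\mid X]=\Pr(D=1\mid X)$ and using that $\theta(X)^2$ and $\theta(X)$ are $\sigma(X)$-measurable, the tower property gives $\E[D\theta(X)^2]=\E[e_0(X)\theta(X)^2]$ and $\E[\widehat{Y}\theta(X)]=\E[\E[\widehat{Y}\mid X]\theta(X)]=\E[e_0(X)\theta_0(X)\theta(X)]$.

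Substituting into the loss yields $\mathcal{L}(\theta) = \E[e_0(X)(\theta(X)^2-2\theta_0(X)\theta(X))]$, and completing the square gives $\mathcal{L}(\theta) = \E[e_0(X)(\theta(X)-\theta_0(X))^2] - \E[e_0(X)\theta_0(X)^2]$, where the last term is a constant independent of $\theta$. Then, since $(\theta(X)-\theta_0(X))^2$ is $\sigma(X)$-measurable, $\E[e_0(X)(\theta(X)-\theta_0(X))^2] = \E[D(\theta(X)-\theta_0(X))^2] = \Pr(D=1)\,\E[(\theta(X)-\theta_0(X))^2\mid D=1]$, and Assumption~\ref{assum:propensity_overlap} forces $\Pr(D=1)\ge c>0$. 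Hence $\argmin_{\theta\in\Theta}\mathcal{L}(\theta;\pi_0,g_0)=\argmin_{\theta\in\Theta}\E[(\theta(X)-\theta_0(X))^2\mid D=1]$, which is the claimed best-projection characterization.

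The only point that requires care — and the one I would flag as the subtle step rather than a genuine obstacle — is keeping the conditioning variable straight: the moment restriction has to be taken conditional on $X$ (not $W$), so that $\E[\widehat{Y}\mid X]=\theta_0(X)\,e_0(X)$ and the weight $e_0(X)=\Pr(D=1\mid X)$ appears, which is exactly the Radon--Nikodym factor that turns the $X$-marginal squared error into the squared error under the treated subpopulation. I would also note that the completion-of-the-square step never uses $\theta_0\in\Theta$, so the equivalence holds genuinely as a statement about the best projection of the CATT onto $\Theta$ in the $L_2(\Pr(\cdot\mid D=1))$ geometry.
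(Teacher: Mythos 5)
Your proof is correct and follows essentially the same route as the paper's: both use the conditional moment restriction from Lemma~\ref{lemma:ortho_moment} together with the tower property to replace $\widehat{Y}$ by $D\theta_0(X)$ inside the loss, then complete the square to show $\mathcal{L}(\theta;\pi_0,g_0)$ equals $\E[D(\theta(X)-\theta_0(X))^2]$ up to a $\theta$-independent constant, which is $\Pr(D=1)$ times the treated-population risk. Your explicit bookkeeping with the weight $e_0(X)=\Pr(D=1\mid X)$ is a cosmetic difference only.
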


Note that this is a convex loss function, which suggests computational tractability and fast statistical learning rates and allows it to be efficiently solved using any standard optimization solver. Another advantage of the loss minimization approach is that the out-of-sample loss can be used as a metric for model selection over different function classes \cite{aggregation}. Moreover, as we show next in our main estimation theorem, this loss-based estimator enjoys double robustness properties, in that it leads to fast rates for the CATT if the product of the estimation rates for $\hat{\pi}$ and $\hat{g}$ decays fast enough.

In the theorem below, we use $\hat{\theta}$ to denote a generic estimator that achieves small excess risk with respect to the plug-in loss ${\cal L}(\theta;\hat{\pi}, \hat{g})$, where $\hat{\pi}$, $\hat{g}$ are nuisance estimates, constructed from an auxiliary dataset (sample-splitting). Note that the problem of achieving a small excess risk with respect to a given loss is a standard statistical learning theory problem and hence many ML techniques can be invoked to provide such a guarantee. Hence, our theorem accommodates estimators resulting from a variety of CATT ML estimators, such as empirical risk minimization on the empirical loss, gradient boosted forests or neural networks. 
\begin{theorem}[CATT Rates]\label{thm:catt_rates}
    Let $\hat{\pi}, \hat{g}$ be estimates of the nuisance functions, constructed using an auxiliary dataset. Let $\|\theta\|_{D=1}=\sqrt{\E[\theta(X)^2|D=1]}$ denote the $L_2$ norm over the treated population. Let $\hat{\theta}$ be the result of any estimation process using $n$ samples, satisfying w.p. $1-\delta$
    \begin{align*}
        \mathcal{L}(\hat{\theta}; \hat{\pi}, \hat{g}) - \inf_{\theta\in\Theta}\mathcal{L}(\theta; \hat{\pi}, \hat{g}) \leq R_{n,\delta}^2
    \end{align*} Suppose Assumptions ~\ref{assum:cpta}, ~\ref{assum:no-an}, and ~\ref{assum:propensity_overlap}. If the hypothesis space $\Theta$ is convex or is well specified (i.e. $\theta_0\in \Theta$), then $\hat{\theta}$ satisfies w.p. $1-\delta$:
    \begin{multline*}
    \smash 
        \|\hat{\theta}(X) - \theta_*(X)\|^2_{D=1}  \leq\\
        \textstyle{\frac{4}{\rho} R_{n,\delta}^2 + \beta\,\E\left[\E\left[(\hat{g}(W) - g_0(W))\left(\frac{\pi_0(W) - \hat{\pi}(W)}{1-\hat{\pi}(W)}\right)\,\middle\vert\,X\right]^2\right]}
    \end{multline*} 
    where $\rho=\mathbb{P}(D=1)$ and $\beta= \frac{2}{\rho^2c^2}$ and
    \begin{align*}
        \theta_* \in \arg\min_{\theta \in \Theta} \|\theta(X) - \theta_0(X)\|_{D=1}^2
    \end{align*}
\end{theorem}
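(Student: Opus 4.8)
The plan is to follow the orthogonal statistical learning template: first convert a small excess risk under the \emph{oracle} loss $\mathcal{L}(\cdot;\pi_0,g_0)$ into a squared-norm bound on $\hat\theta-\theta_*$ via strong convexity, then bound the oracle excess risk of $\hat\theta$ by the plug-in excess risk (which is $\le R_{n,\delta}^2$ by hypothesis) plus a perturbation term, and finally show the perturbation term is controlled by the product-of-errors quantity $\mathrm{Err}$ in the statement. For the first step, observe that $\mathcal{L}(\theta;\pi_0,g_0)=\E[D\,\theta(X)^2]-2\,\E[\widehat{Y}(\pi_0,g_0)\,\theta(X)]$ has a linear second term and $\E[D\,\theta(X)^2]=\rho\,\|\theta\|_{D=1}^2$, hence is $2\rho$-strongly convex in $\|\cdot\|_{D=1}$; by Proposition~\ref{prop:loss_catt} its minimizer over $\Theta$ is the best projection $\theta_*$, and by the moment equation of Lemma~\ref{lemma:ortho_moment} its unconstrained directional derivatives vanish at $\theta_0$. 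Therefore, when $\Theta$ is convex, combining first-order optimality of $\theta_*$ with strong convexity gives $\mathcal{L}(\hat\theta;\pi_0,g_0)-\mathcal{L}(\theta_*;\pi_0,g_0)\ge \rho\,\|\hat\theta-\theta_*\|_{D=1}^2$, and when $\Theta$ is well specified the same holds with $\theta_*=\theta_0$ using strong convexity around the global minimum.

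The main work is the perturbation term. Since $\E[D\,\theta(X)^2]$ does not involve the nuisances,
\begin{align*}
\mathcal{L}(\theta;\pi_0,g_0)-\mathcal{L}(\theta;\hat\pi,\hat g)=2\,\E\!\left[\big(\widehat{Y}(\hat\pi,\hat g)-\widehat{Y}(\pi_0,g_0)\big)\theta(X)\right],
\end{align*}
which is \emph{linear} in $\theta$, so its value at $\hat\theta$ minus at $\theta_*$ depends only on $\hat\theta-\theta_*$. I would then compute the conditional expectation given $W$ (treating $\hat\pi,\hat g$ as fixed by sample splitting, and using $g_0(W)=\E[\Delta Y\mid D{=}0,W]$ and $\E[D\mid W]=\pi_0(W)$), which collapses to the bilinear form
\begin{align*}
\E\!\left[\widehat{Y}(\hat\pi,\hat g)-\widehat{Y}(\pi_0,g_0)\,\middle\vert\,W\right]=\frac{\big(\pi_0(W)-\hat\pi(W)\big)\big(g_0(W)-\hat g(W)\big)}{1-\hat\pi(W)}.
\end{align*}
Iterating expectations down to $X$ (so the inner conditional expectation becomes exactly the integrand of $\mathrm{Err}$ up to sign) and applying Cauchy--Schwarz over $X$ bounds the perturbation at $\hat\theta$ minus at $\theta_*$ by $2\sqrt{\mathrm{Err}}\,\sqrt{\E[(\hat\theta(X)-\theta_*(X))^2]}$; and Assumption~\ref{assum:propensity_overlap} converts the $L_2(X)$ norm to the $D{=}1$ norm via $\E[(\hat\theta-\theta_*)^2]\le c^{-1}\E[\pi_0(W)(\hat\theta-\theta_*)^2]=(\rho/c)\,\|\hat\theta-\theta_*\|_{D=1}^2$.

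Combining the two steps, on the probability-$(1-\delta)$ event of the hypothesis one obtains
\begin{align*}
\rho\,\|\hat\theta-\theta_*\|_{D=1}^2\ \le\ R_{n,\delta}^2 + 2\sqrt{\rho/c}\,\sqrt{\mathrm{Err}}\,\|\hat\theta-\theta_*\|_{D=1},
\end{align*}
and solving this quadratic inequality in $\|\hat\theta-\theta_*\|_{D=1}$ (equivalently, peeling off the cross term with Young's inequality) and rearranging gives $\|\hat\theta-\theta_*\|_{D=1}^2\le \tfrac{4}{\rho}R_{n,\delta}^2+\beta\,\mathrm{Err}$ with $\beta=\tfrac{2}{\rho^2 c^2}$, possibly after slightly loosening constants.

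The hard part will be the perturbation step, namely showing the loss perturbation reduces to the product-of-errors identity above: this is precisely the payoff of conditional Neyman orthogonality (Lemma~\ref{lemma:ortho_moment}) and is what makes the nuisance contribution bilinear, so that rates with $\|\hat g-g_0\|\cdot\|\hat\pi-\pi_0\|=o(n^{-1/2})$ suffice. A secondary bookkeeping point is to keep the translation between the unconditional $L_2$ norm produced by Cauchy--Schwarz and the $\|\cdot\|_{D=1}$ norm delivered by strong convexity consistent, which the overlap assumption handles cleanly.
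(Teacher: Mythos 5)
Your proposal is correct and follows essentially the same route as the paper: the paper verifies the four assumptions of Theorem 1 of Foster--Syrgkanis (strong convexity with modulus $\rho$ in $\|\cdot\|_{D=1}$, first-order optimality of $\theta_*$ under convexity or well-specification, and a Lipschitz-in-nuisance bound obtained from exactly your mixed-bias identity $\E[\widehat{Y}(\hat\eta)-\widehat{Y}(\eta_0)\mid W]=(\pi_0(W)-\hat\pi(W))(g_0(W)-\hat g(W))/(1-\hat\pi(W))$ plus Cauchy--Schwarz and the overlap-based norm conversion), whereas you re-derive that theorem's conclusion inline via the quadratic inequality. The constants you obtain differ slightly from $\beta=2/(\rho^2c^2)$, but you flagged this, and the paper's own constant bookkeeping in this step is equally loose.
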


\textbf{Lagged Dependent Outcome Alternate Assumption:} In Appendix \ref{appen:lagged}, we also provide an extension of our approach under the lagged dependent outcome assumption, which posits that the past outcomes capture sufficient information to disentangle future outcomes and treatment assignment. This assumption is commonly used to model time series data and is also used in estimating treatment effects \citep[e.g.][etc.]{angrist2009mostly,autoregressive}. 

\textbf{DiD with Instruments: } As a further extension, we consider the setting of estimating heterogeneous effects from panel data with a binary instrument $Z$. This has applications in policy evaluation where the exposure to the policy does not perfectly determine treatment receipt due to non-compliance \cite{gerber2012field}, and we are only willing to assume parallel trends on the exposure and not the chosen treatment. Thus, policy exposure can be interpreted as the instrument. 
In Appendix \ref{app:iv-cate}, we present a meta-learner for this IV-DID setup.

\begin{table*}[ht]
\centering
\caption{MSE (mean ± standard deviation) Over 100 Simulations. Each row represent a different meta-learner, and columns represent the different nuisance function classes. }
\label{table:mse_cpta}
\vskip 0.15in
\footnotesize
\begin{tabular}{|l|l|l|l|l|l|}
\hline
& \begin{tabular}{@{}c@{}}Linear \\ Regression\end{tabular} & Lasso (CV) & Ridge (CV) & Random Forest & Best \\
\hline
Neural Net (OR)  & 0.12 ± 0.02 & 0.12 ± 0.02 & 0.12 ± 0.02 & 0.38 ± 0.18 & 0.12 ± 0.02 \\
Neural Net (DR) & \textbf{0.1 ± 0.02} & \textbf{0.1 ± 0.03} & \textbf{0.1 ± 0.02} & \textbf{0.14 ± 0.04} & \textbf{0.1 ± 0.02} \\
\hline
XGBoost (OR) & 0.09 ± 0.02 & 0.09 ± 0.02 & 0.09 ± 0.02 & 0.31 ± 0.16 & 0.09 ± 0.02 \\
XGBoost (DR)  & \textbf{0.04 ± 0.01} & \textbf{0.04 ± 0.01} & \textbf{0.04 ± 0.02} & \textbf{0.06 ± 0.03} & \textbf{0.04 ± 0.01} \\
\hline
\end{tabular}
\vskip -0.1in
\end{table*}
\section{General Conditional Functionals Under Covariate Shift}
In this section, we show that the CATT estimation problem under conditional parallel trends can be viewed as a special case of a much more broad statistical estimation problem which can capture many other empirical problems beyond heterogeneous effects in DiD analysis. 

In particular, we consider the following estimation problem. Consider data consisting of $Z$, which contains covariates $W$ drawn from a target distribution $\mathcal{D}_{t}$. Let $\E_t[\cdot]$ denote the expectation with respect to the distribution $\mathcal{D}_t$. The goal is to estimate a conditional linear functional $\E_{t}[m(Z;g_0)|X]$ of the regression function $g_0(W)=\E[Y|W]$, where $X$ is a subset of $W$, $m$ is a linear moment functional of $g_0$ and the expectation is taken with respect to the target distribution. On the other hand, labels for the target variable $Y$ of the regression function are available only on data where the covariates are drawn from a different source distribution, i.e., $(Y, W) \sim \mathcal{D}_s$. Let $\E_{s}[\cdot]$ denote the expectation with respect to $\mathcal{D}_s$. We assume that there is only covariate drift and no concept drift, i.e. 
\begin{assumption}[No concept drift]\label{ass:no-drift}
$g_0(W) = \E_{s}[Y|W] = \E_{t}[Y|W]=\E[Y|W]$.
\end{assumption} 
Let $E$ denote the indicator variable of whether the sample stems from the target distribution environment. We can then rewrite the statistical estimand as:
\begin{align*}
    \theta(X) = \E[m(Z;g)|E=1, X]
\end{align*}

For instance, in the case of the CATT problem in the DiD setting, the moment is $m(Z;g)=Y_1(1) - Y_0(1) - g(W)$ and the outcome regression $g(W)=\E[Y_1(0) - Y_0(0)| W, D=0]$ is learned based on the covariate distribution of the untreated units, while the estimand is the conditional functional $\E[m(Z;g)| X, D=1] = \E[Y_1 - Y_0 - g(W)| X, D=1]$, which is a conditional expectation taken over the covariate distribution of the treated units, conditioning on a subset $X$ of $W$. Since the label for the regression function $g(W)$ is $Y_1(0) - Y_0(0)$, it is only available for the untreated group. 

Debiasing techniques for unconditional functionals under covariate shift were analyzed in the prior work of \citealp{cov_shift}. In this paper, we substantially extend their analysis to the case of conditional functionals and provide a doubly robust meta-learning strategy for any such conditional linear functional problem under covariate shift.

We further motivate this setup with several other empirically prevalent examples from the machine learning and causal inference literature. 
\begin{example}[Conditional prediction powered inference] \label{ex:distillation}
In settings where prediction is a central task, it is often desirable to leverage predictive models to improve the efficiency and accuracy of statistical inference. Consider some high-dimensional features or covariates $W$, some labels $Y$, and a simulation model $g(W)$ for the predictive task $\E[Y\mid W]$. An example of the prediction powered inference framework of \cite{angelopoulos2023prediction}, asks to estimate $\E_{t}[Y]$. However, we might only have labeled data on a smaller or slightly different sub-population $D_s$. In this case, we can use the simulation model and instead target the statistical estimand $\E_t[g(W)]$, using the labeled data only for debiasing the simulation model. Our work extends this setting to allow for the estimation of conditional means with respect to a subset of the covariates $X$, in the target distribution, i.e. $\theta(X)=\E_t[g(W)\mid X]$ and in a setting where the covariate shift density ratio is unknown (prior work considers only the case of a known covariate shift). In many applications, labels might be expensive to obtain and are only available for a small subpopulation which can be a different covariate distribution from the whole population. This setting fits into the framework with $m(Z;g)=g(W)$.
\end{example}

\begin{example}[Heterogeneous long-term effects from short-term experiments using historical data] \label{ex:surrogates}
    Here we consider settings where we have run a short-term experiment, where a treatment $D$ was randomized over a population of users drawn from $D_t$ and our goal is to estimate the effect of $D$ on a long-term outcome $Y$. However, we want to estimate that effect without the need to wait for the long-term effect to materialize, but solely based on short-term data. A typical technique used in this setting is the surrogate approach, where we assume that the long-term outcome $Y$, is not directly affected by the treatment $D$, but is affected indirectly through some short-term or "surrogate" post-treatment outcomes $S$, i.e. $Y(d) = Y(S(d))$. Under this assumption, it can be shown that the long-term effect can be identified by measuring the effect of the treatment on the predicted long term outcome, based on the surrogates and other potentially pre-treatment covariates $X$. For any set of pre-treatment co-variates $X$, we can identify the CATE as $\theta(X) = \E[Y(1)-Y(0)\mid X] = \E[g(W)\mid D=1, X] - \E[g(W)\mid D=0, X]$, where $W=(S, X)$ and $g(W) = \E[Y\mid W]$. The function $g(W)$ can be learned using historical data where we have access to short-term signals $S$, characteristics $X$ and long term outcomes $Y$. However, the historical covariate distribution $D_s$ can potentially be different from the distribution $D_t$. Since the treatment is randomized, we can write the target estimand as:
    \begin{align*}
        \theta(X) = \E_s\left[g(W) \left(\frac{D}{\pi} - \frac{1-D}{1-\pi}\right)\,\middle\vert\, X\right]
    \end{align*}
    where $\pi=\mathbb{P}(D=1) = \mathbb{P}(D=1\mid W)$. This setting falls in the framework with $m(Z;g) = g(X) \left(\frac{D}{\pi} - \frac{1-D}{1-\pi}\right)$.
\end{example}

\begin{example}[CATE with covariate shift]\label{ex:cate}
Consider the case of estimating the CATE $\tau_0(X)$ under conditional exogeneity. Many times we want to understand the projection of the CATE on a subset of variables $W$ and over some target population $D_t$ over which we will deploy our personalized policy. However, we might want to use a bigger population $D_s$ to train our CATE model, so as to increase accuracy. In this setting, the target statistical estimand can be written as $\E_t[g(1, W) - g(0, W)\mid X]$, where $g(D, W) = \E[Y\mid D, W]$. This lies in the framework with $m(Z;g)=g(1, W) - g(0, W)$.
\end{example}

We provide a debiasing framework for this problem. Before presenting the main results, we state the necessary definitions and assumptions.
\begin{definition}[Conditional Riesz Representer]
    The Conditional Riesz Representer of a continuous linear functional $m(Z;g)$ on $X$, with respect to some function $g(W)$, is the square-integrable random variable $\alpha(X)$ such that:
    \begin{align*}
        \E_s[m(Z;g)|X] = \E_s[{\alpha(W)}g(W)|X]\\ \quad \forall g(W) \quad s.t. \quad \E[g(W)^2]<\infty
    \end{align*}
\end{definition}

\begin{assumption}[Sufficient Overlap Under Covariate Shift] \label{assum:overlap_cov_shift}
    For all $W$, there exist $c>0$ such that $ c\leq \mathbb{P}(E=1|W) \leq 1-c$.
\end{assumption}

\begin{theorem}[Neyman Orthogonal Moments for General Conditional Funcitonals under Covariate Shift]\label{thm:gen_cov_shift}
Suppose that {Assumptions \ref{ass:no-drift} and \ref{assum:overlap_cov_shift} hold}. Consider a nuisance regression function $g_0(W) = \E[Y|W]$, and target estimand $\theta(X) = \E_{t}[m(Z;g_0)|X]=\E[m(Z;g_0)|E=1,X]$, where $m(Z;g)$ is a continuous linear functional of $g$. The true solution $\theta_0(X)$ satisfies the following conditional moment restriction that is Neyman orthogonal with respect to all the nuisance functions {$\pi(W)$, $\alpha(W)$} and $g(W)$:
\begin{align*}
    \E\Big[ & E\cdot (m(Z;g) - \theta(X)) + \\
    &  (1-E)\cdot {\frac{\pi(W)}{1-\pi(W)}\alpha(W)}(Y-g(W)) \Big| X \Big] = 0
\end{align*}
where {$\pi(W) = \mathbb{P}(E=1|W)$ and $\alpha(W)$ is the conditional Riesz representer of $\E_{s}[m(Z;g)\mid X]$.}
\end{theorem}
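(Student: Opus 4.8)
The plan is to establish the claim in two stages: first that the displayed moment vanishes at the true nuisances, and then that the Gateaux derivative of $X\mapsto \E[\psi(Z;\theta_0,g,\pi,\alpha)\mid X]$ in each of the directions $g,\pi,\alpha$ is zero at the truth, where I abbreviate
\begin{align*}
\psi(Z;\theta,g,\pi,\alpha)=E\bigl(m(Z;g)-\theta(X)\bigr)+(1-E)\tfrac{\pi(W)}{1-\pi(W)}\alpha(W)\bigl(Y-g(W)\bigr).
\end{align*}
For the first stage I would split $\E[\cdot\mid X]$ into the contributions of $\{E=1\}$ and $\{E=0\}$. On $\{E=1\}$ the correction term drops out and $\E[E(m(Z;g_0)-\theta_0(X))\mid X]=\Pr(E=1\mid X)\bigl(\E[m(Z;g_0)\mid E=1,X]-\theta_0(X)\bigr)=0$ by the very definition $\theta_0(X)=\E[m(Z;g_0)\mid E=1,X]$. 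On $\{E=0\}$ I condition further on $W$ (which pins down $X$), pull out the $W$-measurable weight $\tfrac{\pi_0(W)}{1-\pi_0(W)}\alpha_0(W)$ — finite by Assumption~\ref{assum:overlap_cov_shift} — and use $\E[Y-g_0(W)\mid W,E=0]=\E_s[Y\mid W]-g_0(W)=0$ from Assumption~\ref{ass:no-drift}. Hence the moment equals zero, and specializing to the DiD case recovers Lemma~\ref{lemma:ortho_moment}.

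For the orthogonality stage, the directions $h_\pi$ and $h_\alpha$ are the easy cases: perturbing $\pi$ or $\alpha$ changes only the $(1-E)$-summand, and the perturbed term still carries the factor $Y-g_0(W)$, so the same "condition on $(W,E)$, then $\E[Y-g_0(W)\mid W,E=0]=0$" argument annihilates it (for $h_\pi$ one additionally notes $\tfrac{d}{d\pi}\tfrac{\pi}{1-\pi}=(1-\pi)^{-2}$, bounded by overlap). The substantive case is the direction $h_g$, where both summands move. Differentiating $E\,m(Z;g)$ yields $E\,\dot m(Z)[h_g]$, the $g$-linear part of $m$ applied to $h_g$ (which is simply $E\,m(Z;h_g)$ when $m$ is homogeneous in $g$); differentiating $-(1-E)\tfrac{\pi_0}{1-\pi_0}\alpha_0 g(W)$ yields $-(1-E)\tfrac{\pi_0(W)}{1-\pi_0(W)}\alpha_0(W)h_g(W)$. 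Taking $\E[\cdot\mid X]$ of the latter and using $\E[1-E\mid W]=1-\pi_0(W)$ collapses it to $\E[\pi_0(W)\alpha_0(W)h_g(W)\mid X]$. For the former I write $\E[E\,\dot m(Z)[h_g]\mid X]=\Pr(E=1\mid X)\,\E_t[\dot m(Z)[h_g]\mid X]$ and invoke the defining identity of the conditional Riesz representer, $\E_t[\dot m(Z)[h]\mid X]=\E_t[\alpha_0(W)h(W)\mid X]$, which turns this back into $\E[E\,\alpha_0(W)h_g(W)\mid X]=\E[\pi_0(W)\alpha_0(W)h_g(W)\mid X]$ after conditioning on $W$. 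The two contributions cancel, giving orthogonality in $g$.

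The main obstacle is precisely this last step: making the Riesz-representer cancellation rigorous at the level of the $X$-conditional (not marginal) functional, and being careful about the distribution under which the representer is taken. One needs $m(Z;\cdot)$ to be at least affine in $g$ so that its $g$-derivative $\dot m(Z)[\cdot]$ is a genuine continuous linear functional possessing a square-integrable representer $\alpha_0(W)$, and that representer must be the one matching the target-environment conditional functional $\E_t[\cdot\mid X]$; Assumption~\ref{ass:no-drift} is what reconciles this with the source-distribution phrasing in the definition, since on $\{E=0\}$ the residual $Y-g_0(W)$ is conditionally mean-zero given $W$ regardless of environment. A secondary, routine point is integrability: overlap keeps $\tfrac{\pi_0}{1-\pi_0}$ bounded, and square-integrability of $\alpha_0$ together with $\E[g(W)^2]<\infty$ ensures every conditional expectation above is well defined, licensing the tower-rule manipulations. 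Assembling the two stages then yields the theorem.
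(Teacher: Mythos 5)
Your proof is correct and follows essentially the same route as the paper's: verify the moment at the truth by splitting on $E$ and using $\E[Y-g_0(W)\mid E=0,W]=0$ under Assumption~\ref{ass:no-drift}, annihilate the $\pi$- and $\alpha$-directions the same way since both retain the residual $Y-g_0(W)$, and cancel the two contributions in the $g$-direction via the conditional Riesz identity combined with $\E[E\mid W]=\pi_0(W)$. The one place you are slightly more careful than the paper is in working with the linear part $\dot m(Z)[\cdot]$ and in flagging that the Riesz identity must be invoked under the target ($E=1$) conditional law --- which is indeed what the paper's own derivation uses, despite the theorem statement phrasing the representer in terms of $\E_s$.
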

In the CATT application the Riesz Representer is $-1$. In Example~\ref{ex:distillation}, the Riesz representer is $1$. In Example~\ref{ex:surrogates} the Riesz representer is $\frac{q(W)}{\pi} + \frac{1-q(W)}{1-\pi}$, where $q(W)=\mathbb{P}(D=1\mid W, E=1)$. In Example~\ref{ex:cate} the Riesz representer $\alpha(D,W)$ is $\frac{D}{\mathbb{P}(D=1|E=1,W)} - \frac{1-D}{\mathbb{P}(D=0|E=1,W)}$. 

As in Proposition \ref{prop:loss_catt}, this conditional moment can be turned into a convex doubly robust loss minimization problem. 
\begin{align*}
    \mathcal{L}(\theta; \pi, g) ~&= \E\left[E\theta(X)^2 - 2 \widehat{Y}\theta(X)\right]
\end{align*} where $\widehat{Y} = E m(Z;g) + \frac{(1-E)\pi(W)}{1-\pi(W)}\alpha(W)(Y-g(W))$. Note that in this loss function the variable $Y$ is always multiplied by $1-E$ and therefore it respects the constraint that outcomes $Y$ are only available in the source environment. The double robustness property of the loss will make the resulting estimand robust to estimation errors in the nuisance functions. Analogous to Theorem~\ref{thm:catt_rates}, we can prove fast statistical learning rates, for the resulting estimator based on this doubly robust loss. It is easy to verify that for the CATT setting this loss coincides with the loss in Section~\ref{sec:catt}. 

\begin{table*}[ht]
\centering
\caption{MSE (mean ± standard deviation) Over 100 Simulations of Imbalanced Dataset. Each row represent a different meta-learner, and columns represent the different nuisance function classes. }
\label{table:mse_Imbalanced}
\vskip 0.15in
\footnotesize
\begin{tabular}{|l|l|l|l|l|l|l|}
\hline
& \begin{tabular}{@{}c@{}}Linear \\ Regression\end{tabular} & Lasso (CV) & Ridge (CV) & Random Forest & Best \\
\hline
Neural Net (OR)  & 0.22 ± 0.06 & 0.21 ± 0.06 & 0.21 ± 0.06 & 0.4 ± 0.15 & 0.21 ± 0.05 \\
Neural Net (DR) & \textbf{0.18 ± 0.07} & \textbf{0.18 ± 0.05} & \textbf{0.18 ± 0.05} & \textbf{0.24 ± 0.07} & \textbf{0.18 ± 0.05} \\
Neural Net (CATE OR)  & 0.27 ± 0.08 & 0.27 ± 0.08 & 0.27 ± 0.08 & 0.51 ± 0.16 & 0.27 ± 0.08 \\
Neural Net (CATE DR) & 0.22 ± 0.07 & 0.22 ± 0.07 & 0.21 ± 0.07 & 0.33 ± 0.11 & 0.21 ± 0.07 \\
\hline
XGBoost (OR)  & 0.21 ± 0.06 & 0.21 ± 0.06 & 0.21 ± 0.06 & 0.34 ± 0.11 & 0.21 ± 0.06 \\
XGBoost (DR)   & \textbf{0.12 ± 0.03} & \textbf{0.12 ± 0.03} & \textbf{0.12 ± 0.03} & \textbf{0.18 ± 0.06} & \textbf{0.12 ± 0.03} \\
XGBoost (CATE OR)  & 0.27 ± 0.08 & 0.27 ± 0.08 & 0.27 ± 0.08 & 0.51 ± 0.16 & 0.27 ± 0.08 \\
XGBoost (CATE DR)   & 0.15 ± 0.05 & 0.15 ± 0.04 & 0.15 ± 0.05 & 0.34 ± 0.13 & 0.15 ± 0.04 \\
\hline
\end{tabular}
\vskip -0.1in
\end{table*}

\begin{table*}[ht]
\centering
\caption{MSE (mean ± standard deviation) over 100 semi-synthetic datasets generated from the Minimum Wage dataset.}
\label{table:mse_min_wage}
\vskip 0.15in
\footnotesize
\begin{tabular}{|l|l|l|l|l|l|l|}
\hline
 & \begin{tabular}{@{}c@{}}Linear \\ Regression\end{tabular} & Lasso (CV) & Ridge (CV) & Random Forest & Best \\
\hline
XGBoost (OR)  & 1.97 ± 0.04 & 2.02 ± 0.05 & 1.96 ± 0.04 & 2.08 ± 0.09 & 2.07 ± 0.09 \\
XGBoost (DR) & \textbf{1.91 ± 0.04} & \textbf{1.88 ± 0.04} & \textbf{1.91 ± 0.04} & \textbf{1.8 ± 0.09} & \textbf{1.8 ± 0.09} \\
XGBoost (CATE OR)  & 2.72 ± 0.06 & 2.71 ± 0.07 & 2.73 ± 0.06 & 3.4 ± 0.36 & 2.69 ± 0.07 \\
XGBoost (CATE DR)  & 2.73 ± 0.06 & 2.7 ± 0.06 & 2.73 ± 0.07 & 3.47 ± 0.3 & 2.66 ± 0.07 \\
\hline
Linear (OR)  & 1.96 ± 0.04 & 2.01 ± 0.04 & 1.96 ± 0.04 & 2.07 ± 0.08 & 2.06 ± 0.08 \\
Linear (DR)  & \textbf{1.92 ± 0.04} & \textbf{1.89 ± 0.04} & \textbf{1.92 ± 0.04} & \textbf{1.83 ± 0.07} & \textbf{1.83 ± 0.07} \\
Linear (CATE OR)  & 2.78 ± 0.05 & 2.76 ± 0.05 & 2.78 ± 0.05 & 3.14 ± 0.38 & 2.76 ± 0.05 \\
Linear (CATE DR)  & 2.8 ± 0.05 & 2.75 ± 0.05 & 2.8 ± 0.05 & 3.13 ± 0.36 & 2.71 ± 0.05 \\
\hline
\end{tabular}
\vskip -0.1in
\end{table*}
\section{Extension to Multi-Period Setting} \label{sec:multi-period}
In the multiple time period setting, we observe the outcomes for each unit for time periods $t = 0, 1, \dots, T$. Moreover, assume that no unit is treated at period $0$. Consider first the case where all treated units are treated at period $G=1$ and we assume the conditional parallel trends assumption that for all $t\geq 1$, $\E[Y_{t}(0)-Y_0(0)\mid D=1, W]=\E[Y_{t}(0)-Y_0(0)\mid D=0, W]$. Note that in this case, we can treat the distance $\Delta \in \{0, \dots, T-1\}$ of a target period $t$ from the initial treatment time period $1$ as a random variable. We can also denote with $Y_{post}(0)$ as the random variable corresponding to the post-treatment period outcome we are looking at. Then we can equivalently write:
\begin{align*}
    \E[Y_{post}(1) - Y_{post}(0)\mid X, \Delta=t] = \E[Y_t(1) - Y_t(0)\mid X]
\end{align*}
Thus we can treat the distance from treatment $\Delta$, as yet another covariate in our framework and make it part of $X$. This way, we can flexibly estimate treatment effect heterogeneity as a function of the distance from the initial treatment period and let ML methods select the best model on how distance from initial treatment changes the effect.

Next consider the more general setting of a staggered roll-out, i.e. each treated unit ($D=1$) is treated at some period $G=[1, T]$ and remains treated after that period. We denote the never-treated group as $G=\infty$.
In this setting, we can make the parallel trends assumption that for all $g\in [1,T]$ and for all $t\geq g$ $\E[Y_{t}(0) - Y_0(0)\mid D=1, W, G=g]=\E[Y_{t}(0) - Y_0(0)\mid D=0, W,G=\infty]$.
Similarly, we can incorporate heterogeneity as a function of the initial treatment period $G$ and the distance $\Delta$ to the treatment period, by making these variables as part of our heterogeneity set $X$: for all $g\in [1, T]$ and $t\in [g, T]$:
\begin{multline*}
    \E[Y_{post}(1) - Y_{post}(0)\mid D=1, X, \Delta=\ell, G=g]\\
    = \E[Y_{g+\ell}(1) - Y_{g+\ell}(0)\mid D=1, X, G=g]
\end{multline*}
Prior work of \citealp{CSDID} considers heterogeneity with respect to $G$ and $\Delta$, albeit in a fully non-parametric manner and does not provide a method for model selection, so as to uncover in a more data-driven manner the functional form of this heterogeneity. We note that the prior work of \citealp{CSDID} also considers doubly robust estimation and inference on weighted averages of these heterogeneous effect models across different values of $G$ and $\Delta$, which we do not discuss in this work.

\section{Experiments and Results}\label{sec:exp}
\subsection{Fully Synthetic Data}
First, to compare the results of our proposed method with other baselines, we conducted fully simulated experiments where the datasets are generated from known data generating processes that satisfy the identifying assumptions described in Section~\ref{sec:assum_iden}. The data has 20 covariates, and the CATT learners look at the projection onto 5 covariates. We report the mean MSE (mean square error) between the predicted CATT and the true CATT on covariates of the treated units of a held out test set. We compare our results with the following baseline models, here $g_d(W) = \E[Y_1 - Y_0|W, D=d]$, $g(W, D) = g_1(W)D+g_0(W)(1-D)$, and $\pi(W) = \mathbb{P}(D=1|W)$:
\begin{itemize}
    \item Outcome regression (OR) learner: $\theta(X) = \E[Y_1 - Y_0 - g_0(W)|D=1, X]$
    \item CATE outcome regression learner: $\theta(X) = \E[g_1(W) - g_0(W)|D=1, X]$
    \item CATE DR-learner: $\theta(X) = \E[g(W,1) - g(W,0) + \left(\frac{D}{\pi(W)} - \frac{1-D}{1-\pi(W)}\right)(Y - g(W,D))|D=1, X]$
\end{itemize}
We considered three different final-stage models for the CATT: neural net, XGBoost, and linear models, to fit the meta-learners. Simulation results are presented in Table~\ref{table:mse_cpta}. The results of linear models can be found in the Appendix~\ref{sec:exp}. The columns represent different ML methods that are used to learn the outcome regression. The propensity function, i.e. $\mathbb{P}(D=1|W)$, is always fitted using logistic regression. The "Best" column, represents using the ML method that achieved the lowest out-of-sample MSE for the outcome regression. In the Appendix, we also provide results that investigate the performance of our DiD CATT method and the baselines even when the parallel trends assumption is violated. The results in Appendix~\ref{app:add_exp} suggest that the doubly robust estimator reduces the MSE, as compared to the baselines, even under the violation of parallel trends.

Moreover, we also consider unbalanced datasets, where the size of the control group is much larger than that of the treated group. In particular, the propensities of each unit was lowered by a factor of 10. The results are presented in Table \ref{table:mse_Imbalanced}. We see that while all models suffered in performance, our proposed doubly robust model still outperforms the other meta-learners as it leverages the asymmetry of the CATT definition to be more robust to unbalanced settings. Notably, the proposed learner out-performs the doubly robust CATE learner as discussed in Remark \ref{rmk:dr_cate}.

\subsection{Minimum Wage Case Study}
\begin{figure}[ht]
\vskip 0in
\begin{center}
\centerline{\includegraphics[width=\columnwidth]{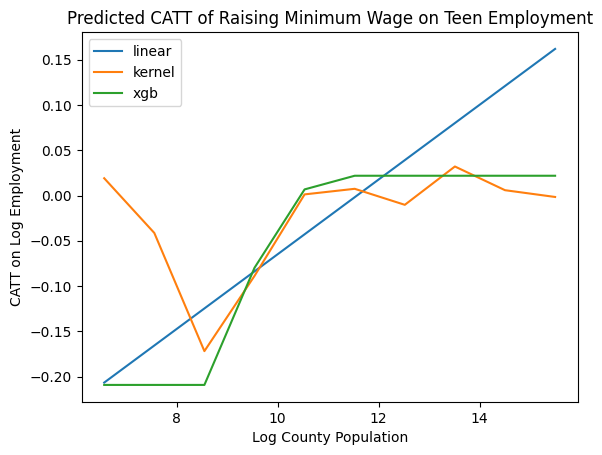}}
\caption{Predicted CATT with respect to log county population.}
\label{fig:min_wage_pop}
\end{center}
\vskip -0.1in
\end{figure}
We applied our proposed approach to the minimum wage dataset that is also studied in \citealp{CSDID} and \citealp{callaway2023policy}. This dataset studies the effect of minimum wage changes on teen employment during the period 2001–2007. The outcome variable of interest is the log of county-level teen employment, while the treatment variable is defined as a binary indicator representing whether a county's minimum wage exceeds the federal minimum wage. The dataset includes covariates such as county population and average annual pay, which serve as controls to account for differences in local economic conditions. For the ease of interpretation, we focus only on the raise in minium wage at year 2004 as the treatment. In our analysis, we treat years after treatment assignment time (2004) as an additional covariate to control for, as discussed in Section \ref{sec:multi-period}. Employment rates as well as other covariates from before 2003 are also used as the covariates. 
\begin{figure}[ht]
\vskip 0in
\begin{center}
\centerline{\includegraphics[width=\columnwidth]{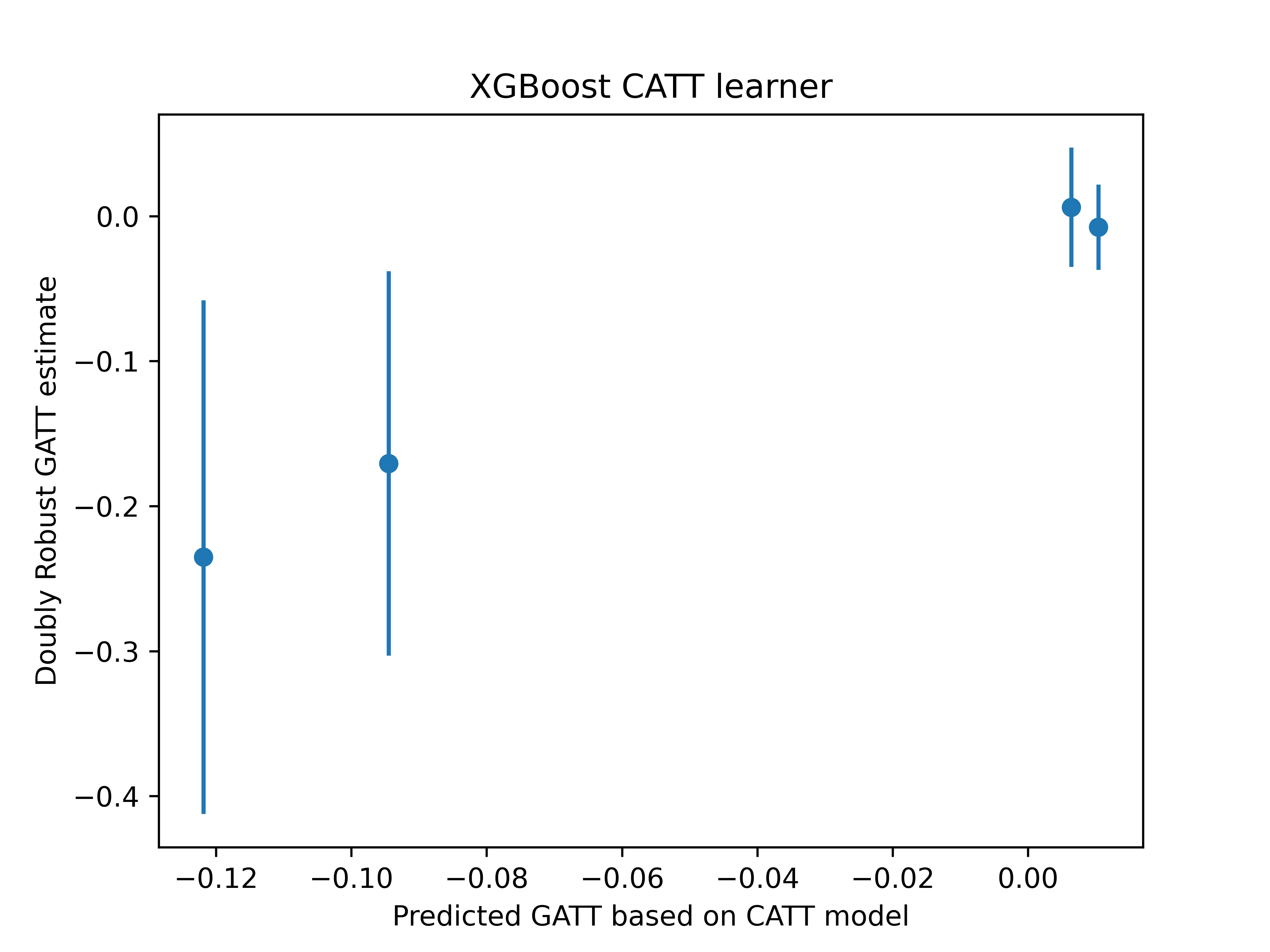}}
\caption{Calibration plot for CATT of minimum wage with respect to log county population.}
\label{fig:cal_pop_xgb}
\end{center}
\vskip -0.3in
\end{figure}

As a preliminary evaluation, we tested the performance of our methods on semi-synthetic data generated from this dataset. The semi-synthetic data was generated by bootstrapping the samples in the dataset and applying a chosen function for treatment assignment and treatment effect to compute the post-treatment outcomes. Since this dataset is rather low dimensional, we did not experiment with neural networks. The mean MSE with respect to the true CATT function is reported in Table \ref{table:mse_min_wage}. The results show that the proposed method out-performs outcome regression learners as well as the CATE learners. 

We then applied our method to the original real dataset. Figure \ref{fig:min_wage_pop} shows the CATT prediction over different values of log county population. The three models used (linear regression, XGBoost, and kernel ridge regression) all showed some extend of positive trends. This suggests that raising minimum wage might have a smaller negative effect on teen employment for counties with a larger population.

\textbf{Validating CATT: }Since we do not have access to ground truth treatment effects for real datasets, we need a way to validate the heterogeneity that is picked up by the model is not due to noise. One approach is through calibration. The first step of the procedure is quantile binning the CATT predictions on a held out validation set. Next, the CATT predictions on a held out test set will be put into the bins according to the tresholds. The group average treatment effect on the treated (GATT) for each bin is calculated as the mean of the heterogeneous model predictions, as well as calculating the unconditional ATT for each group (i.e. conditioning on the empty set, we get $\theta = \E[\widehat{Y}]/\E[D]$). If the heterogeneity is indeed significant, we expect the calibration plots line up in the $45^{\circ}$ line with non-overlapping confidence intervals.

Figure \ref{fig:cal_pop_xgb} presents the calibration plot for the doubly robust CATT learner realized using XGBoost. While we see that the GATT for the lowest quantile has a larger confidence intervals, the highest most two quantiles have non-overlapping confidence intervals. This suggests that there is significant heterogeneity between high and low populations. Together with Figure \ref{fig:min_wage_pop}, the results seem to suggest that the treatment effect for counties with large populations is close to zero.

\section*{Impact Statement}
This paper presents work whose goal is to advance the field of Causal Inference and Machine Learning. There are many potential societal consequences of our work, none which we feel must be specifically highlighted here.

\nocite{langley00}

\bibliography{ref}
\bibliographystyle{icml2025}

\newpage
\appendix
\onecolumn
\section{DiD with Instruments}\label{app:iv-cate}
As an extension, we consider a widely encountered setting of estimating heterogeneous treatment effects from panel data with a binary instrument $Z$, with two sided non-compliance. In this setting, the target estimand is the conditional local average treatment effect among the exposed (CLATT) in the second (post-treatment) time period:
$$\theta_0(X) =  \E[Y_1(1)-Y_1(0)\mid D_1(1)>D_1(0), Z=1, X]$$
First, we consider the natural conditional extensions, which allows for more heterogeneity and flexibility, of the parallel trends assumptions stated in \citealp{IV_DID}. 
\begin{assumption}[No carryover assumption]
Let $\mathbf{d} = (d_0, d_1)$ denote the treatment path, then $Y_{0}(\mathbf{d},z) = Y_{0}(d_0,z)$ and $Y_{1}(\mathbf{d},z) = Y_{1}(d_1,z)$.   
\end{assumption}
This assumption requires that the outcome is only affected by the current treatment, in other words, there is no carry over effects from previous treatments. 

\begin{assumption} [Exclusion restriction for potential outcomes]\label{assum:exo-iv}
    For all $t$, $Y_{t}(\mathbf{d},z) = Y_{t}(\mathbf{d})$
\end{assumption}
This is the standard exclusion restriction assumption for instrumental variables that the instrument only affects the outcome through the treatment.

\begin{assumption}[Monotonicity Assumption] \label{Assum:no-defier}
    $\mathbb{P}(D_{1}(1)\geq D_{1}(0)) = 1$ or  $\mathbb{P}(D_{1}(1)\leq D_{1}(0)) = 1$
\end{assumption}
This assumption requires that the effect of the instrument is monotone - that it either increases treatment adoption or decreases treatment adoption, but not both. This assumption is needed for identification under two-sided non-compliance. 

\begin{assumption}[No anticipation in treatment]\label{assum:no-an-iv}
    $D_{0}(1) = D_{0}(0)$ for all units with $Z = 1$
\end{assumption}
Similar to the standard no-anticipation assumption that the treatment assignment in the second period should not have an affect on the outcome in the first period, here we assume that the exposure event that happens at the second period should not have an anticipatory effect on the treatment adoption in the first period. 

\begin{assumption}[CPTA in Treatment]\label{assum:PTA_Z_D}
    \begin{align*}
        ~&\E[D_{1}(0) - D_{0}(0)|Z=0, W] \\
        ~&= \E[D_{1}(0) - D_{0}(0)|Z=1, W]
    \end{align*}
\end{assumption}
Here we no longer require the instrument to be independent with the potential outcome of the treatments, but instead require that the trend under no exposure is (mean) independent to the exposure. 

\begin{assumption}[CPTA in Outcome]\label{assum:PTA_Z_Y}
    \begin{align*}
    ~&\E[Y_{1}(D_{1}(0)) - Y_{0}(D_{0}(0))|Z=0, W]\\
    ~&= \E[Y_{1}(D_{1}(0)) - Y_{0}(D_{0}(0))|Z=1, W]
    \end{align*}
\end{assumption}

\begin{assumption}[Sufficient Overlap in Instrument] \label{assum:propensity_overlap_Z}
    For all $W$, there exist $c>0$ such that $ c\leq \mathbb{P}(Z=1|W) \leq 1-c$.
\end{assumption}
Moreover if the instrument does not have any effects on the treatment, the local average treatment effect will also not be identified. Hence, we need the following assumption to low-bound the effects of the instrument on the treatment.
\begin{assumption}[Strong Instrument under PTA]\label{assum:strong_iv}
    There exist $c_z>0$ such that $$\Big|\E[D_1 - D_0 - \E[D_1 - D_0\mid Z=0, W]\mid Z=1,X]\Big| \geq c_z$$
\end{assumption}


\begin{proposition}\label{prop:IV}
    Under Assumptions \ref{assum:exo-iv}, \ref{Assum:no-defier}, \ref{assum:no-an-iv}, \ref{assum:PTA_Z_D}, \ref{assum:PTA_Z_Y}, and \ref{assum:strong_iv}, the CLATE, $\theta_0(W)$, can be identified as:
    \begin{align*}
        \theta_0(W) = \frac{\E[Y_1 - Y_0 - \E[Y_1 - Y_0\mid Z=0, W]\mid Z=1,X]}{\E[D_1 - D_0 - \E[D_1 - D_0\mid Z=0, W]\mid Z=1,X]}
    \end{align*}
\end{proposition}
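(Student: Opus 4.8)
The plan is to mimic the classic Wald/IV identification argument, but with the "first stage" and "reduced form" each replaced by a difference-in-differences contrast, and to do everything conditionally on $W$ (and then on $X$). I would proceed in three stages: (i) identify a reduced-form effect of the instrument on the outcome trend; (ii) identify the corresponding effect of the instrument on the treatment trend; (iii) take the ratio and argue it equals $\theta_0(X)$ via a monotonicity/LATE-style decomposition.

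\textbf{Step 1 (reduced form).} Start from the observed outcome trend contrast $\E[Y_1 - Y_0 \mid Z=1, W] - \E[Y_1 - Y_0 \mid Z=0, W]$. Using the exclusion restriction (Assumption \ref{assum:exo-iv}) together with the no-carryover assumption and the consistency relation $Y_t = Y_t(D_t)$, rewrite $Y_1 - Y_0$ in the $Z=1$ arm as $Y_1(D_1(1)) - Y_0(D_0(1))$ and in the $Z=0$ arm as $Y_1(D_1(0)) - Y_0(D_0(0))$. The no-anticipation-in-treatment assumption (Assumption \ref{assum:no-an-iv}) gives $D_0(1)=D_0(0)$, so the period-$0$ potential treatments coincide and the period-$0$ terms difference out cleanly. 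The conditional parallel trends in outcome under no exposure (Assumption \ref{assum:PTA_Z_Y}) lets me add and subtract $\E[Y_1(D_1(0)) - Y_0(D_0(0)) \mid Z=z, W]$ and collapse the cross terms, leaving $\E[Y_1 - Y_0 - \E[Y_1-Y_0\mid Z=0,W] \mid Z=1, W]$ equal to $\E[Y_1(D_1(1)) - Y_1(D_1(0)) \mid Z=1, W]$, i.e. the effect of switching the instrument on the period-$1$ potential outcome, among the exposed.

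\textbf{Step 2 (first stage) and Step 3 (ratio).} Run the analogous argument on the treatment path: the observed treatment trend contrast, combined with Assumptions \ref{assum:no-an-iv} and \ref{assum:PTA_Z_D}, reduces to $\E[D_1(1) - D_1(0) \mid Z=1, W]$. Now condition further on $X\subseteq W$ by iterated expectations, and on the numerator apply the monotonicity assumption (Assumption \ref{Assum:no-defier}): $D_1(1)-D_1(0)\in\{0,1\}$ almost surely, so $\E[Y_1(D_1(1))-Y_1(D_1(0))\mid Z=1,W] = \E[(Y_1(1)-Y_1(0))\mathbf{1}\{D_1(1)>D_1(0)\}\mid Z=1,W]$ and $\E[D_1(1)-D_1(0)\mid Z=1,W] = \mathbb{P}(D_1(1)>D_1(0)\mid Z=1,W)$. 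Taking the ratio of the two conditional-on-$X$ quantities, and using Assumption \ref{assum:strong_iv} to ensure the denominator is bounded away from zero (so the ratio is well-defined), yields $\E[Y_1(1)-Y_1(0)\mid D_1(1)>D_1(0), Z=1, X] = \theta_0(X)$ after one more application of Bayes' rule to convert the weighted average into the conditional expectation over compliers. The strong-instrument condition also rules out the degenerate case where the complier subpopulation has zero probability.

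\textbf{Main obstacle.} The routine part is the DiD algebra of Steps 1–2; the delicate part is the interchange of the $X$-conditioning with the complier reweighting in Step 3. Specifically, $\E[\cdot\mid Z=1,W]$ collapses monotonicity cleanly because $D_1(1)-D_1(0)$ is binary given $W$, but after taking a further expectation over $W\mid X$ the "compliance probability" becomes $\mathbb{P}(D_1(1)>D_1(0)\mid Z=1,X)$, a mixture over $W$; I must check that the ratio of the $X$-level numerator and denominator still equals a genuine conditional expectation of $Y_1(1)-Y_1(0)$ over the complier set defined at the $X$-level, which is where a careful Bayes-rule bookkeeping (numerator $=\E[(Y_1(1)-Y_1(0))\mathbf{1}\{\text{complier}\}\mid Z=1,X]$, denominator $=\mathbb{P}(\text{complier}\mid Z=1,X)$) is needed. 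I would also need to be explicit that Assumption \ref{assum:PTA_Z_Y} is stated for the observed-under-no-exposure potential outcome $Y_t(D_t(0))$ rather than $Y_t(0)$, which is exactly what makes the cancellation in Step 1 go through; flagging this is probably the main conceptual subtlety worth spelling out.
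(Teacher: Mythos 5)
Your proposal is correct and follows essentially the same route as the paper's proof: the paper likewise reduces the numerator and denominator separately via the no-anticipation and conditional-parallel-trends assumptions to $\E[Y_1(D_1(1))-Y_1(D_1(0))\mid Z=1,X]$ and $\E[D_1(1)-D_1(0)\mid Z=1,X]$, then uses monotonicity to write the numerator as $\E[(D_1(1)-D_1(0))(Y_1(1)-Y_1(0))\mid Z=1,X]$ and the denominator as $\mathbb{P}(D_1(1)>D_1(0)\mid Z=1,X)$, so the ratio is the complier-conditional expectation at the $X$-level. The bookkeeping you flag as the main obstacle resolves exactly as you describe, and whether one does the cancellation at the $W$-level first and then integrates over $W\mid X$, or directly under the outer $\E[\cdot\mid Z=1,X]$ as the paper does, is immaterial by the tower property.
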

 In the special case that $W=X$ under the parallel trends assumption, the problem is again equivalent to the standard IV problem, and \citealp{DR-IV} proposed a doubly robust algorithm for estimating the heterogeneous LATE. For more discussion, see Appendix \ref{appen:full_x}. 

\begin{lemma}[Doubly Robust Conditional Moment Restriction for CLATE]\label{lemma:ortho_moment_IV}
     Under Assumptions \ref{assum:exo-iv}, \ref{Assum:no-defier}, \ref{assum:no-an-iv}, \ref{assum:PTA_Z_D}, and \ref{assum:PTA_Z_Y}, \ref{assum:propensity_overlap_Z}, the true CLATE is a solution to the following conditional moment equation:
    \begin{align*}
    \E\left[ \widehat{Z}\left\{(\Delta Y- g_Y(W)) - (\Delta Y- g_D(W))\theta(X) \right\}\,\middle\vert\, X \right] = 0
\end{align*}
    where $\Delta S = S_1 - S_0$ for $S = Y$ or $D$, $g_S(W) = \E[S_1 - S_0|Z=0,W]$, and $\widehat{Z} = \frac{Z-\mathbb{P}(Z=1|W)}{1-\mathbb{P}(Z=1|W)}$. Moreover, this moment is Neyman orthogonal with respect to all nuisance functions.
\end{lemma}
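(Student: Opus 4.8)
The plan is to establish the two claims separately: (i) that $\theta_0$ solves the displayed conditional moment equation, and (ii) that the moment is Neyman orthogonal in the nuisances $\eta = (\pi_Z, g_Y, g_D)$, where $\pi_Z(W) = \mathbb{P}(Z=1\mid W)$. For (i) I would take Proposition~\ref{prop:IV} as given, which already packages all of the potential-outcome bookkeeping (exclusion, monotonicity, no-carryover, no-anticipation in treatment, and both conditional parallel-trends assumptions) into the ratio identity $\theta_0(X) = \E[\Delta Y - g_Y(W)\mid Z=1, X]\big/\E[\Delta D - g_D(W)\mid Z=1, X]$, with the denominator bounded away from $0$ by Assumption~\ref{assum:strong_iv}; the remaining task is then purely to show that the moment equation is algebraically equivalent to this ratio.

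The one computational lemma I need is a residualization identity: for $S\in\{Y,D\}$, with $\Delta S = S_1-S_0$ and $g_S(W)=\E[\Delta S\mid Z=0,W]$ as in the statement, write $\widehat Z = \frac{Z - \pi_Z(W)}{1-\pi_Z(W)} = Z - \frac{(1-Z)\pi_Z(W)}{1-\pi_Z(W)}$; then $\E[\widehat Z\mid W] = 0$, and because the $(1-Z)$ piece annihilates $g_S(W)$ exactly,
\[
\E\left[\widehat Z\,(\Delta S - g_S(W))\,\middle|\, W\right] = \pi_Z(W)\left(\E[\Delta S\mid Z=1, W] - g_S(W)\right).
\]
Applying this with $S = Y$ and $S = D$, and using that $\theta(X)$ is $W$-measurable, gives $\E[m(Z;\theta,\eta_0)\mid W] = \pi_Z(W)\big( A(W) - B(W)\,\theta(X)\big)$ with $A(W) = \E[\Delta Y\mid Z=1,W] - g_Y(W)$ and $B(W) = \E[\Delta D\mid Z=1,W] - g_D(W)$. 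I would then project onto $X$ using the reweighting identity $\E[\pi_Z(W) h(W)\mid X] = \mathbb{P}(Z=1\mid X)\,\E[h(W)\mid Z=1, X]$ (valid since $X\subset W$ forces $\mathbb{P}(Z=1\mid W,X)=\pi_Z(W)$), together with the tower rule $\E[\E[\Delta S\mid Z=1,W]\mid Z=1,X]=\E[\Delta S\mid Z=1,X]$, to obtain
\[
\E[m(Z;\theta,\eta_0)\mid X] = \mathbb{P}(Z=1\mid X)\left(\E[\Delta Y - g_Y(W)\mid Z=1, X] - \theta(X)\,\E[\Delta D - g_D(W)\mid Z=1, X]\right),
\]
which vanishes at $\theta=\theta_0$ precisely by the ratio identity of Proposition~\ref{prop:IV} (and is its unique zero under Assumption~\ref{assum:strong_iv}).

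For (ii), I would write $m = \widehat Z_\pi R$, where $\widehat Z_\pi = \frac{Z-\pi(W)}{1-\pi(W)}$ depends only on $\pi$ and $R = (\Delta Y - g_Y(W)) - (\Delta D - g_D(W))\theta_0(X)$ depends only on $(g_Y, g_D)$. A perturbation of $g_Y$ or $g_D$ contributes a term of the form $\widehat Z_{\pi_0}\cdot\psi(W)$, whose conditional mean given $W$ is $0$ because $\E[\widehat Z_{\pi_0}\mid W]=0$. A perturbation of $\pi$ uses $\partial_\pi\widehat Z_\pi[\delta] = -\frac{(1-Z)\,\delta(W)}{(1-\pi(W))^2}$, so the derivative of $m$ is proportional to $(1-Z)$ times $R$ evaluated at the true $g$'s, and $\E[(1-Z)(\Delta Y - g_{Y,0}(W))\mid W] = \E[(1-Z)(\Delta D - g_{D,0}(W))\mid W] = 0$ by the defining property of $g_{Y,0},g_{D,0}$; hence this contribution also has conditional mean $0$ given $W$. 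Therefore all three directional derivatives of $\E[m(Z;\theta_0,\eta)\mid X]$ vanish at $\eta_0$. The same two facts additionally show the moment is genuinely doubly robust: if either $\pi = \pi_{Z,0}$ (for arbitrary $g$'s) or $(g_Y,g_D)=(g_{Y,0},g_{D,0})$ (for arbitrary $\pi$), then $\E[m(Z;\theta_0,\eta)\mid X] = 0$.

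I expect the main obstacle to be the measure-change bookkeeping in the projection step of (i): one has to verify carefully that reweighting by $\pi_Z(W)$ converts the population conditional expectation given $X$ into the conditional expectation given $\{Z=1, X\}$, so that the zero set of the moment reproduces the correct ratio estimand $\E[Y_1(1)-Y_1(0)\mid D_1(1)>D_1(0), Z=1, X]$ rather than some ratio of separately $X$-conditioned objects. The orthogonality calculation in (ii) is, by contrast, routine differentiation once the product factorization $m = \widehat Z_\pi R$ is in hand.
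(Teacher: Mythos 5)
Your proposal is correct and follows essentially the same route as the paper's proof: part (i) reduces the moment to $\mathbb{P}(Z=1\mid X)\bigl(\E[\Delta Y-g_Y(W)\mid Z=1,X]-\theta(X)\,\E[\Delta D-g_D(W)\mid Z=1,X]\bigr)$ via $\E[\widehat Z\mid W]=0$ and the defining property of $g_S$ on $\{Z=0\}$, then invokes the identification in Proposition~\ref{prop:IV} (the paper organizes this by dividing by $\gamma_0(X)=\mathbb{P}(Z=1\mid X)$ rather than reweighting, which is the same computation), and part (ii) is the same three directional-derivative checks using the factorization $m=\widehat Z_\pi R$. Your added remark on exact double robustness and the observation that Assumption~\ref{assum:strong_iv} is only needed for uniqueness of the solution (not for $\theta_0$ being a solution) are correct bonuses, and you also implicitly fix the typo $\Delta Y$ for $\Delta D$ in the statement's second residual, consistent with the paper's proof.
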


\begin{proposition}\label{prop:loss_clate}
Consider the incomplete squared loss:
\begin{align*}
    ~&\mathcal{L}_{IV}(\theta; \pi_0, g_{0,Y}, g_{0,D})\\   
    ~&= \E\left[ \widehat{Z}\left\{(\Delta D- g_{0,D}(W))\theta(X)^2 - 2(\Delta Y- g_{0,Y}(W)) \theta(X)\right\}\right] 
\end{align*}
where$\Delta S = S_1 - S_0$ for $S = Y$ or $D$, $g_{0,S}(W) = \E[S_1 - S_0|D=0,W]$, and $\widehat{Z} = \frac{Z-\pi_0(W)}{1-\pi_0(W)}$ for $\pi_0(W) = \mathbb{P}(Z=1|W)$.  Under the same assumptions as in Lemma \ref{lemma:ortho_moment_IV}, the minimizer of $\mathcal{L}(\theta; \pi_0, g_{0,Y}, g_{0,D})$ over any hypothesis space $\Theta$ is equivalent to the solution to the best-projection problem of the CATT among the treated:
\begin{align*}
    \min_{\theta \in \Theta} \E[(\theta(X) - \theta_0(X))^2\mid Z=1, D(1)>D(0)]
\end{align*}
\end{proposition}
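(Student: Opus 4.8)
The plan is to follow the route used for Proposition~\ref{prop:loss_catt}: show that the incomplete IV loss equals, up to an additive constant and a \emph{positive} multiplicative constant (both independent of $\theta$), the weighted squared error $\E[(\theta(X)-\theta_0(X))^2\mid Z=1, D(1)>D(0)]$. Once that identity is in hand the two objectives are related by an increasing affine map, so their argmin sets over \emph{any} hypothesis class $\Theta$ coincide, which is exactly the assertion.

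First I would reduce the loss to the heterogeneity covariates. Since $X\subseteq W$, the quantities $\theta(X)$ and $\theta(X)^2$ are $W$-measurable, so the tower rule gives
\begin{align*}
    \mathcal{L}_{IV}(\theta;\pi_0,g_{0,Y},g_{0,D}) = \E\big[A(X)\,\theta(X)^2 - 2\,B(X)\,\theta(X)\big],
\end{align*}
with $A(X):=\E[\widehat{Z}(\Delta D - g_{0,D}(W))\mid X]$ and $B(X):=\E[\widehat{Z}(\Delta Y - g_{0,Y}(W))\mid X]$. Next, invoking Lemma~\ref{lemma:ortho_moment_IV} with $\theta=\theta_0$ (the true CLATE, which the lemma certifies as a solution of the conditional moment equation) yields $B(X)=A(X)\,\theta_0(X)$ — in particular $B(X)=0$ wherever $A(X)=0$, so $\theta_0$ may be set arbitrarily on that set. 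Substituting and completing the square gives $\mathcal{L}_{IV}(\theta)=\E[A(X)(\theta(X)-\theta_0(X))^2]-\E[A(X)\theta_0(X)^2]$, with the last term independent of $\theta$.

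The crux is then to identify the weight $A(X)$ with the conditional probability of being an exposed complier. Using $\widehat{Z}=\frac{Z-\pi_0(W)}{1-\pi_0(W)}$ together with $\E[\widehat{Z}\mid W]=0$, the $g_{0,D}$ term drops and a short computation gives $\E[\widehat{Z}\Delta D\mid W]=\pi_0(W)\big(\E[\Delta D\mid Z=1,W]-\E[\Delta D\mid Z=0,W]\big)$. By the no-anticipation-in-treatment assumption~\ref{assum:no-an-iv} ($D_0(1)=D_0(0)$) and the conditional parallel trends in treatment (Assumption~\ref{assum:PTA_Z_D}), the bracket collapses to $\E[D_1(1)-D_1(0)\mid Z=1,W]$, and monotonicity (Assumption~\ref{Assum:no-defier}) turns this conditional mean into the probability $\mathbb{P}(D_1(1)>D_1(0)\mid Z=1,W)$; hence $\E[\widehat{Z}\Delta D\mid W]=\mathbb{P}(Z=1,\,D_1(1)>D_1(0)\mid W)$, and taking a further conditional expectation over $X\subseteq W$ yields $A(X)=\mathbb{P}(Z=1,\,D(1)>D(0)\mid X)\ge 0$. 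Since $(\theta(X)-\theta_0(X))^2$ is $X$-measurable, it follows that $\mathcal{L}_{IV}(\theta)=\mathbb{P}(Z=1,\,D(1)>D(0))\cdot\E[(\theta(X)-\theta_0(X))^2\mid Z=1,D(1)>D(0)]+\mathrm{const}$, where the multiplicative constant is strictly positive by Assumptions~\ref{assum:propensity_overlap_Z} and~\ref{assum:strong_iv}; this is the asserted equivalence of minimizers over any $\Theta$.

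I expect the main obstacle to be that last identification step — establishing that the quadratic coefficient $A(X)$ is exactly the conditional mass of exposed compliers. It requires carefully unwinding the definition of $\widehat{Z}$, tracking which conditioning events each potential-outcome assumption applies to, and using monotonicity to pass from a conditional mean of $D_1(1)-D_1(0)$ to a probability; everything afterwards is routine algebra mirroring the CATT case. A secondary nuisance is to handle the set $\{A(X)=0\}$, on which $\theta_0$ is not identified, consistently across both optimization problems (it carries zero weight in each, so the choice of $\theta_0$ there is immaterial).
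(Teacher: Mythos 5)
Your proposal is correct and follows essentially the same route as the paper's proof: use the conditional moment restriction from Lemma~\ref{lemma:ortho_moment_IV} to eliminate the linear term, reduce the loss to a weighted squared error in $\theta(X)-\theta_0(X)$, and identify the quadratic weight with the exposed-complier probability via no-anticipation, conditional parallel trends in treatment, and monotonicity (exactly the denominator computation from Proposition~\ref{prop:IV}). The only cosmetic difference is that you complete the square on the $X$-conditional coefficients $A(X),B(X)$ and compute the weight at the level of $W$, whereas the paper subtracts $\mathcal{L}_{IV}(\theta_0;\eta_0)$, expands $\widehat{Z}$ to kill the $Z=0$ contribution, and then conditions on $Z=1$ and $X$ — the two orderings yield the same identity.
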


\begin{theorem}[CLATE Rates]\label{thm:clate_rates}
    Let $\hat{\pi}, \hat{g}_D, \hat{g}_Y$ be estimates of the nuisance functions, constructed using an auxiliary dataset. Let $\|\theta\|_{D=1,CM}=\sqrt{\E[\theta(X)^2|D=1, D(1)>D(0)]}$ denote the $L_2$ norm over the compliers among the treated population. Let $\hat{\theta}$ be the result of any estimation process using $n$ samples, satisfying w.p. $1-\delta$:
    \begin{align*}
        \E\left[\mathcal{L}_{IV}(\hat{\theta}; \hat{\eta}) - \inf_{\theta\in\Theta}\mathcal{L}_{IV}(\theta; \hat{\eta})\right] \leq R_{n,\delta}^2
    \end{align*}
    Define the nuisance errors to be:
    \begin{align*}
        \text{Error}(\pi, g_D) :=~& \E\left[\E\left[\left(\frac{\pi_0(W) - \pi(W)}{1-\pi(W)}\right)(g_{0,D}(W) - g_{D}(W))\,\middle\vert\, X \right]^2\right]^{\frac{1}{2}}\\
        \text{Error}(\pi, g_Y) :=~& \E\left[\E\left[\left(\frac{\pi_0(W) - \pi(W)}{1-\pi(W)}\right)(g_{0,Y}(W) - g_{Y}(W))\,\middle\vert\, X \right]^2\right]^{\frac{1}{2}}\\
    \end{align*}
    Suppose Assumptions \ref{assum:exo-iv}, \ref{Assum:no-defier}, \ref{assum:no-an-iv}, \ref{assum:PTA_Z_D}, and \ref{assum:PTA_Z_Y}, \ref{assum:propensity_overlap_Z} are satisfied. Moreover, assume that there exist finite constant $B$ such that $|\theta(X)|\leq B$ for all $X$ with positive measure and all $\theta\in\Theta$. If the hypothesis space $\Theta$ is convex or is well specified (i.e. $\theta_0\in \Theta$), and $\text{Error}(\pi, g_D)$ is sufficiently small ($\text{Error}(\pi, g_D)\leq\frac{chk}{8B^2}$), then $\theta$ satisfies, w.p. $1-\delta$:
    \begin{align*}
    \|\theta(X) - \theta_*(X)\|^2_{D=1, CM} 
    &\leq \frac{4}{hk - \frac{8B^2}{c}\text{Error}(\pi, g_D)}R_{n}^2 + \left(\frac{\max(4B^2,2)}{c\left(hk - \frac{8B^2}{c}\text{Error}(\pi, g_D)\right)}\right)\left(\text{Error}(\pi, \hat{g}_Y) +\text{Error}(\pi, \hat{g}_D)\right) 
\end{align*}
    where $h = \mathbb{P}(Z=1)$, $k = \mathbb{P}(D(1)>D(0)|Z=1)$, and
    \begin{align*}
        \theta_* \in \arg\min_{\theta \in \Theta} \|\theta(X) - \theta_0(X)\|_{\Theta}^2
    \end{align*}
\end{theorem}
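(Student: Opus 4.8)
The plan is to mimic the orthogonal-statistical-learning argument behind Theorem~\ref{thm:catt_rates}: turn the excess risk on the plug-in loss $\mathcal{L}_{IV}(\cdot;\hat\eta)$ into an excess risk on the population loss $\mathcal{L}_{IV}(\cdot;\eta_0)$ at the cost of a second-order (product-of-nuisance-errors) term, then invert strong convexity of the population loss. Write $\mathcal{L}_{IV}(\theta;\eta_0)=\E[q_0(X)\theta(X)^2-2\ell_0(X)\theta(X)]$ with $q_0(X)=\E[\widehat Z(\Delta D-g_{0,D}(W))\mid X]$ and $\ell_0(X)=\E[\widehat Z(\Delta Y-g_{0,Y}(W))\mid X]$. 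First I would compute, using Assumptions~\ref{assum:no-an-iv} and~\ref{assum:PTA_Z_D}, that $\E[\widehat Z(\Delta D-g_{0,D}(W))\mid W]=\pi_0(W)\big(\E[\Delta D\mid Z=1,W]-\E[\Delta D\mid Z=0,W]\big)$, which by the monotonicity Assumption~\ref{Assum:no-defier} equals $\mathbb{P}(Z=1,\,D_1(1)>D_1(0)\mid W)\ge0$; hence $q_0(X)=\mathbb{P}(Z=1,\,\mathrm{complier}\mid X)$. Together with the conditional moment restriction of Lemma~\ref{lemma:ortho_moment_IV}, i.e.\ $\ell_0(X)=q_0(X)\theta_0(X)$, this gives $\mathcal{L}_{IV}(\theta;\eta_0)=\E[q_0(X)(\theta(X)-\theta_0(X))^2]-\text{const}=hk\,\|\theta-\theta_0\|_{D=1,CM}^2-\text{const}$ (for a complier $\{D=1\}=\{Z=1\}$, so $\E[q_0(X)f(X)]=\mathbb{P}(Z=1,\mathrm{complier})\,\E[f(X)\mid Z=1,\mathrm{complier}]$), recovering Proposition~\ref{prop:loss_clate}; when $\Theta$ is convex (or $\theta_0\in\Theta$), first-order optimality of $\theta_*$ upgrades this to $\mathcal{L}_{IV}(\theta;\eta_0)-\mathcal{L}_{IV}(\theta_*;\eta_0)\ge hk\,\|\theta-\theta_*\|_{D=1,CM}^2$ for every $\theta\in\Theta$.

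Next I would track the effect of plugging in $\hat\eta=(\hat\pi,\hat g_Y,\hat g_D)$. With $\mathcal{L}_{IV}(\theta;\hat\eta)=\E[\hat q(X)\theta(X)^2-2\hat\ell(X)\theta(X)]$, the same conditioning plus one round of algebra (the step where Neyman orthogonality is used: every term linear in a single nuisance error cancels) produces the purely bilinear remainders
\begin{align*}
\hat q(X)-q_0(X)&=\E\!\left[\frac{\pi_0(W)-\hat\pi(W)}{1-\hat\pi(W)}\,\big(g_{0,D}(W)-\hat g_D(W)\big)\,\middle|\,X\right],\\
\hat\ell(X)-\ell_0(X)&=\E\!\left[\frac{\pi_0(W)-\hat\pi(W)}{1-\hat\pi(W)}\,\big(g_{0,Y}(W)-\hat g_Y(W)\big)\,\middle|\,X\right],
\end{align*}
whose $L_2(X)$ norms are exactly $\mathrm{Error}(\pi,g_D)$ and $\mathrm{Error}(\pi,g_Y)$. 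In particular, restricting to $\{|\theta|\le B\}$ and using overlap ($1-\hat\pi\ge c$) to pass from $L_2$ to $L_1$, the curvature of $\mathcal{L}_{IV}(\cdot;\hat\eta)$ is no smaller than $hk-\tfrac{8B^2}{c}\mathrm{Error}(\pi,g_D)$, which the hypothesis $\mathrm{Error}(\pi,g_D)\le chk/(8B^2)$ keeps positive.

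Finally I would decompose $\mathcal{L}_{IV}(\hat\theta;\eta_0)-\mathcal{L}_{IV}(\theta_*;\eta_0)=\big[\mathcal{L}_{IV}(\hat\theta;\hat\eta)-\mathcal{L}_{IV}(\theta_*;\hat\eta)\big]+\big[\delta\mathcal{L}(\hat\theta)-\delta\mathcal{L}(\theta_*)\big]$ with $\delta\mathcal{L}(\theta)=\mathcal{L}_{IV}(\theta;\eta_0)-\mathcal{L}_{IV}(\theta;\hat\eta)$; the first bracket is $\le R_n^2$ because $\hat\theta$ is an approximate minimizer of $\mathcal{L}_{IV}(\cdot;\hat\eta)$ over $\Theta\ni\theta_*$, and the second equals $-\E[(\hat q-q_0)(\hat\theta-\theta_*)(\hat\theta+\theta_*)]+2\E[(\hat\ell-\ell_0)(\hat\theta-\theta_*)]$, which after bounding $|\hat\theta+\theta_*|\le2B$ and Cauchy--Schwarz is at most $C\big(B\,\mathrm{Error}(\pi,g_D)+\mathrm{Error}(\pi,g_Y)\big)\|\hat\theta-\theta_*\|_{D=1,CM}$. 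Combining with the curvature bound of the first step yields $\big(hk-\tfrac{8B^2}{c}\mathrm{Error}(\pi,g_D)\big)\|\hat\theta-\theta_*\|_{D=1,CM}^2\le R_n^2+C\big(B\,\mathrm{Error}(\pi,g_D)+\mathrm{Error}(\pi,g_Y)\big)\|\hat\theta-\theta_*\|_{D=1,CM}$, and solving this quadratic inequality (AM--GM on the cross term, absorbing half of it into the left side) gives the stated rate.

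The main obstacle --- and the only real departure from the CATT proof --- is that the coefficient of $\theta(X)^2$ in $\mathcal{L}_{IV}$ is the nuisance-dependent object $\widehat Z(\Delta D-g_D(W))$ rather than the nuisance-free $D$ of Section~\ref{sec:catt}, so the plug-in loss can lose (strong) convexity when $\hat\pi,\hat g_D$ are poor; showing that it does not requires both the uniform bound $B$ on $\Theta$ and the smallness of $\mathrm{Error}(\pi,g_D)$, and threading these through while keeping the final distance in the complier-among-treated norm $\|\cdot\|_{D=1,CM}$ (rather than the marginal $L_2(X)$ norm Cauchy--Schwarz hands you) is where essentially all of the extra constants $hk$, $c$, $B$ in the bound come from. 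The orthogonality computation of the second step and the strong-convexity/optimality argument of the first are otherwise routine adaptations of the CATT case.
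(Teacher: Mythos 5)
Your proposal is correct and follows essentially the same route as the paper: the paper proves this theorem by verifying the four assumptions of Theorem 1 of the orthogonal statistical learning framework (via Lemma~\ref{lemma:loss_difference_iv} for the bilinear product-of-errors remainders and a separate computation showing the plug-in curvature degrades by at most $\tfrac{8B^2}{c}\mathrm{Error}(\pi,g_D)$), whereas you inline the proof of that black-box theorem through the same decomposition of the population excess risk into the plug-in excess risk plus a Neyman-orthogonality-induced perturbation, followed by strong convexity and AM--GM. All the substantive ingredients --- the identification of the curvature $hk$ through the complier probability, the need for the uniform bound $B$ and the smallness of $\mathrm{Error}(\pi,g_D)$ to preserve convexity of the nuisance-dependent quadratic coefficient, and the overlap-based passage to the complier norm --- match the paper's argument.
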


\section{Alternate Assumptions: Lagged Dependent Outcome}\label{appen:lagged}
The parallel trends assumption guards against linear, time invariant, additive confounding. However, this may be unrealistic in practice. For instance, it might be sensible for the increment with respect to time to depend on the initial level of the outcome, i.e. $Y_1(0) - Y_0(0) \propto Y_0(0)$. One may also be interested in the natural extension to the parallel trends assumptions that also accounts for more complicating confounding pattern. An popular alternative to model panel data is through the lagged dependent variable assumption.

\begin{definition}[Outcome Support]
    Let $\mathbb{Y}_{dt}$ denote the support of the outcome at time $t$ for the cohort with treatment assignment $d$.
\end{definition}

\begin{assumption}[Lagged Dependent Outcome with Covariates] \label{assum:lag-dep-outcome}
$\E[Y_1(0)|Y_0(0)=y,D=1, W] = \E[Y_1(0)|Y_0(0)=y,D=0, W] $ for all $y \in \mathbb{Y}_{00}$, and $x \in \mathbb{X}$.
\end{assumption}
Note that this assumption may be seen as a special case of Assumption \ref{assum:cpta}, where the conditioning variable also includes the pre-treatment outcome as well as the observed covariates. This assumption might be more convincing in some practical applications. For instance, in wage studies, current income is generally believed to he highly dependent on past income. However, in cases where the distribution of outcome is significantly different between the treated and untreated groups, this assumption might lead an increase in bias due to matching the pre-treatment outcomes, as shown in \cite{daw2018matching}.

\begin{assumption}[Overlap in Pre-treatment Outcome]\label{assum:outcome-overlap}
    $\mathbb{Y}_{10} \subseteq \mathbb{Y}_{00}$
\end{assumption}
Note this is a testable assumption, and one can also perform a diagnostic test to ensure that the treated and control outcome distributions have sufficient overlap. 

\begin{proposition}\label{prop:lagged}
    Under Assumptions \ref{assum:lag-dep-outcome} and \ref{assum:outcome-overlap}, the CATT, $\theta_0(W)$, can be identified as:
    \begin{align*}
        \theta_0(X) = \E[Y_{1}(1) - Y_{1}(0)|D=1, X] = \E[Y_1|D=1,X] - \E[g(Y_0,W)|D=1,X]
    \end{align*}
    where $g(y,W) = \E[Y_1|D=0, Y_0=y, W]$
\end{proposition}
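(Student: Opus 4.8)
The plan is to reduce the identification of $\theta_0(X)$ to identifying the single counterfactual quantity $\E[Y_1(0)\mid D=1,X]$, and then to express that quantity through the observable regression $g$ by iterating expectations over the pre-treatment outcome. First I would split $\theta_0(X)=\E[Y_1(1)\mid D=1,X]-\E[Y_1(0)\mid D=1,X]$ and note that, since the treated cohort is observed under treatment at $t=1$, consistency gives $\E[Y_1(1)\mid D=1,X]=\E[Y_1\mid D=1,X]$, which is directly identified from the data. It then remains only to show $\E[Y_1(0)\mid D=1,X]=\E[g(Y_0,W)\mid D=1,X]$.

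For this counterfactual term I would apply the law of iterated expectations, conditioning additionally on the untreated pre-period outcome $Y_0(0)$ and on the covariates $W$ (recalling $X\subseteq W$):
\begin{align*}
\E[Y_1(0)\mid D=1,X]=\E\big[\,\E[Y_1(0)\mid Y_0(0),W,D=1]\ \big|\ D=1,X\,\big].
\end{align*}
By Assumption~\ref{assum:lag-dep-outcome}, for every $y\in\mathbb{Y}_{00}$ the inner conditional expectation equals $\E[Y_1(0)\mid Y_0(0)=y,W,D=0]$. For the control cohort both periods are untreated under the standing setup (everyone is untreated at $t=0$ and the control group remains untreated at $t=1$), so consistency yields $Y_0(0)=Y_0$ and $Y_1(0)=Y_1$ on the event $D=0$; hence $\E[Y_1(0)\mid Y_0(0)=y,W,D=0]=\E[Y_1\mid Y_0=y,W,D=0]=g(y,W)$. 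Finally, treated units are also untreated at $t=0$, so $Y_0(0)=Y_0$ on the event $D=1$ as well; substituting $g(Y_0,W)$ back into the outer expectation gives $\E[Y_1(0)\mid D=1,X]=\E[g(Y_0,W)\mid D=1,X]$, which combined with the first step proves the identity.

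\textbf{Where the overlap assumption enters, and the main obstacle.} The substitution above is only legitimate where the objects are defined: the regression $g(y,W)$ is defined only for $y$ in the support of $Y_0$ among controls, and Assumption~\ref{assum:lag-dep-outcome} is asserted only for $y\in\mathbb{Y}_{00}$. Yet the outer expectation $\E[\,\cdot\mid D=1,X]$ integrates $g(Y_0,W)$ over the distribution of $Y_0$ among the treated, whose support is $\mathbb{Y}_{10}$; Assumption~\ref{assum:outcome-overlap}, $\mathbb{Y}_{10}\subseteq\mathbb{Y}_{00}$, guarantees that $g(Y_0,W)$ is well-defined and that the lagged-dependent-outcome equality applies $\mathbb{P}(\cdot\mid D=1)$-almost surely, so the replacement step is valid. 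I expect the main obstacle to be exactly this measure-theoretic bookkeeping — ensuring the conditioning events have positive probability and that the support containment holds at the level of the conditional (on $W$) distributions, not merely marginally — rather than any substantive algebra; once the supports are controlled, the identity follows from one application each of consistency and iterated expectations.
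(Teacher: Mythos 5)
Your proposal is correct and follows essentially the same route as the paper's proof: iterate expectations over the pre-treatment outcome and $W$, apply Assumption~\ref{assum:lag-dep-outcome} to switch the conditioning from $D=1$ to $D=0$, invoke consistency to replace potential outcomes with observed ones, and use Assumption~\ref{assum:outcome-overlap} to justify that $g(Y_0,W)$ is well defined on the treated support. Your treatment is in fact slightly more explicit than the paper's (which writes the integral directly over $\mathbb{Y}_{10}$ and leaves the $Y_0(0)=Y_0$ identification and the consistency step $\E[Y_1(1)\mid D=1,X]=\E[Y_1\mid D=1,X]$ implicit), but there is no substantive difference in approach.
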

\begin{proof}
    \begin{align*}
    \E[Y_1(0)|D=1,X] =~& \int_{\mathbb{Y}_{10}}\E[Y_1(0)|Y_0=y,D=1,W]p(Y_0=y, W|D=1,X)dy\\
    =~& \int_{\mathbb{Y}_{10}}\E[Y_1(0)|Y_0=y,D=0, W]p(Y_0=y, W|D=1,X)dy\\
    =~&\int_{\mathbb{Y}_{10}}\E[Y_1|Y_0=y,D=0, W]p(Y_0=y, W|D=1,X)dy\\
    =~& \E[g(Y_0,X)|D=1,X]
\end{align*}
\end{proof}

Similarly, we may also replace the parallel assumptions for IV-CATT by the conditional lagged dependent variable assumptions for both the outcome and treatment. 

\begin{assumption}[Lagged Treatment]\label{assum:lagged_Z_D}
    $$
    \E[D_{1}(0)|Z=0, W, D_{0}] = \E[D_{1}(0)|Z=1, W, D_{0}]
    $$
\end{assumption}
\begin{assumption}[Lagged Outcome]\label{assum:lagged_Z_Y}
    $$
    \E[Y_{1}(D_{1}(0)) |Z=0, W, Y_{0}] = \E[Y_{1}(D_{1}(0)) |Z=1, W, Y_{0}]
    $$
\end{assumption}
Under the lagged outcome framework, we need a slightly different notion of strong instruments as in the PTA framework.
\begin{assumption}[Strong Instrument under Lagged Outcome]\label{assum:strong_iv_lagged}
    There exist $c_z>0$ such that $$\left|\E[D_1 - \E[D_1|Z=1, W, D_0]|Z=1, X]\right| \geq c_z$$
\end{assumption}

\begin{proposition}\label{prop:IV-lagged}
    Under Assumptions \ref{assum:exo-iv}, \ref{Assum:no-defier}, \ref{assum:no-an-iv}, \ref{assum:lagged_Z_D}, and \ref{assum:lagged_Z_Y}, the CLATE, $\theta_0(W)$, can be identified as:
    \begin{align*}
        \theta_0(X) = \frac{\E[Y_1 - \E[Y_1\mid Z=0, Y_0, W]\mid Z=1,X]}{\E[D_1 - \E[D_1 \mid Z=0, D_0, W]\mid Z=1,X]}
    \end{align*}
\end{proposition}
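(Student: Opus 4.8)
The plan is to prove Proposition~\ref{prop:IV-lagged} by following the same two-step identification argument used in the proof of Proposition~\ref{prop:lagged}: first show that the counterfactual quantities $\E[Y_1(D_1(0))\mid Z=1,X]$ and $\E[D_1(0)\mid Z=1,X]$ can each be recovered by conditioning on the lagged variable and switching from the exposed ($Z=1$) to the unexposed ($Z=0$) population, and then assemble these pieces into a Wald-type ratio. Throughout, I would freely use the no-carryover and exclusion restrictions (Assumption~\ref{assum:exo-iv}), which let me write $Y_t(\mathbf d,z)=Y_t(d_t)$ and reduce treatment paths to single-period treatments, and Assumption~\ref{assum:no-an-iv}, which gives $D_0(1)=D_0(0)$ for exposed units so that $D_0$ is exogenous to $Z$ in the first period.

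First I would handle the outcome numerator. Using the law of iterated expectations over the pre-treatment outcome $Y_0$ and covariates $W$,
\begin{align*}
\E[Y_1(D_1(0))\mid Z=1,X]
&= \E\big[\E[Y_1(D_1(0))\mid Z=1,Y_0,W]\,\big|\,Z=1,X\big]\\
&= \E\big[\E[Y_1(D_1(0))\mid Z=0,Y_0,W]\,\big|\,Z=1,X\big],
\end{align*}
where the second equality is exactly Assumption~\ref{assum:lagged_Z_Y} (Lagged Outcome). On the event $\{Z=0\}$ the realized treatment path is the untreated-exposure path, and by no-carryover and exclusion restriction the realized second-period outcome equals $Y_1(D_1(0))$, so $\E[Y_1(D_1(0))\mid Z=0,Y_0,W]=\E[Y_1\mid Z=0,Y_0,W]$. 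Hence $\E[Y_1(D_1(0))\mid Z=1,X]=\E[\E[Y_1\mid Z=0,Y_0,W]\mid Z=1,X]$, and subtracting this from $\E[Y_1\mid Z=1,X]$ gives the numerator $\E[Y_1-\E[Y_1\mid Z=0,Y_0,W]\mid Z=1,X]$. Next I would repeat the identical argument for the treatment, this time conditioning on the lagged treatment $D_0$ and invoking Assumption~\ref{assum:lagged_Z_D}: since under Assumption~\ref{assum:no-an-iv} the first-period treatment for exposed units does not respond to exposure, $\E[D_1(0)\mid Z=0,D_0,W]=\E[D_1\mid Z=0,D_0,W]$, which yields the denominator $\E[D_1-\E[D_1\mid Z=0,D_0,W]\mid Z=1,X]$.

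Finally I would combine the two. Under monotonicity (Assumption~\ref{Assum:no-defier}), the standard IV/LATE decomposition gives, for any function of the form $\E[S_1-\E[S_1\mid Z=0,\cdot,W]\mid Z=1,X]$ with $S\in\{Y,D\}$, a complier-weighted interpretation; in particular the numerator equals the denominator times $\theta_0(X)=\E[Y_1(1)-Y_1(0)\mid D_1(1)>D_1(0),Z=1,X]$. Dividing through — which is legitimate because Assumption~\ref{assum:strong_iv_lagged} bounds the denominator away from zero — produces the claimed ratio. The main obstacle I anticipate is the combine step rather than the two reduction steps: one has to be careful that the "reduced-form" numerator $\E[Y_1-\E[Y_1\mid Z=0,Y_0,W]\mid Z=1,X]$ genuinely factors as (first-stage)$\times$(complier effect) at the conditional-on-$X$ level, since the conditioning sets for the outcome ($Y_0,W$) and the treatment ($D_0,W$) differ; establishing this cleanly requires writing $Y_1(D_1(0))$ versus the realized $Y_1=Y_1(D_1)$ and using exclusion plus monotonicity to express the difference as $(D_1-D_1(0))\cdot(Y_1(1)-Y_1(0))$ before taking conditional expectations, mirroring the algebra behind Lemma~\ref{lemma:ortho_moment_IV}.
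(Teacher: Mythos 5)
Your proposal is correct and takes essentially the same approach as the paper: the paper states Proposition~\ref{prop:IV-lagged} without an explicit proof, and your argument is precisely the natural combination of the paper's proof of Proposition~\ref{prop:IV} (reduce the numerator and denominator to $\E[Y_1(D_1(1))-Y_1(D_1(0))\mid Z=1,X]$ and $\E[D_1(1)-D_1(0)\mid Z=1,X]$, write the former as $\E[(D_1(1)-D_1(0))(Y_1(1)-Y_1(0))\mid Z=1,X]$, and apply monotonicity) with the lagged-conditioning substitution used in the proof of Proposition~\ref{prop:lagged}. The obstacle you flag at the end is not actually one: after the two reduction steps both reduced forms are conditioned only on $(Z=1,X)$ --- the differing inner conditioning sets $(Y_0,W)$ versus $(D_0,W)$ have already been integrated out --- so the combine step is verbatim the final display in the proof of Proposition~\ref{prop:IV}.
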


\textbf{Unifying the two assumptions:} First we observe that both Proposition \ref{prop:did} and \ref{prop:lagged} shares the same general form:
\begin{align*}
    \theta_0(X) = \E[S - g(V)|D=1, X]
\end{align*} where $S$ is an observed outcome random variable and $g(V)$ is a nuisance function that is the conditional expectation $\E[S|D=0, V]$ on some covariates $V$, which is a superset of $X$.
Under the parallel trends assumption, $S = Y_1-Y_0$ and $V = W$, and under the lagged outcome assumption, $S = Y_1$ and $V = [W, Y_0]$. Thus, we see that the results in Section \ref{sec:catt} can be generalized to the lagged dependent variable assumption. For IV-DID, similarly to the standard DiD case, when considering $X \subset W$, we can rewrite the identification in Proposition \ref{prop:IV} so that the CLATE is the solution as:
\begin{align*}
    \theta_0(X) = \frac{ \E[S_Y - g_{Y}(V_Y)\mid Z=1, X]}{ \E[S_D - g_{D}(V_D)\mid Z=1, X]}
\end{align*} where $S_r$ is an observed outcome random variable for $r=Y,D$, and $g_r(V)$ is a nuisance function that is the conditional expectation $\E[S_r|Z=0, V]$ on some covariates $V$, which is a superset of $X$.

\section{Conditioning on the full set of $W$}\label{appen:full_x}
\subsection{Standard DID}
Here, we consider the case where $X=W$, and are interested in estimating:

$$
\theta_0(W) = \E[Y_{1}(1) - Y_{1}(0)| D_1=1, W]
$$

Leveraging the no-anticipation and (conditional) parallel trends assumption, we can identify this as:
$$
\theta_0(W)  = \E[Y_{1} - Y_{0}| D_{1}=1, W] - \E[Y_{1}-Y_{0}| D_{1}=0, W]
$$
Note that this shares the same form of identification with the conditional average treatment effect (CATE) under conditional ignorability, but with the differences as the outcome:
$$
CATE = \E[Y(1)-Y(0)|W] = \E[Y|W, D=1] - \E[Y|W,D=0]
$$
In other words, the meta-learners of CATT can be constructed the same way that was constructed for CATE using the difference in outcome. For instance, the doubly-robust pseudo-outcome can be constructed as:
$$
Y^{DR} = g(1,W) - g(0,W) + \left(\frac{D}{\pi(W)} - \frac{1-D}{1-\pi(W)}\right)(Y_1 - Y_0 - g(D,W))
$$ 
where $g(D,W)$ is an estimator for the conditional expectation $\E[Y_{1}-Y_{0}|W,D]$, and $\pi(W)$ is an estimator of the propensity $\mathbb{P}(D=1|W)$. The nuisance functions $g(D,W)$ and $\pi(W)$ may be estimated using any ML methods. 

As in \cite{DR-learner}, the doubly robust learner (DR-learner) can be constructed as $ \theta(W) = \E[Y^{DR}|W]$. Note that due to the asymmetry of the parallel trends assumption, this estimator gives the conditional average treatment effect of the treated. If we further assume that the treatment effects are also mean independent of the treatment conditional on $W$ (i.e. $\E[Y_1(1) - Y_1(0)|D=1,W]=\E[Y_1(1) - Y_1(0)|D=0,W]$), then the CATE estimator on the difference in the outcomes identifies the conditional average treatment effects.

However, this CATE pseudo-outcome will give the biased estimate of the CATT when projecting on a subset of covariates, i.e. $\theta^{CATE}(X) = \E[Y^{DR}|X]$ where $X\subset W$. Here we see that $\theta^{CATE}(X) = \E[\E[Y^{DR}|W]|X] = \E[\E[Y_{1}(1) - Y_{1}(0)| D=1, W]|X] \neq \E[\E[Y_{1}(1) - Y_{1}(0)| D=1, W]| D=1, X] = \E[Y_{1}(1) - Y_{1}(0)| D_1=1, X]$. This difference will be more pronounced for datasets where there is a big difference in the covariate distribution between the treated and un-treated groups. 

\subsection{IV-DID}
In this section, we show that when conditioning on the full set of variables $W$, the CLATE can be estimated by the DR-IV learner in \cite{DR-IV}. By the standard LATE identification argument we can write:
\begin{align*}
    \E[Y_1(1)-Y_1(0)\mid D_1(1)>D_1(0), Z=1, W]
    =~& \frac{\E[(Y_1(1)-Y_1(0))\, 1\{D_1(1)>D_1(0)\}\mid Z=1, W]}{\mathbb{P}(D_1(1)>D_1(0)\mid Z=1, W)}\\
    =~& \frac{\E[(Y_1(D_1(1))-Y_1(D_1(0)))\, 1\{D_1(1)>D_1(0)\}\mid Z=1, W]}{\E[D_1(1) - D_1(0)\mid Z=1, W]}\\
    =~& \frac{\E[Y_1(D_1(1))-Y_1(D_1(0))\mid Z=1, W]}{\E[D_1(1)-D_1(0)\mid Z=1, W]}
\end{align*}
Under the parallel trends assumption in the outcome, the numerator is identified as:
\begin{align*}
    \E[Y_1(D_1(1))-Y_1(D_1(0))\mid Z=1, W] 
    =~& \E[Y_1 - Y_0\mid Z=1, W] - \E[Y_1(D(0)) - Y_0\mid Z=1, W]\\
    =~& \E[Y_1 - Y_0\mid Z=1, W] - \E[Y_1 - Y_0\mid Z=0, W]\\
\end{align*}
Moreover, under the parallel trends assumption in the treatment, the denominator is identified as:
\begin{align*}
    \E[D_1(1)-D_1(0)\mid Z=1, W] 
    =~& \E[D_1 - D_0\mid Z=1, W] - \E[D_1(0) - D_0\mid Z=1, W]\\
    =~& \E[D_1 - D_0\mid Z=1, W] - \E[D_1 - D_0\mid Z=1, W]\\
\end{align*}

For brevity of notation, let $Y$ denote $Y_1-Y_0$ and let $D$ denote $D_1-D_0$Thus, under the PTA assumptions the effect is identified as:
\begin{align*}
    \theta_0(W) 
    ~&= \frac{\E[Y\mid Z=1, W] - \E[Y\mid Z=0, W]}{\E[D\mid Z=1, W] -  \E[D\mid Z=0, W]}
\end{align*}
Moreover, note that we can also write these quantities as conditional covariances. 

In particular, let $\alpha(W) = \E[Y\mid Z=1, W] - \E[Y\mid Z=0, W]$ and $\gamma(W) = \E[Y\mid Z=0, W]$. Without loss of generality we can write:
\begin{align*}
    \E[Y\mid Z, W] = Z (\E[Y\mid Z=1, W] - \E[Y\mid Z=0, W]) + \E[Y\mid Z=0, W]  = Z \alpha(W) + \gamma(W)
\end{align*}
Thus we can write:
\begin{align*}
    Y =~& Z \alpha(W) + \gamma(W) + \epsilon, & \E[\epsilon \mid Z, W] = 0
\end{align*}
Then we have:
\begin{align*}
    \text{Cov}(Y, Z\mid W) = \E[\tilde{Y} \tilde{Z}\mid W] = \E[Y \tilde{Z} \mid W]
\end{align*}
where $\tilde{Y} = Y - \E[Y\mid W]$ and $\tilde{Z}=Z-\pi_0(W)=Z-\E[Z\mid W]$.

Moreover, note that:
\begin{align*}
    \E[Y \tilde{Z} \mid W] =~& \E[\alpha(W) Z \tilde{Z}] + \E[\gamma(W) \tilde{Z}\mid W] + \E[\epsilon \tilde{Z}\mid W]\\
    =~& \alpha(W) \text{Var}(Z\mid W) + \gamma(W) \E[\tilde{Z}\mid W] + \E[\E[\epsilon\mid Z, W] \tilde{Z}\mid W] \\
    =~& \alpha(W) \text{Var}(Z\mid W)
\end{align*}
Thus we have:
\begin{align*}
    \text{Cov}(Y, Z\mid W) = \E[Y\tilde{Z}\mid W] = (\E[Y\mid Z=1, W] - \E[Y\mid Z=0, W])\, \Var(Z\mid W)
\end{align*}

Similarly, we can derive:
\begin{align*}
    \text{Cov}(D, Z\mid W) = \E[D\tilde{Z}\mid W] = (\E[D\mid Z=1, W] - \E[D\mid Z=0, W])\,\Var(Z\mid W)
\end{align*}

Thus we have deduced that we can equivalently identify the conditional LATE among the exposed as:
\begin{align*}
    \theta_0(W) ~&= \frac{\E[Y \tilde{Z}\mid W]}{\E[D \tilde{Z}\mid W]} = \frac{\text{Cov}(Y,Z\mid W)}{\text{Cov}(D, Z\mid W)} =\frac{(\E[Y\mid Z=1, W] - \E[Y\mid Z=0, W])\,\Var(Z\mid W)}{(\E[D\mid Z=1, W] - \E[D\mid Z=0, W])\, \Var(Z\mid W)}\\
    ~&= \frac{\E[Y\mid Z=1, W] - \E[Y\mid Z=0, W]}{\E[D\mid Z=1, W] - \E[D\mid Z=0, W]}
\end{align*}

Let $\hat{\alpha}$ be an estimate of:
\begin{align*}
    a_0(W) ~&:= \E[Y \tilde{Z}\mid W] 
    = (\E[Y\mid Z=1, W] - \E[Y\mid Z=0, W])\, \Var(Z\mid W)
\end{align*}
and $\hat{\beta}$ an estimate of:
\begin{align*}
    \beta_0(W) ~&:= \E[D \tilde{Z}\mid W] 
    = (\E[D\mid Z=1, W] - \E[D\mid Z=0, W])\, \Var(Z\mid W)
\end{align*}
and let $\hat{\theta}=\hat{\alpha}/\hat{\beta}$. Then we can construct the random variable
\begin{align*}
    \hat{Y}(\hat{g}) = \hat{\theta}(W) + \frac{(Y - \hat{\theta}(W) D) \tilde{Z}}{\hat{\beta}(W)}
\end{align*} 
and the moment equation for the conditional LATT is:
\begin{align*}
    \phi = \E[\hat{Y}(\hat{g}) - \theta(W)|W]
\end{align*}

Similar to the standard DiD case, projecting onto a lower dimensional subset of covariates will give a biased estimated of the CLATE.

\section{Proofs}
\subsection{Identification}
\begin{proof}[Proof of Proposition \ref{prop:did}]
\begin{align*}
    ~&\E[Y_{1}(1) - Y_{1}(0)|D=1, X]\\
    =~& \E[Y_1(1) - Y_0(1)|D=1,X] - \E[Y_1(0) - Y_0(1)|D=1,X]\\
    =~& \E[Y_1 - Y_0|D=1, X] - \E[Y_1(0) - Y_0(0)|D=1,X]\tag{\text{By Assumption~\ref{assum:no-an}}} \\
    =~& \E[Y_1 - Y_0|D=1, X] - \E\{\E[Y_1(0) - Y_0(0)|D=1,W]|D=1, X\}\\
    =~& \E[Y_1 - Y_0|D=1, X] - \E\{\E[Y_1(0) - Y_0(0)|D=0,W]|D=1, X\} \tag{\text{By Assumption~\ref{assum:cpta}}}\\
    =~& \E[Y_1 - Y_0 - \E[Y_1(0) - Y_0(0)|D=0,W] | D=1, X] \label{eq:iden_sub}
\end{align*}
\end{proof}

\begin{proof}[Proof of Proposition \ref{prop:IV}]
    Here we want to show that the CLATE, $\theta_0(X)$, can be identified as:
    \begin{align*}
        \theta_0(X) = \frac{\E[Y_1 - Y_0 - \E[Y_1 - Y_0\mid Z=0, W]\mid Z=1,X]}{\E[D_1 - D_0 - \E[D_1 - D_0\mid Z=0, W]\mid Z=1,X]}
    \end{align*}

    We first analyze the denominator:
    \begin{align*}
        ~&\E[D_1 - D_0 - \E[D_1 - D_0\mid Z=0, X]\mid Z=1,X]\\
        =~& \E[D_1(1) - D_0(1) - \E[D_1(0) - D_0(0)\mid Z=0, W]\mid Z=1,X] \\
        =~& \E[D_1(1) -D_1(0) + D_1(0)- D_0(1) - \E[D_1(0) - D_0(0)\mid Z=0, W]\mid Z=1,X] \\
        =~& \E[D_1(1) -D_1(0) + D_1(0)- D_0(0) - \E[D_1(0) - D_0(0)\mid Z=0, W]\mid Z=1,X] \tag{By Assumption \ref{assum:no-an-iv}}\\
        =~& \E[D_1(1) -D_1(0) + \E[D_1(0)- D_0(0)|Z=1,W] - \E[D_1(0) - D_0(0)\mid Z=0, W]\mid Z=1,X] \\
        =~& \E[D_1(1) -D_1(0) \mid Z=1,X] \tag{By Assumption \ref{assum:PTA_Z_D}}\\
        =~& \mathbb{P}(D_1(1) >D_1(0)|Z=1, X) \tag{By Assumption \ref{Assum:no-defier}}
    \end{align*}

    Now we analyze the numerator:
    \begin{align*}
        ~&\E[Y_1 - Y_0 - \E[Y_1 - Y_0\mid Z=0, W]\mid Z=1,X]\\
        =~& \E[Y_1(D(1)) - Y_0(D(1)) - \E[Y_1(D(0)) - Y_0(D(0))\mid Z=0, W]\mid Z=1,X]\\
        =~& \E[Y_1(D(1)) -Y_1(D(0)) + Y_1(D(0)) - Y_0(D(0)) - \E[Y_1(D(0)) - Y_0(D(0))\mid Z=0, W]\mid Z=1,X] \tag{By Assumption \ref{assum:no-an-iv}}\\
        =~& \E[Y_1(D(1)) -Y_1(D(0)) + \E[Y_1(D(0)) - Y_0(D(0))|Z=1, W] - \E[Y_1(D(0)) - Y_0(D(0))\mid Z=0, W]\mid Z=1,X] \\
        =~& \E[Y_1(D(1)) -Y_1(D(0))\mid Z=1,X] \tag{By Assumption \ref{assum:PTA_Z_Y}}\\
        =~& \E[(D(1) - D(0))(Y_1(1) -Y_1(0))\mid Z=1,X] \tag{By Assumption \ref{Assum:no-defier}}\\
        =~& \E[Y_1(1) -Y_1(0)\mid Z=1,D(1)>D(0),X]\mathbb{P}(D(1)>D(0)|Z=1,X)
    \end{align*}
    Thus, combining them, we get:
    \begin{align*}
        \frac{\E[Y_1 - Y_0 - \E[Y_1 - Y_0\mid Z=0, W]\mid Z=1,X]}{\E[D_1 - D_0 - \E[D_1 - D_0\mid Z=0, W]\mid Z=1,X]} = \E[Y_1(1) -Y_1(0)\mid Z=1,D(1)>D(0),X]
    \end{align*}
\end{proof}

\subsection{Orthogonal Moments}
\begin{proof}[Proof of Lemma \ref{lemma:ortho_moment}]
First, we show that the true CATT function $\theta_0(X) = \E[Y_1(1) - Y_1(0)|D=1, X]$ is the solution to the moment:
\begin{align*}
    \E\left[m(Z; \theta_0, g_0, \pi_0)|X\right] = \E\left[\left(\frac{D-\pi_0(W)}{(1-\pi_0(W))}\right)(\Delta Y - g_0(W)) - D\theta_0(X) \,\middle\vert\, X\right] =0
\end{align*}
where $Z = (W,D,Y)$, $\Delta Y = Y_1 - Y_0$, $g_0(W) = \E[\Delta Y|D=0,W]$, $\pi_0(W) = \mathbb{P}(D=1|W)$.
First, since this moment is conditioned on $X$, we can multiply by any functions of $X$. Thus, we can divide by the propensity with $X$, i.e. $\gamma_0(X) = \mathbb{P}(D=1|X)$, which is bounded away from zero:
\begin{gather*}
    \E\left[\left(\frac{D-\pi_0(W)}{(1-\pi_0(W))}\right)(\Delta Y - g_0(W)) - D\theta_0(X) \,\middle\vert\, X\right] = 0\\
    \Updownarrow\\
     \E\left[\left(\frac{D-\pi_0(W)}{(1-\pi_0(W))\gamma_0(X)}\right)(\Delta Y - g_0(W)) - \frac{D}{\gamma_0(X)}\theta_0(X) \,\middle\vert\, X\right] = 0
\end{gather*}
The latter term is $\E\left[\frac{D}{\gamma_0(X)}\theta_0(X) \,\middle\vert\, X\right] = \E[\theta_0(X)|D=1, X] = \theta_0(X)$. Now we consider the first term:
\begin{align*}
    ~&\E\left[\left(\frac{D-\pi_0(W)}{(1-\pi_0(W))\gamma_0(X)}\right)(\Delta Y - g_0(W)) \,\middle\vert\, X\right]\\
    =~& \E\left[\left(\frac{D}{\gamma_0(X)} - \frac{(1-D)\pi_0(W)}{(1-\pi_0(W))\gamma_0(X)}\right)(\Delta Y - g_0(W)) \,\middle\vert\, X\right]\\
    =~&\E\left[\frac{D}{\gamma_0(X)}(\Delta Y - g_0(W)) \,\middle\vert\, X\right] - \E\left[\frac{(1-D)\pi_0(W)}{(1-\pi_0(W))\gamma_0(X)}(\Delta Y - g_0(W)) \,\middle\vert\, X\right]\\
    =~&\E\left[\Delta Y - g_0(W)\,\middle\vert\, D=1, X\right] - \E\left[\E\left[\frac{(1-D)\pi_0(W)}{(1-\pi_0(W))\gamma_0(X)}(\Delta Y - g_0(W)) \,\middle\vert\, W\right]\,\middle\vert\, X\right]\\
    =~& \theta_0(X) - \E\left[\E\left[\frac{\pi_0(W)}{\gamma_0(X)}(\Delta Y - g_0(W)) \,\middle\vert\, D=0, W\right]\,\middle\vert\, X\right]\\
    =~& \theta_0(X) - \E\left[\frac{\pi_0(W)}{\gamma_0(X)}\E\left[\Delta Y - g_0(W)\,\middle\vert\, D=0, W\right]\,\middle\vert\, X\right]\\
    =~& \theta_0(X)
\end{align*}
Thus, the moment condition is satisfied for the true CATT $\theta_0(X)$.
Now, we show that the moment is Neyman orthogonal with respect to all nuisance functions. It suffices to show that the directional derivative with respect to all the nuisance functions are $0$ when evaluated at the true nuisance and target functions. Recall that the directional derivative of a functional $m(Z;f)$ with respect to the function $f(W)$ in the direction of $\Delta f(W)$ is defined as: $\partial_{f}\E[m(z;f)][\Delta f] = \frac{d}{dt}\E[m(z;f+t\cdot \Delta f)]\Big|_{t=0}$.

First, we look the directional derivative with respect to the outcome regression $g(W)$:
\begin{align*}
    \partial_{g}\E[m(Z;\theta, g,\pi)|X][\Delta g]|_{\theta_0, g_0, \pi_0} =~& \E\left[\frac{D-\pi_0(W)}{1-\pi_0(W)}\Delta g(W)\,\middle\vert\, X\right]\\
    =~&\E\left[\E\left[\frac{D-\pi_0(W)}{1-\pi_0(W)}\,\middle\vert\, W\right]\Delta g(W)\,\middle\vert\, X\right]\\
    =~&\E\left[\frac{\pi_0(W)-\pi_0(W)}{1-\pi_0(W)}\Delta g(W)\,\middle\vert\, X\right]=0\\
\end{align*}
Now, we look at the the directional derivative with respect to the outcome regression $\pi(W)$:
\begin{align*}
    \partial_{\pi}\E[m(Z;\theta, g,\pi)|X][\Delta \pi]|_{\theta_0, g_0, \pi_0} =~& \E\left[\left(\frac{-(1-\pi_0(W))\Delta \pi(W) + (D - \pi_0(W))\Delta \pi(W)}{(1-\pi_0(W))^2}\right)(\Delta Y - g_0(W))\,\middle\vert\, X\right]\\
    =~& \E\left[\left(\frac{\Delta \pi(W) (D-1)}{(1-\pi_0(W))^2}\right)(\Delta Y - g_0(W))\,\middle\vert\, X\right]\\
    =~& \E\left[\E\left[\left(\frac{\Delta \pi(W) (D-1)}{(1-\pi_0(W))^2}\right)(\Delta Y - g_0(W))\,\middle\vert\,W\right]\,\middle\vert\, X\right]\\
    =~&\E\left[-\left(\frac{\Delta \pi(W)}{1-\pi_0(W)}\right)\E\left[\Delta Y - g_0(W))\,\middle\vert\,D=0,W\right]\,\middle\vert\, X\right] = 0\\
\end{align*}
Thus, we have shown that this moment is Neyman orthogonal with respect to all nuisances.
\end{proof}

\begin{proof}[Proof of Theorem \ref{thm:gen_cov_shift}]
 First, we show that the true estimand $\theta_0(X) = \E_{source}[m(Z;g_0)|X]$ satisfies the following conditional moment restriction \begin{align*}
    \E\Big[ m^{DR}(Z; \theta_0, g_0, \pi_0, \alpha_0)\Big|X\Big] = \E\left[E(m(Z;g_0)) - \theta_0(X)) + \frac{(1-E)\pi_0(W)}{1-\pi_0(W)}\alpha_0(W)(Y-g_0(W)) \,\middle\vert\, X \right] = 0
\end{align*}
where $\pi_0(W) = \mathbb{P}(E=1|W)$ and $\alpha(W)$ is the Riesz representer of $\E_{s}[m(Z;g)|X]$.
Similar to the earlier the proof of Lemma \ref{lemma:ortho_moment}, we can divide both sides of the moment equation by $\gamma_0(X) = \mathbb{P}(E=1|X)$ since it is bounded away from $0$. So it is equivalent to show:
\begin{align*}
    \E\left[\frac{E}{\gamma_0(X)}(m(Z;g_0)) - \theta_0(X)) + \frac{(1-E)\pi_0(W)}{(1-\pi_0(W))\gamma_0(X)}\alpha_0(W)(Y-g_0(W)) \,\middle\vert\, X \right] = 0
\end{align*}
First, let's look at the first term:
\begin{align*}
    \E\left[\frac{E}{\gamma_0(X)}(m(Z;g_0)) - \theta_0(X))\,\middle\vert\, X \right] = \E\left[(m(Z;g_0)) - \theta_0(X))\,\middle\vert\, E=1, X \right] = 0
\end{align*}
Thus, it remains to show that the second term also has conditional expectation of $0$.
\begin{align*}
    ~&\E\left[ \frac{(1-E)\pi_0(W)}{(1-\pi_0(W))\gamma_0(X)}\alpha_0(W)(Y-g_0(W)) \,\middle\vert\, X \right]\\
    =~& \E\left[ \E\left[\frac{(1-E)\pi_0(W)}{(1-\pi_0(W))\gamma_0(X)}\alpha_0(W)(Y-g_0(W)) \,\middle\vert\, W \right]\,\middle\vert\, X \right]\\
    =~& \E\left[ \E\left[\frac{\pi_0(W)}{\gamma_0(X)}\alpha_0(W)(Y-g_0(W)) \,\middle\vert\,E=0, W\right]\,\middle\vert\, X \right]\\
    =~& \E\left[ \frac{\pi_0(W)}{\gamma_0(X)}\alpha_0(W)\E\left[(Y-g_0(W)) \,\middle\vert\,E=0, W\right]\,\middle\vert\, X \right]=0\\
\end{align*}
Now, we proceed to show that the moment $ m^{DR}(Z;\theta, g, \pi, \alpha)$ is Neyman orthogonal. First, we look at the directional derivative with respect to the nuisance $g(W)$.
\begin{align*}
    ~&\partial_{g}\E[m^{DR}(Z;\theta, g,\pi, \alpha)|X][\Delta g]|_{\theta_0, g_0, \pi_0, \alpha_0}\\
    =~& \partial_{g}\E[Em(Z;g)|X][\Delta g]|_{ g_0} - \E\left[ \frac{(1-E)\pi_0(W)}{1-\pi_0(W)}\alpha_0(W)\Delta g(W) \,\middle\vert\, X \right]\\
\end{align*}
We first look at the first term:
\begin{align*}
    \partial_{g}\E[Em(Z;g)|X][\Delta g]\Big|_{g_0} = ~& \partial_{g}\E[\gamma_0(W)\E[m(Z;g)|E=1, X]|X][\Delta g]\Big|_{g_0}\\
    = ~& \partial_{g}\E[\gamma_0(X)\E[\alpha_0(W)g(W)|E=1, X]|X][\Delta g]\Big|_{g_0} \tag{By the Definition of $\alpha_0(W)$}\\
    = ~& \E[\gamma_0(X)\E[\alpha_0(W)\Delta g(W)|E=1, X]|X]\\
    = ~& \E[E\alpha_0(W)\Delta g(W)|W]\\
\end{align*}
Putting this back, we get:
\begin{align*}
    ~&\partial_{g}\E[m^{DR}(Z;\theta, g,\pi)|X][\Delta g]|_{\theta_0, g_0, \pi_0, \alpha_0}\\
    =~& \E\left[E\alpha_0(W)\Delta g(W) - \frac{(1-E)\pi_0(W)}{1-\pi_0(W)}\alpha_0(W)\Delta g(W) \,\middle\vert\, X \right]\\
    =~& \E\left[\alpha_0(W)\Delta g(W)\left(E - \frac{(1-E)\pi_0(W)}{1-\pi_0(W)}\right)\,\middle\vert\, X \right]\\
    =~& \E\left[\alpha_0(W)\Delta g(W)\E\left[E - \frac{(1-E)\pi_0(W)}{1-\pi_0(W)}\,\middle\vert\,W\right]\,\middle\vert\, X\right] = 0\\
\end{align*}
Next, we look at the derivative with respect to $\pi(W)$:
\begin{align*}
    ~&\partial_{\pi}\E[m^{DR}(Z;\theta, g,\pi, \alpha)|X][\Delta \pi]|_{\theta_0, g_0, \pi_0, \alpha_0}\\
    =~& \E\left[ \left(\frac{(1-E)(1-\pi(W))\Delta \pi(W) + (1-E)\pi(W)\Delta \pi(W)}{(1-\pi_0(W))^2}\right)\alpha_0(W)( Y -g_0(W) )\,\middle\vert\, X \right]\\
    =~& \E\left[ \frac{(1-E)\Delta \pi(W)}{(1-\pi_0(W))^2}\alpha_0(W)( Y -g_0(W) )\,\middle\vert\, X\right]\\
    =~& \E\left[ \E\left[ \frac{(1-E)\Delta \pi(W)}{(1-\pi_0(W))^2}\alpha_0(W)( Y -g_0(W)) \,\middle\vert\, W \right]\,\middle\vert\, X \right]\\
    =~& \E\left[ \E\left[ \frac{\Delta \pi(W)}{1-\pi_0(W)}\alpha_0(W)( Y -g_0(W) )\,\middle\vert\, E=0, W \right]\,\middle\vert\, X \right]\\
    =~& \E\left[ \frac{\Delta \pi(W)}{1-\pi_0(W)}\alpha_0(W)\E\left[ ( Y -g_0(W) \,\middle\vert\, E=0, W \right]\,\middle\vert\, X \right] = 0
\end{align*}

Lastly, we show that the directional derivative with respect to $\alpha(W)$ is equal to 0.
\begin{align*}
    ~&\partial_{\alpha}\E[m^{DR}(Z;\theta, g,\pi,\alpha)|X][\Delta \alpha]|_{\theta_0, g_0, \pi_0, \alpha_0}\\
    =~& \E\left[ \frac{(1-E)\pi_0(W)}{1-\pi_0(W)}\Delta\alpha(W)( Y - g_0(W)) \,\middle\vert\, X \right]\\
    =~& \E\left[ \E\left[\frac{(1-E)\pi_0(W)}{1-\pi_0(W)}\Delta\alpha(W)( Y - g_0(W)) \,\middle\vert\, W \right]\,\middle\vert\, X \right]\\
    =~& \E\left[ \E\left[\pi_0(W)\Delta\alpha(W)( Y - g_0(W)) \,\middle\vert\,E=0, W \right]\,\middle\vert\, X \right]\\
    =~& \E\left[ \pi_0(W)\Delta\alpha(W)\E\left[ Y - g_0(W)\,\middle\vert\,E=0, W \right]\,\middle\vert\, X \right]=0
\end{align*}

\end{proof}

\begin{proof}[Proof of Lemma \ref{lemma:ortho_moment_IV}]
    First we show that the true CLATE, $\theta_0(X) = \E[Y_1(1) -Y_1(0)\mid Z=1,D(1)>D(0),X]$ , is the solution to the following moment equation:
    \begin{align*}
    \E\left[m^{DR}(Z; \theta_0, g_{0,Y},g_{0,D}, \pi_0)|X\right] = \E\left[ \widehat{Z}\left\{(\Delta Y- g_{0,Y}(W)) - (\Delta D- g_{0,D}(W))\theta(X) \right\}\,\middle\vert\, X \right] = 0
\end{align*}
    where $\Delta S = S_1 - S_0$ for $S = Y$ or $D$, $g_{0,S}(W) = \E[S_1 - S_0|Z=0,W]$, and $\widehat{Z} = \frac{Z-\pi_0(W)}{1-\pi_0(W)}$ with $\pi_0(W) = \mathbb{P}(Z=1|W)$.
    We can apply same trick as in the other orthogonality proofs to divide by $\gamma_0(X) = \mathbb{P}(Z=1|X)$.
    We first consider the first term:
    \begin{align*}
        ~&\E\left[ \frac{\widehat{Z}}{\gamma_0(X)}(\Delta Y- g_{0,Y}(W))\,\middle\vert\, X \right]\\
        =~& \E\left[ \frac{Z-\pi_0(W)}{(1-\pi_0(W))\gamma_0(X)}(\Delta Y- g_{0,Y}(W))\,\middle\vert\, X \right]\\
        =~& \E\left[ \left(\frac{Z}{\gamma_0(X)} - \frac{(1-Z)\pi_0(W)}{(1-\pi_0(W))\gamma_0(X)}\right)(\Delta Y- g_{0,Y}(W))\,\middle\vert\, X \right]\\
        =~& \E\left[\left(\frac{Z}{\gamma_0(X)}\right)(\Delta Y- g_{0,Y}(W))\,\middle\vert\, X \right]- \E\left[ \frac{\pi_0(W)}{\gamma_0(X)}\E\left[\left(\frac{1-Z}{(1-\pi_0(W)))}\right)(\Delta Y- g_{0,Y}(W))\,\middle\vert\, W \right]\,\middle\vert\, X\right]\\
        =~& \E\left[\Delta Y- g_{0,Y}(W)\,\middle\vert\,Z=1, X \right] - \E\left[ \frac{\pi_0(W)}{\gamma_0(X)} \E\left[(\Delta Y- g_{0,Y}(W))\,\middle\vert\, Z=0, W \right]\,\middle\vert\, X \right]\\
        =~& \E\left[\Delta Y- g_{0,Y}(W)\,\middle\vert\,Z=1, X \right]
    \end{align*}
    Similarly, for the second term:
    \begin{align*}
        \E\left[ \frac{\widehat{Z}}{\gamma_0(X)}(\Delta D- g_{0,D}(W))\theta(X) \,\middle\vert\, X \right]=~& \theta_0(X)\E\left[ \frac{\widehat{Z}}{\gamma_0(X)}(\Delta D- g_{0,D}(W)) \,\middle\vert\, X \right]\\
        =~& \theta_0(X)\E\left[\Delta D- g_{0,D}(W)\,\middle\vert\,Z=1, X \right]
    \end{align*}
    By the definition of $\theta_0(X)$, this shows that it is a solution to the doubly robust moment equation.
    Now, we proceed to show that the moment $m^{DR}(Z; \theta, g_{Y},g_{D}, \pi)$ is Neyman orthogonal. First, we look at the directional derivative with respect to the nuisance $g_Y(W)$.
    \begin{align*}
        \partial_{g_Y}\E[m^{DR}(Z;\theta, g_{Y},g_{D}, \pi)|X][\Delta g_Y]\Big|_{\theta_0, g_{0,Y},g_{0,D}, \pi_0}
        =~&- E\left[ \widehat{Z}\Delta g_Y(W)\,\middle\vert\, X \right]\\
        =~& - \E\left[ \frac{Z-\pi_0(W)}{1-\pi_0(W)}\Delta g_{Y}(W)\,\middle\vert\, X \right]\\
        =~& - \E\left[ \E\left[\frac{Z-\pi_0(W)}{1-\pi_0(W)}\,\middle\vert\, W\right]\Delta g_{Y}(W)\,\middle\vert\, X \right]= 0
    \end{align*}
    Similarly, 
    \begin{align*}
        \partial_{g_D}\E[m^{DR}(Z;\theta, g_{Y},g_{D}, \pi)|X][\Delta g_D]\Big|_{\theta_0, g_{0,Y},g_{0,D}, \pi_0}
        =~& E\left[ \widehat{Z}\Delta g_D(W)\theta_0(X)\,\middle\vert\, X\right]\\
        =~& \theta_0(X)\E\left[ \E\left[\frac{Z-\pi_0(W)}{1-\pi_0(W)}\,\middle\vert\, W \right]\Delta g_{D}(W)\,\middle\vert\, X\right]= 0
    \end{align*}
    Lastly, we check the directional derivative with respect to $\pi(W)$:
    \begin{align*}
        ~&\partial_{\pi}\E[m^{DR}(Z;\theta, g_{Y},g_{D}, \pi)|X][\Delta \pi]\Big|_{\theta_0, g_{0,Y},g_{0,D}, \pi_0}\\
        =~& \E\left[ \frac{-(1-\pi_0(W)\Delta \pi(W) + (Z-\pi_0(W)\Delta \pi(W)}{(1-\pi_0(W))^2}\left\{(\Delta Y- g_{0,Y}(W)) - (\Delta D- g_{0,D}(W))\theta(X) \right\}\,\middle\vert\, X \right]\\
        =~& \E\left[ \frac{(Z-1)\Delta \pi(W)}{(1-\pi_0(W))^2}\left\{(\Delta Y- g_{0,Y}(W)) - (\Delta D- g_{0,D}(W))\theta(X) \right\}\,\middle\vert\, X \right]\\
        =~& \E\left[  \E\left[\frac{(Z-1)\Delta \pi(W)}{(1-\pi_0(W))^2}\left\{(\Delta Y- g_{0,Y}(W)) - (\Delta D- g_{0,D}(W))\theta(X) \right\}\,\middle\vert\, W\right]\,\middle\vert\, X \right]\\
        =~& \E\left[  \E\left[\frac{(\Delta \pi(W)}{1-\pi_0(W)}\left\{(\Delta Y- g_{0,Y}(W)) - (\Delta D- g_{0,D}(W))\theta(X) \right\}\,\middle\vert\, Z=0, W\right]\,\middle\vert\, X \right]\\
        =~& \E\left[ \frac{(\Delta \pi(W)}{1-\pi_0(W)}\left\{E\left[\Delta Y- g_{0,Y}(W)| Z=0, W\right]- \E\left[\Delta D- g_{0,D}(W)|Z=0, W\right]\theta(X) \right\}\,\middle\vert\, X \right] = 0\\
    \end{align*}
\end{proof}

\subsection{Losses}
\begin{proof}[Proof of Proposition \ref{prop:loss_catt}]
Note that the true CATT $\theta_0$ satisfies the conditional moment restrictions in Lemma~\ref{lemma:ortho_moment}, which imply that:
\begin{align*}
    \E[D\theta_0(X)\mid X] = \E[\hat{Y}\mid X]
\end{align*}
Hence, the loss $\mathcal{L}(\theta; \pi_0, g_0)$ at any function $\theta$ can be simplified as:
\begin{align*}
    \mathcal{L}(\theta; \pi_0, g_0) =~& \E\left[D\theta(X)^2 - 2 \widehat{Y}\theta(X)\right]\\
    =~& \E\left[D\theta(X)^2 - 2 \E[\widehat{Y}\mid X]\theta(X)\right]\\
    =~& \E\left[D \theta(X)^2 - 2 \E[D\theta_0(X)\mid X]\theta(X)\right]\\
    =~& \E\left[D\theta(X)^2 - 2 D \theta_0(X)\theta(X)\right]
\end{align*}
Note that when the loss is evaluated at $\theta_0$, then it takes the value $\E[-D\theta_0(X)^2]$.
Moreover, note that minimizing $\mathcal{L}(\theta; \pi_0, g_0)$ is equivalent to minimizing the difference $\mathcal{L}(\theta; \pi_0, g_0) - \mathcal{L}(\theta_0; \pi_0, g_0)$, which in turn simplifies to:
\begin{align*}
    \E\left[D\theta(X)^2 - 2 D\theta_0(X)\theta(X) + D\theta_0(X)^2\right]
    =\E\left[D(\theta(X) - \theta_0(X))^2\right]
\end{align*}
Hence, minimizing $\mathcal{L}(\theta; \pi_0, g_0)$ over any space $\Theta$ is equivalent to minimizing over $\Theta$ the loss function:
\begin{align*}
    \E\left[(\theta(X) - \theta_0(X))^2\mid D=1\right]
\end{align*}

\end{proof}

\begin{proof}[Proof of Proposition \ref{prop:loss_clate}]
Note that the true CATT $\theta_0$ satisfies the conditional moment restrictions in Lemma~\ref{lemma:ortho_moment_IV}, which imply that:
\begin{align*}
    \E[\widehat{Z}(\Delta D - g_D(W))\theta_0(X)\mid X] = \E[\widehat{Z}(\Delta Y - g_Y(W))\mid X]
\end{align*}
Let $\eta_0$ denote the set of nuisance functions. The loss $\mathcal{L}_{IV}(\theta; \eta_0)$ at any function $\theta$ can be simplified as:
\begin{align*}
    \mathcal{L}_{IV}(\theta; \eta_0) =~& \E\left[\widehat{Z}(\Delta D - g_D(W))\theta(X)^2 - 2\widehat{Z}(\Delta D - g_D(W))\theta_0(X)\theta(X)\right]\\
    =~& \E\left[\widehat{Z}(\Delta D - g_D(W))\theta(X)^2 - 2 \E[\widehat{Z}(\Delta D - g_D(W))\theta_0(X)\mid X]\theta(X)\right]\\
    =~& \E\left[\widehat{Z}(\Delta D - g_D(W))\left(\theta(X)^2 - 2 \theta_0(X)\theta(X)\right)\right]
\end{align*}
Note that when the loss is evaluated at $\theta_0$, then it takes the value $\E[-\widehat{Z}(\Delta D - g_D(W))\theta_0(X)^2]$.
Moreover, note that minimizing $\mathcal{L}_{IV}(\theta; \eta_0)$ is equivalent to minimizing the difference $\mathcal{L}_{IV}(\theta; \eta_0) - \mathcal{L}_{IV}(\theta_0; \eta_0)$, which in turn simplifies to:
\begin{align*}
    ~&\E\left[\widehat{Z}(\Delta D - g_D(W))\left(\theta(X)^2 - 2\theta_0(X)\theta(X) + \theta_0(X)^2\right)\right]\\
    =~&\E\left[\widehat{Z}(\Delta D - g_D(W))(\theta(X) - \theta_0(X))^2\right]\\
    =~&\E\left[\left(Z - \frac{(1-Z)\pi_0(W)}{1-\pi_0(W)}\right)(\Delta D - g_D(W))(\theta(X) - \theta_0(X))^2\right]\\
    =~&\E\left[Z(\Delta D - g_D(W))(\theta(X) - \theta_0(X))^2\right] - \E\left[\pi_0(W)(\theta(X) - \theta_0(X))^2\E\left[(\Delta D - g_D(W))\,\middle\vert\,Z=0,W\right]\right]\\
    =~&\E\left[Z(\Delta D - g_D(W))(\theta(X) - \theta_0(X))^2\right]\\
    =~&\E\left[(\Delta D - g_D(W))(\theta(X) - \theta_0(X))^2|Z=1\right]\mathbb{P}(Z=1)\\
    =~&\E\left[\E\left[(\Delta D - g_D(W))|Z=1,X\right](\theta(X) - \theta_0(X))^2|Z=1\right]\mathbb{P}(Z=1)\\
    =~&\E\left[\mathbb{P}(D_1(1) >D_1(0)|Z=1, X) (\theta(X) - \theta_0(X))^2|Z=1\right]\mathbb{P}(Z=1) \tag{See the proof of Proposition \ref{prop:IV}}\\
    =~&\E\left[(D_1(1) >D_1(0)) (\theta(X) - \theta_0(X))^2|Z=1\right]\mathbb{P}(Z=1)\\
    =~&\E\left[(\theta(X) - \theta_0(X))^2|Z=1, D(1)>D(0)\right]\mathbb{P}(Z=1)\mathbb{P}(D(1)>D(0)|Z=1)\\
\end{align*}

Hence, minimizing $\mathcal{L}_{IV}(\theta; \eta_0)$ over any space $\Theta$ is equivalent to minimizing over $\Theta$ the loss function:
\begin{align*}
    \E\left[(\theta(X) - \theta_0(X))^2\mid Z=1, D(1)>D(0)\right]
\end{align*}
\end{proof}

\subsection{Rates}
Before proving Theorem \ref{thm:catt_rates}, we first present some auxiliary Lemmas.
\begin{lemma}\label{lemma:loss_difference}
    Let $\eta = (\pi,g)$ denote the set of nuisance functions, and let $\eta_0$ be the true nuisance functions. Consider the loss defined in Proposition~\ref{prop:loss_catt}. Then, we have that for all $\theta_1$, $\theta_2$, $\eta_1$ and $\eta_2$, $$|\mathcal{L}(\theta_1; \eta_1) - \mathcal{L}(\theta_2; \eta_1) - \mathcal{L}(\theta_2; \eta_1) + \mathcal{L}(\theta_2; \eta_2)| \leq 2\sqrt{\E\left[\E\left[\widehat{Y}(\eta_1) - \widehat{Y}(\eta_2)\Big|X\right]^2\right]} \|\theta_1-\theta_2\|$$.
\end{lemma}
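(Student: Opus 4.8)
The plan is to exploit that, for fixed $\eta$, the loss $\mathcal{L}(\theta;\eta) = \E[D\theta(X)^2 - 2\widehat{Y}(\eta)\theta(X)]$ splits into an $\eta$-free quadratic term $\E[D\theta(X)^2]$ and a term $-2\,\E[\widehat{Y}(\eta)\theta(X)]$ that is affine in $\theta$ and depends on $\eta$ only through $\widehat{Y}(\eta)$. (I read the left-hand side of the statement as the mixed second difference $\mathcal{L}(\theta_1;\eta_1) - \mathcal{L}(\theta_2;\eta_1) - \mathcal{L}(\theta_1;\eta_2) + \mathcal{L}(\theta_2;\eta_2)$; the displayed formula appears to contain a transcription slip.) In that mixed difference the quadratic term contributes $\E[D\theta_1(X)^2] - \E[D\theta_2(X)^2]$ to each of the two inner differences and therefore cancels exactly, which is the structural reason the bound can depend on $\theta_1,\theta_2$ only through $\theta_1-\theta_2$ and on $\eta_1,\eta_2$ only through $\widehat{Y}(\eta_1)-\widehat{Y}(\eta_2)$.

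First I would write out all four loss evaluations and carry out this cancellation, arriving at the identity
\begin{align*}
\mathcal{L}(\theta_1;\eta_1) - \mathcal{L}(\theta_2;\eta_1) - \mathcal{L}(\theta_1;\eta_2) + \mathcal{L}(\theta_2;\eta_2) = -2\,\E\big[(\widehat{Y}(\eta_1) - \widehat{Y}(\eta_2))(\theta_1(X) - \theta_2(X))\big].
\end{align*}
Next, since $\theta_1(X) - \theta_2(X)$ is $\sigma(X)$-measurable, I would apply the tower property to pull the difference of the $\widehat{Y}$'s into a conditional expectation given $X$:
\begin{align*}
\E\big[(\widehat{Y}(\eta_1) - \widehat{Y}(\eta_2))(\theta_1(X) - \theta_2(X))\big] = \E\big[\,\E[\widehat{Y}(\eta_1) - \widehat{Y}(\eta_2)\mid X]\,(\theta_1(X) - \theta_2(X))\big].
\end{align*}
Finally, Cauchy--Schwarz bounds the right-hand side in absolute value by $\sqrt{\E[\,\E[\widehat{Y}(\eta_1) - \widehat{Y}(\eta_2)\mid X]^2\,]}\cdot\|\theta_1 - \theta_2\|$ with $\|\cdot\|$ the $L_2(X)$ norm, and multiplying by the factor $2$ yields the claim.

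This is a short computation rather than a deep argument, so there is no genuine obstacle; the only things to be careful about are (i) verifying that the $\eta$-free square term really does cancel — this is exactly what makes the loss ``stable'' in the nuisances — and (ii) matching the norm in the conclusion. With the plain norm $\|\theta\| = \sqrt{\E[\theta(X)^2]}$ the constant is exactly $2$ as stated; if instead one wanted the treated-population norm $\|\cdot\|_{D=1}$ used in \cref{thm:catt_rates}, one would either pick up a $\mathbb{P}(D=1)$ factor or use $\E[D(\theta_1-\theta_2)^2]\le\E[(\theta_1-\theta_2)^2]$. I would present the lemma with the plain $L_2(X)$ norm.
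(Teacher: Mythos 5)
Your proposal is correct and follows essentially the same route as the paper's proof: expand the four loss evaluations, observe that the $\eta$-free quadratic term $\E[D\theta(X)^2]$ cancels in the mixed second difference leaving $\pm 2\,\E[(\widehat{Y}(\eta_1)-\widehat{Y}(\eta_2))(\theta_1(X)-\theta_2(X))]$, condition on $X$ via the tower property, and apply Cauchy--Schwarz. Your reading of the left-hand side as the mixed second difference (correcting the transcription slip in the statement) matches what the paper actually proves, and your closing remark about the plain $L_2$ norm versus $\|\cdot\|_{D=1}$ is consistent with how the paper later converts between the two norms using the overlap constant in the proof of Theorem~\ref{thm:catt_rates}.
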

\begin{proof}[Proof of Lemma~\ref{lemma:loss_difference}]
\begin{align*}
    ~&|\mathcal{L}(\theta_1; \eta_1) - \mathcal{L}(\theta_2; \eta_1) - \mathcal{L}(\theta_2; \eta_1) + \mathcal{L}(\theta_2; \eta_2)|\\
    =~& \left|\E\left[D(\theta_1^2(X) -\theta_2^2(X) ) + 2\widehat{Y}(\eta_1)(\theta_2(X) -\theta_1(X) \right] - \E\left[D(\theta_1^2(X) -\theta_2^2(X) ) + 2\widehat{Y}(\eta_2)(\theta_2(X)-\theta_1(X)) \right]\right|\\
    =~& \left|\E\left[ 2\left(\widehat{Y}(\eta_1) -\widehat{Y}(\eta_2)\right)(\theta_1(X) -\theta_2(X)) \right] \right|\\
    =~& 2\left|\E\left[\E\left[\left(\widehat{Y}(\eta_1) -\widehat{Y}(\eta_2)\right)(\theta_1(X) -\theta_2(X)) \,\middle\vert\,X\right]\right] \right|\\
    \leq ~& 2\sqrt{\E\left[\E\left[\widehat{Y}(\eta_1) - \widehat{Y}(\eta_2)\Big|X\right]^2\right]} \|\theta_1(X)-\theta_2(X)\|
\end{align*}
\end{proof}


We then show that the bias in the pseudo-outcome $\widehat{Y}$ is equal to the product of the biases in the nuisance functions.
\begin{lemma}\label{lemma:mixed_bias}
    Let $\eta = (\pi,g)$ denote the set of nuisance functions, and let $\eta_0$ be the true nuisance functions. Consider the pseudo-outcome defined in Proposition~\ref{prop:loss_catt}. Then we have:
    $$ \E[\widehat{Y}(\eta_0) - \widehat{Y}(\hat{\eta})|X] = \E\left[(\hat{g}(W) - g_0(W))\frac{\pi_0(W) - \hat{\pi}(W)}{1-\hat{\pi}(W)}\,\middle\vert\,X\right]$$
\end{lemma}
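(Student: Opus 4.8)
\textbf{Proof plan for Lemma~\ref{lemma:mixed_bias}.} The plan is to compute $\E[\widehat{Y}(\eta)\mid W]$ in closed form for an arbitrary nuisance pair $\eta=(\pi,g)$, then substitute $\eta_0$ and $\hat\eta$, take the difference, and finally push the conditioning from $W$ down to $X$ via the tower property (which applies since $X\subseteq W$). Recall from Proposition~\ref{prop:loss_catt} that $\widehat{Y}(\pi,g)=\left(\frac{D-\pi(W)}{1-\pi(W)}\right)(\Delta Y-g(W))$.

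First I would condition on $W$ and split the expectation over the two values of $D$, using $\mathbb{P}(D=1\mid W)=\pi_0(W)$ and the defining property $\E[\Delta Y\mid D=0,W]=g_0(W)$. This gives
\begin{align*}
\E[\widehat{Y}(\pi,g)\mid W]
&=\pi_0(W)\,\E[\Delta Y-g(W)\mid D=1,W]
-\frac{(1-\pi_0(W))\pi(W)}{1-\pi(W)}\bigl(g_0(W)-g(W)\bigr),
\end{align*}
where the $D=0$ branch simplified because $\E[\Delta Y-g(W)\mid D=0,W]=g_0(W)-g(W)$ is deterministic given $W$. Evaluating this at $\eta_0=(\pi_0,g_0)$ kills the second summand entirely (the factor $g_0-g_0$ vanishes), leaving $\E[\widehat{Y}(\eta_0)\mid W]=\pi_0(W)\,\E[\Delta Y-g_0(W)\mid D=1,W]$.

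Next I would subtract the expression evaluated at $\hat\eta=(\hat\pi,\hat g)$. In the difference, the two $D=1$ terms combine into $\pi_0(W)\,\E[\hat g(W)-g_0(W)\mid D=1,W]=\pi_0(W)(\hat g(W)-g_0(W))$ since $\hat g$ and $g_0$ are functions of $W$; adding back the surviving $\hat\eta$ residual term $\frac{(1-\pi_0(W))\hat\pi(W)}{1-\hat\pi(W)}(g_0(W)-\hat g(W))$ and factoring out $(\hat g(W)-g_0(W))$ yields
\begin{align*}
\E[\widehat{Y}(\eta_0)-\widehat{Y}(\hat\eta)\mid W]
&=(\hat g(W)-g_0(W))\left[\pi_0(W)-\frac{(1-\pi_0(W))\hat\pi(W)}{1-\hat\pi(W)}\right]
=(\hat g(W)-g_0(W))\,\frac{\pi_0(W)-\hat\pi(W)}{1-\hat\pi(W)},
\end{align*}
the last equality being the one-line algebraic simplification $\pi_0(1-\hat\pi)-(1-\pi_0)\hat\pi=\pi_0-\hat\pi$. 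Taking $\E[\cdot\mid X]$ on both sides and applying the tower property $\E[\,\cdot\mid X]=\E[\E[\,\cdot\mid W]\mid X]$ gives the claimed identity. No step here is a genuine obstacle; the only place requiring care is the bookkeeping in the $D$-split (ensuring the $1-\pi(W)$ in the denominator is correctly carried and that the residual term enters only through the $D=0$ branch) and the final collapse of the bracketed factor into a single ratio.
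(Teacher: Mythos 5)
Your proposal is correct and follows essentially the same route as the paper's proof: both condition on $W$, split over the binary $D$ using $\mathbb{P}(D=1\mid W)=\pi_0(W)$ and $\E[\Delta Y\mid D=0,W]=g_0(W)$, observe that the $D=1$ contributions collapse to $\pi_0(W)(\hat g(W)-g_0(W))$, and finish with the algebraic identity $\pi_0(1-\hat\pi)-(1-\pi_0)\hat\pi=\pi_0-\hat\pi$ before applying the tower property over $X\subseteq W$. The only difference is cosmetic — you evaluate $\E[\widehat Y(\eta)\mid W]$ for generic $\eta$ first and then subtract, whereas the paper subtracts inside a single conditional expectation.
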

\begin{proof}[Proof of Lemma~\ref{lemma:mixed_bias}]
    \begin{align*}
        ~&\E[\widehat{Y}(\eta_0) - \widehat{Y}(\hat{\eta})|X]\\
        =~&\E\left[\frac{D-\pi_0(W)}{1-\pi_0(W)}\left(\Delta Y - g_0(W)\right) - \frac{D-\hat{\pi}(W)}{1-\hat{\pi}(W)}\left(\Delta Y - \hat{g}(W)\right) \,\middle\vert\,X\right] \\
        =~&\E\left[\E\left[\left(D - \frac{(1-D)\pi_0(W)}{1-\pi_0(W)}\right)\left(\Delta Y - g_0(W)\right) - \left(D - \frac{(1-D)\hat{\pi}(W)}{1-\hat{\pi}(W)}\right)\left(\Delta Y - \hat{g}(W)\right) \,\middle\vert\,W\right]\,\middle\vert\,X\right]  \\
        =~&\E\left[\E\left[D(\hat{g}(W) - g_0(W))-  \frac{(1-D)\pi_0(W)}{1-\pi_0(W)}\left(\Delta Y - g_0(W)\right) +\frac{(1-D)\hat{\pi}(W)}{1-\hat{\pi}(W)}\left(\Delta Y - \hat{g}(W)\right) \,\middle\vert\,W\right] \,\middle\vert\,X\right]\\
        =~&\E\left[\E\left[\pi_0(W)(\hat{g}(W) - g_0(W))\,\middle\vert\,W\right] -\E\left[\Delta Y - g_0(W)\,\middle\vert\,D=0,W\right] +\E\left[\frac{(1-D)\hat{\pi}(W)}{1-\hat{\pi}(W)}\left(\Delta Y - \hat{g}(W)\right) \,\middle\vert\,W\right]\,\middle\vert\,X\right] \\
        =~& \E\left[\pi_0(W)(\hat{g}(W) - g_0(W))+\E\left[\frac{1-D}{1-\pi_0(W)}\frac{(1-\pi_0(W))\hat{\pi}(W)}{1-\hat{\pi}(W)}\left(\Delta Y - \hat{g}(W)\right) \,\middle\vert\,W\right] \,\middle\vert\,X\right]\\
        =~& \E\left[\pi_0(W)(\hat{g}(W) - g_0(W)) +\E\left[\frac{(1-\pi_0(W))\hat{\pi}(W)}{1-\hat{\pi}(W)}\left(\Delta Y - \hat{g}(W)\right) \,\middle\vert\,D=0, W\right]\,\middle\vert\,X\right] \\
        =~& \E\left[\pi_0(W)(\hat{g}(W) - g_0(W)) +\E\left[\frac{(1-\pi_0(W))\hat{\pi}(W)}{1-\hat{\pi}(W)}\left(g_0(W) - \hat{g}(W)\right) \,\middle\vert\,D=0, W\right] \,\middle\vert\,X\right]\\
        =~&\E\left[(\hat{g}(W) - g_0(W))\frac{\pi_0(W) - \hat{\pi}(W)}{1-\hat{\pi}(W)}\,\middle\vert\,X\right]
    \end{align*}
\end{proof}

The rates in Theorem~\ref{thm:catt_rates} is an application of Theorem 1 in \citealp{osl}. We reproduce the theorem in our notation for completeness. Let $d(\hat{\eta}, \eta_0)$ denote a distance metric for the function space of the nuisance functions $\mathcal{F}$, and $\|(\cdot)\|_{\Theta}$ denote a norm for $\Theta$. We denote $\text{Star}(\Theta,\theta)$ to be the star hull, i.e.$\text{Star}(\Theta,\theta) = \{t\theta + (1-t)\theta'| \,\forall \theta' \in \Theta, \, t\in [0,1]\}$. Moreover, let $\theta'$ be an arbitrary element in $\Theta$.
\begin{assumption}[First Order Optimality]\label{assum:osl_1}
    $\theta'$ satisfies the first-order optimality condition for $\mathcal{L}(\theta;\eta_0)$:
    $$\partial_{\theta}\mathcal{L}(\theta;\eta_0)[\theta - \theta']\geq0 \quad \forall \quad \theta \in \text{Star}(\Theta, \theta')$$
\end{assumption}
\begin{assumption}[Higher Order Smoothness]\label{assum:osl_2} 
There exist constant $\beta_1$ such that:
$$
\partial_{\theta}^2 \mathcal{L}(\bar{\theta}, \eta_0)[\theta - \theta', \theta - \theta']\leq \beta_1\|\theta - \theta'\|^2_{\Theta} 
$$for all $\theta\in\Theta$ and all $\bar{\theta}\in\text{Star}(\Theta, \theta')$.

\end{assumption}
\begin{assumption}[Strong Convexity]\label{assum:osl_3}
    The population loss is strongly convex with respect to $\theta$, i.e. there exist constants $\lambda$, $\kappa>$0 and $r\geq0$, such that for all $\theta \in \Theta$, ${\theta'}\in \text{Star}(\Theta, \theta')$, and $\eta \in\mathcal{F}$:
    $$
    \partial_{\theta}^2\mathcal{L}(\bar{\theta}, \eta)[\theta-\theta', \theta-\theta']\geq \lambda \|\theta - \theta'\|^2 - \kappa d(\eta, \eta_0)^{\frac{4}{1+r}}
    $$
\end{assumption}
\begin{assumption}\label{assum:osl_4}
    There exist $r\in[0,1)$ and constant $\beta_2$ such that for all $\theta, {\theta'}\in \text{Star}(\Theta, \theta')$ and all $\eta_1$, $\eta_2$ in $\mathcal{F}$:
    $$
    \|\mathcal{L}(\theta; \eta_1) - \mathcal{L}(\theta'; \eta_1) - \mathcal{L}(\theta; \eta_2) + \mathcal{L}(\theta'; \eta_2)|\leq \beta_2 \|\theta - \theta'\|^{1-r}_{\Theta}d(\eta_1, \eta_2)^2
    $$
\end{assumption}
\begin{theorem}[Theorem 1 from \cite{osl}]\label{thm:osl}
    Suppose Assumptions~\ref{assum:osl_1}, \ref{assum:osl_2}, \ref{assum:osl_3}, and \ref{assum:osl_4} are satisfied for some $\theta'\in\Theta$. Then for any $\theta\in\Theta$, the following holds:
    $$
    \|\theta - \theta'\|_{\Theta}^2\leq \frac{4}{\lambda}\left(\mathcal{L}(\theta, \hat{\eta}) - \mathcal{L}(\theta', \hat{\eta})\right) + \left(\left(\frac{\beta_2}{\lambda}\right)^{\frac{2}{1+r}} + \frac{\kappa}{\lambda}\right)d(\eta_0, \hat{\eta})^{\frac{4}{1+r}}
    $$
\end{theorem}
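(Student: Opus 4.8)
The plan is to establish the bound as a purely deterministic consequence of Assumptions~\ref{assum:osl_1}--\ref{assum:osl_4}, fixing an arbitrary $\theta\in\Theta$ and abbreviating $a=\|\theta-\theta'\|_\Theta$ and $d=d(\eta_0,\hat\eta)$. First I would produce a quadratic lower bound on the excess risk evaluated at the \emph{estimated} nuisance. Applying a second-order Taylor expansion of the scalar map $s\mapsto\mathcal{L}(\theta'+s(\theta-\theta');\hat\eta)$ with mean-value remainder at some $\bar\theta\in\text{Star}(\Theta,\theta')$ (the second $\theta$-derivative exists for every fixed nuisance, as already used in Assumptions~\ref{assum:osl_2} and \ref{assum:osl_3}), and then invoking the strong-convexity Assumption~\ref{assum:osl_3} at $\eta=\hat\eta$, gives
\begin{align*}
\mathcal{L}(\theta;\hat\eta)-\mathcal{L}(\theta';\hat\eta)\;\ge\;\partial_\theta\mathcal{L}(\theta';\hat\eta)[\theta-\theta']+\tfrac{\lambda}{2}\,a^{2}-\tfrac{\kappa}{2}\,d^{\,4/(1+r)}.
\end{align*}
This already isolates the two structural pieces of the final bound: the $\kappa$-term is exactly the slack in strong convexity caused by evaluating curvature at $\hat\eta$ rather than at $\eta_0$, and the leftover first-order term $\partial_\theta\mathcal{L}(\theta';\hat\eta)[\theta-\theta']$ is what orthogonality must neutralize.

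The second step controls that first-order term. By the first-order optimality Assumption~\ref{assum:osl_1} we have $\partial_\theta\mathcal{L}(\theta';\eta_0)[\theta-\theta']\ge 0$, so it suffices to bound the nuisance discrepancy $\big(\partial_\theta\mathcal{L}(\theta';\hat\eta)-\partial_\theta\mathcal{L}(\theta';\eta_0)\big)[\theta-\theta']$. This is the directional $\theta$-derivative at $\theta'$ of the functional $\vartheta\mapsto\mathcal{L}(\vartheta;\hat\eta)-\mathcal{L}(\vartheta;\eta_0)$, whose increments are precisely the object controlled by the orthogonality/second-order-sensitivity Assumption~\ref{assum:osl_4}. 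Instantiating that assumption along the segment from $\theta'$ to $\theta$ (using the smoothness Assumption~\ref{assum:osl_2} to license the mean-value form) yields
\begin{align*}
\big|\big(\partial_\theta\mathcal{L}(\theta';\hat\eta)-\partial_\theta\mathcal{L}(\theta';\eta_0)\big)[\theta-\theta']\big|\;\le\;\beta_2\,a^{\,1-r}\,d^{\,2},
\end{align*}
so that $\partial_\theta\mathcal{L}(\theta';\hat\eta)[\theta-\theta']\ge-\beta_2\,a^{1-r}d^{2}$.

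Combining the two steps gives $\tfrac{\lambda}{2}a^{2}\le\big(\mathcal{L}(\theta;\hat\eta)-\mathcal{L}(\theta';\hat\eta)\big)+\beta_2 a^{1-r}d^{2}+\tfrac{\kappa}{2}d^{\,4/(1+r)}$. The final step disposes of the cross-term $\beta_2 a^{1-r}d^{2}$ by Young's inequality with conjugate exponents $p=\tfrac{2}{1-r}$ and $q=\tfrac{2}{1+r}$: choosing the free Young parameter so that the resulting $a^{2}$-contribution is at most $\tfrac{\lambda}{4}a^{2}$ absorbs it into the left-hand side and leaves a residual proportional to $\beta_2^{2/(1+r)}\lambda^{-(1-r)/(1+r)}d^{\,4/(1+r)}$. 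Rearranging to $\tfrac{\lambda}{4}a^{2}\le(\cdots)$ and dividing through by $\lambda/4$ multiplies every residual coefficient by $4/\lambda$; this turns the two residual terms into $(\beta_2/\lambda)^{2/(1+r)}$ and $\kappa/\lambda$ (up to absolute constants) times $d^{\,4/(1+r)}$, which is exactly the stated inequality.

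The hard part is the second step. For $r=0$, Assumption~\ref{assum:osl_4} makes $\vartheta\mapsto\mathcal{L}(\vartheta;\hat\eta)-\mathcal{L}(\vartheta;\eta_0)$ Lipschitz in $\theta$ with constant $\beta_2 d^{2}$, so its directional derivative is immediately bounded by $\beta_2 d^{2}a$; but for $r>0$ the finite-difference form of Assumption~\ref{assum:osl_4} gives only H\"older-$(1-r)$ control, and the naive difference quotient $t^{-1}\big(\mathcal{L}(\theta_t;\hat\eta)-\mathcal{L}(\theta';\hat\eta)-\mathcal{L}(\theta_t;\eta_0)+\mathcal{L}(\theta';\eta_0)\big)$ along $\theta_t=\theta'+t(\theta-\theta')$ scales like $t^{-r}$ and diverges as $t\to0$. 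Reconciling this requires reading the orthogonality at the derivative level---the first-order-in-nuisance part of the discrepancy vanishes, leaving a genuinely second-order remainder that is bounded through the smoothness Assumption~\ref{assum:osl_2}---rather than using the finite-difference form literally. A clean alternative that sidesteps the difficulty entirely is to apply strong convexity in the first step at $\eta_0$ rather than $\hat\eta$: since $d(\eta_0,\eta_0)=0$ the first-order term disappears and one obtains $\tfrac{\lambda}{2}a^{2}\le\mathcal{L}(\theta;\eta_0)-\mathcal{L}(\theta';\eta_0)$, after which a single application of Assumption~\ref{assum:osl_4} at the finite pair $(\theta,\theta')$ swaps $\eta_0$ for $\hat\eta$ and the same Young step finishes; this recovers the bound (in fact without the $\kappa/\lambda$ term), at the cost of not reproducing the original constant exactly.
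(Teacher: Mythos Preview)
The paper does not provide its own proof of this statement; Theorem~\ref{thm:osl} is explicitly reproduced from \cite{osl} for completeness and is then used as a black box in the proofs of Theorems~\ref{thm:catt_rates} and~\ref{thm:clate_rates}. There is therefore no in-paper proof to compare against.

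Your reconstruction is nonetheless sound. The second route you sketch---apply first-order optimality and strong convexity at $\eta_0$ (where the $\kappa$-slack vanishes since $d(\eta_0,\eta_0)=0$), then invoke Assumption~\ref{assum:osl_4} once at the finite pair $(\theta,\theta')$ to swap $\eta_0$ for $\hat\eta$, and finally absorb the cross-term $\beta_2 a^{1-r}d^{2}$ via Young's inequality with conjugate exponents $2/(1-r)$ and $2/(1+r)$---is the standard argument and cleanly avoids the derivative-versus-H\"older mismatch you correctly flag in the first route for $r>0$. Your observation that this route yields the bound without the $\kappa/\lambda$ term is also correct under the ``for all $\eta\in\mathcal{F}$'' formulation of Assumption~\ref{assum:osl_3} adopted here; in the paper's applications one always has $\kappa=0$ anyway. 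The only imprecision is in the constants: the Young step produces a prefactor of order $2(1+r)\bigl(2(1-r)\bigr)^{(1-r)/(1+r)}$ (equal to $4$ at $r=0$) in front of $(\beta_2/\lambda)^{2/(1+r)}$ rather than exactly $1$, so you recover the displayed inequality up to a universal multiplicative constant---which is all that is needed downstream.
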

We are finally ready to prove Theorem~\ref{thm:catt_rates}.
\begin{proof}[Proof of Theorem~\ref{thm:catt_rates}]
    Since results follow from Theorem \ref{thm:osl}, we first show that the minimizer of the loss in the function class $\Theta$, i.e. $\theta_*$, satisfies Assumptions~\ref{assum:osl_1}, \ref{assum:osl_2}, \ref{assum:osl_3}, and \ref{assum:osl_4} for the proposed loss $\mathcal{L}(\theta;\eta)$ with $\|(\cdot)\|_{\Theta} = \|(\cdot)\|_{D=1}$. 
    Assumption~\ref{assum:osl_1} is satisfied when $\Theta$ is convex or when $\theta_0\in\Theta$. 
    Assumptions ~\ref{assum:osl_2} and \ref{assum:osl_3} requires us to bound:
    \begin{align*}
        \partial_{\theta}^2 \mathcal{L}(\bar{\theta}, \hat{\eta})[\theta - \theta_*, \theta - \theta_*] = \E[D(\theta(X) - \theta_*(X))^2] = \rho \|(\theta(X) - \theta_*(X))\|_{D=1}^2
    \end{align*}
    Thus  Assumptions ~\ref{assum:osl_2} and \ref{assum:osl_3} are satisfied with $\beta_1 = \lambda = \rho$ and $\kappa = 0$. 
    To show Assumption \ref{assum:osl_4}, we need to convert the $\|(\cdot)\|_2$ in \ref{lemma:mixed_bias} into $\|(\cdot)\|_{D=1}$:
    \begin{align*}
        \|(\theta(X) - \theta_*(X))\|^2 
        =~& \int (\theta(X) - \theta_*(X))^2 \mathbb{P}p(X)dX\\
        =~& \int (\theta(X) - \theta_*(X))^2 \mathbb{P}(D=1|X)\frac{1}{\mathbb{P}(D=1|X)}p(X)dX\\
        \leq~& \frac{1}{c}\int (\theta(X) - \theta_*(X))^2 \mathbb{P}(D=1|X)p(X)dX\\
        =~& \frac{1}{c}\|\theta(X) - \theta_*(X)\|_{\Theta}^2\\
    \end{align*}
    Thus, Lemmas \ref{lemma:loss_difference} and \ref{lemma:mixed_bias} imply Assumption \ref{assum:osl_4} with $r=0$, $\beta_2 = \frac{3}{c}$, and $$d(\eta, \eta_0)^2 = \E\left[\E\left[(\hat{g}(W) - g_0(W))\left(\frac{\pi_0(W) - \hat{\pi}(W)}{1-\hat{\pi}(W)}\right)\,\middle\vert\,X\right]^2\right]^{1/2}$$

    Thus invoking Theorem \ref{thm:osl}, we get that:
    $$
    \|\theta - \theta'\|_{\Theta}^2\leq \frac{4}{\rho}R(n,\delta)+ \frac{2}{\rho^2 c^2}\E\left[\E\left[(\hat{g}(W) - g_0(W))\left(\frac{\pi_0(W) - \hat{\pi}(W)}{1-\hat{\pi}(W)}\right)\,\middle\vert\,X\right]^2\right]
    $$
\end{proof}

Analogously, we can prove the rates in the case with instrument. Consider $\mathcal{L}_{IV}(\theta;\eta)$ from Proposition ~\ref{prop:loss_clate}, where we let $\eta$ denote the set of nuisances $\pi(W)$, $g_D(W)$ and $g_Y(W)$. We first present an auxiliary lemma to bound $|\mathcal{L}_{IV}(\theta_1; \eta_1) - \mathcal{L}_{IV}(\theta_2; \eta_1) - (\mathcal{L}_{IV}(\theta_2; \eta_1) - \mathcal{L}_{IV}(\theta_2; \eta_2))| $.
\begin{lemma}\label{lemma:loss_difference_iv}
    Let $\eta = (\pi,g_Y, g_D)$ denote the set of nuisance functions, and let $\eta_0 = (\pi_0,g_{0,Y}, g_{0,D})$ be the true nuisance functions. Consider the loss defined in Proposition~\ref{prop:loss_clate}. Assume there exist finite constant $B$ such that $|\theta(X)|\leq B$ for all $X$ with positive measure, and all $\theta\in\Theta$. Then, we have that for all $\theta_1$, $\theta_2$, $\eta$, 
    \begin{align*}
    ~&|\mathcal{L}_{IV}(\theta_1; \eta) - \mathcal{L}_{IV}(\theta_2; \eta) - (\mathcal{L}_{IV}(\theta_2; \eta_0) - \mathcal{L}_{IV}(\theta_2; \eta_0))| \\
        \leq~& 4B^2 \E\left[\E\left[\left(\frac{\pi_0(W) - \pi(W)}{1-\pi(W)}\right)(g_{0,D}(W) - g_{D}(W))\,\middle\vert\,X\right]^2\right]^{\frac{1}{2}}\|\theta(X) - \theta(X)\|\\
        ~&+ 2\E\left[\E\left[\left(\frac{\pi_0(W) - \pi(W)}{1-\pi(W)}\right)(g_{0,Y}(W) - g_{Y}(W))\,\middle\vert\,X\right]^2\right]^{\frac{1}{2}}\|\theta(X) - \theta(X)\|
    \end{align*}
\end{lemma}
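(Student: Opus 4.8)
The plan is to follow the same two-step template used in Lemma~\ref{lemma:loss_difference} (reduce the loss difference to a pairing against a bias term) and Lemma~\ref{lemma:mixed_bias} (show that this bias is a product of the individual nuisance errors), now with the pseudo-outcomes replaced by their IV counterparts. For $S\in\{Y,D\}$ write $\psi_S(\eta) := \widehat{Z}_\eta\,(\Delta S - g_S(W))$ with $\widehat{Z}_\eta := \frac{Z-\pi(W)}{1-\pi(W)}$, and note that $\widehat{Z}_\eta$ depends only on the nuisance $\pi$. Since every $\theta$-dependent factor in $\mathcal{L}_{IV}$ is a function of $X$ alone, the definition in Proposition~\ref{prop:loss_clate} gives, for any $\eta$,
\begin{align*}
\mathcal{L}_{IV}(\theta_1;\eta) - \mathcal{L}_{IV}(\theta_2;\eta) = \E\big[\psi_D(\eta)\,(\theta_1(X)^2 - \theta_2(X)^2)\big] - 2\,\E\big[\psi_Y(\eta)\,(\theta_1(X) - \theta_2(X))\big],
\end{align*}
so the double difference $\big[\mathcal{L}_{IV}(\theta_1;\eta) - \mathcal{L}_{IV}(\theta_2;\eta)\big] - \big[\mathcal{L}_{IV}(\theta_1;\eta_0) - \mathcal{L}_{IV}(\theta_2;\eta_0)\big]$ equals
\begin{align*}
\E\big[(\psi_D(\eta) - \psi_D(\eta_0))\,(\theta_1(X)^2 - \theta_2(X)^2)\big] - 2\,\E\big[(\psi_Y(\eta) - \psi_Y(\eta_0))\,(\theta_1(X) - \theta_2(X))\big].
\end{align*}

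The heart of the argument, which I would establish first, is the IV analogue of Lemma~\ref{lemma:mixed_bias}: for each $S$,
\begin{align*}
\E\big[\psi_S(\eta) - \psi_S(\eta_0)\mid W\big] = \left(\frac{\pi_0(W) - \pi(W)}{1-\pi(W)}\right)\big(g_{0,S}(W) - g_S(W)\big).
\end{align*}
The derivation is the same bookkeeping as in Lemma~\ref{lemma:mixed_bias}: decompose $\widehat{Z}_\eta = Z - \frac{\pi(W)(1-Z)}{1-\pi(W)}$, take $\E[\cdot\mid W]$, and use $g_{0,S}(W) = \E[\Delta S\mid Z=0, W]$ so that the $Z=0$ contribution of the $\eta_0$-term vanishes; collecting the remaining $\pi$-versus-$\pi_0$ terms produces the single factor $\frac{\pi_0(W)-\pi(W)}{1-\pi(W)}$. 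Applying the tower rule over $W$ given $X$ then yields $\E[\psi_S(\eta) - \psi_S(\eta_0)\mid X] = \mathrm{Bias}_S(X)$, where $\mathrm{Bias}_S(X) := \E\big[\big(\frac{\pi_0(W)-\pi(W)}{1-\pi(W)}\big)(g_{0,S}(W)-g_S(W))\mid X\big]$ is exactly the conditional nuisance-error functional appearing on the right-hand side of the lemma.

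Substituting this in, and using that $\theta_1(X)^2-\theta_2(X)^2$ and $\theta_1(X)-\theta_2(X)$ are $X$-measurable (so the outer expectations may be taken conditionally on $X$ before pairing with the biases), the double difference becomes $\E[\mathrm{Bias}_D(X)(\theta_1(X)^2-\theta_2(X)^2)] - 2\,\E[\mathrm{Bias}_Y(X)(\theta_1(X)-\theta_2(X))]$. A triangle inequality splits this into two terms: for the $Y$-term, Cauchy–Schwarz directly gives $2\,\E[\mathrm{Bias}_Y(X)^2]^{1/2}\,\|\theta_1(X)-\theta_2(X)\|$; for the $D$-term, I would factor $\theta_1(X)^2-\theta_2(X)^2 = (\theta_1(X)+\theta_2(X))(\theta_1(X)-\theta_2(X))$ and invoke the boundedness hypothesis $|\theta(X)|\le B$ to bound $|\theta_1(X)+\theta_2(X)|\le 2B$ pointwise, after which Cauchy–Schwarz yields a bound of the form $C B^2\,\E[\mathrm{Bias}_D(X)^2]^{1/2}\,\|\theta_1(X)-\theta_2(X)\|$ for an absolute constant $C$, consistent with the stated coefficient $4B^2$. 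Adding the two pieces completes the proof. The only step with real content is the conditional-bias identity; everything else is the same template as in Lemmas~\ref{lemma:loss_difference} and~\ref{lemma:mixed_bias}, and the main thing to be careful about is that it is the conditioning on $Z=0$ (the group defining $g_{0,S}$), not $Z=1$, that makes the cross terms cancel — exactly as in the identification argument underlying Lemma~\ref{lemma:ortho_moment_IV}.
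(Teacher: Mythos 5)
Your proposal is correct and follows essentially the same route as the paper: both arguments reduce the double difference to the pairing of the $X$-measurable factors $\theta_1^2-\theta_2^2$ and $\theta_1-\theta_2$ against the mixed-bias terms $\E\bigl[\tfrac{\pi_0(W)-\pi(W)}{1-\pi(W)}(g_{0,S}(W)-g_S(W))\mid X\bigr]$, established by decomposing $\widehat{Z}_\eta = Z - \tfrac{(1-Z)\pi(W)}{1-\pi(W)}$, conditioning on $W$, and using $g_{0,S}(W)=\E[\Delta S\mid Z=0,W]$, before finishing with Cauchy--Schwarz and the bound $|\theta_1^2-\theta_2^2|\le 2B\,|\theta_1-\theta_2|$. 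The only cosmetic difference is that you isolate the conditional-bias identity $\E[\psi_S(\eta)-\psi_S(\eta_0)\mid W]$ as a standalone step (mirroring Lemma~\ref{lemma:mixed_bias}), whereas the paper carries the $\theta$-dependent factors through the same computation; the content is identical.
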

\begin{proof}[Proof of Lemma~\ref{lemma:loss_difference_iv}]
    \begin{align*}
        \mathcal{L}_{IV}(\theta_1; \eta) - \mathcal{L}_{IV}(\theta_2; \eta)= \E\left[\widehat{Z}(\eta)\left\{(\Delta D - g_{D}(W))(\theta_1^2(X) - \theta_2^2(X)) - 2(\Delta Y - g_{Y}(W))(\theta_1(X) - \theta_2(X))\right\}\right]
    \end{align*}
    \begin{align*}
        \mathcal{L}_{IV}(\theta_1; \eta_0) - \mathcal{L}_{IV}(\theta_2; \eta_0)= \E\left[\widehat{Z}(\eta_0)\left\{(\Delta D - g_{0,D}(W))(\theta_1^2(X) - \theta_2^2(X)) - 2(\Delta Y - g_{0,Y}(W))(\theta_1(X) - \theta_2(X))\right\}\right]
    \end{align*}
    Let's first consider the $\E\left[\widehat{Z}(\eta_0)(D - g_{0,D}(W))(\theta_1^2(X) - \theta_2^2(X))\right]$ term:
    \begin{align*}
        ~&\E\left[\widehat{Z}(\eta_0)(\Delta D - g_{0,D}(W))(\theta_1^2(X) - \theta_2^2(X))\right]\\ 
        =~&  E\left[\left(Z-\frac{(1-Z)\pi_0(W)}{1-\pi_0(W)}\right)(\Delta D - g_{0,D}(W))(\theta_1^2(X) - \theta_2^2(X))\right]\\
        =~& \E[Z\Delta D(\theta_1^2(X) - \theta_2^2(X))] - \E[Zg_{0,D}(W)(\theta_1^2(X) - \theta_2^2(X))]\\
        ~&- E\left[E\left[(\Delta D - g_{0,D}(W))|Z=0,W\right]\pi_0(W)(\theta_1^2(X) - \theta_2^2(X))\right]\\
        =~& \E[\Delta DZ(\theta_1^2(X) - \theta_2^2(X))] - \E[Zg_{0,D}(W)(\theta_1^2(X) - \theta_2^2(X))]
    \end{align*}
    Now, for the $\E\left[\widehat{Z}(\eta)(D - g_{D}(W))(\theta_1^2(X) - \theta_2^2(X))\right]$ term:
    \begin{align*}
        ~&\E\left[\widehat{Z}(\eta)(\Delta D - g_{D}(W))(\theta_1^2(X) - \theta_2^2(X))\right]\\ 
        =~&  \E\left[\left(Z-\frac{(1-Z)\pi(W)}{1-\pi(W)}\right)(\Delta D - g_{D}(W))(\theta_1^2(X) - \theta_2^2(X))\right]\\
        =~& \E[\Delta DZ(\theta_1^2(X) - \theta_2^2(X))] - \E[Zg_{D}(W)(\theta_1^2(X) - \theta_2^2(X))] \\
        ~&- \E\left[\E\left[\frac{1-Z}{1-\pi_0(W)}(\Delta D - g_{D}(W))|W\right]\frac{(1-\pi_0(W))\pi_0(W)}{1-\pi(W)}(\theta_1^2(X) - \theta_2^2(X))\right]\\
        =~& \E[\Delta DZ(\theta_1^2(X) - \theta_2^2(X))] - \E[Zg_{D}(W)(\theta_1^2(X) - \theta_2^2(X))]
        \\
        ~&- \E\left[\E\left[(\Delta D - g_{D}(W))|Z=0,W\right]\frac{(1-\pi_0(W))\pi_0(W)}{1-\pi(W)}(\theta_1^2(X) - \theta_2^2(X))\right]\\
        =~& \E[\Delta DZ(\theta_1^2(X) - \theta_2^2(X))] - \E[Zg_{D}(W)(\theta_1^2(X)- \theta_2^2(X))]\\
        ~&- \E\left[\frac{(1-\pi_0(W))\pi_0(W)}{1-\pi(W)}(g_{0,D}(W) - g_{D}(W))(\theta_1^2(X) - \theta_2^2(X))\right]\\
    \end{align*}
    Putting them together, we get:
    \begin{align*}
        ~&\E\left[\widehat{Z}(\eta)(\Delta D - g_{D}(W))(\theta_1^2(X) - \theta_2^2(X)) - \widehat{Z}(\eta_0)(\Delta D - g_{0,D}(W))(\theta_1^2(X) - \theta_2^2(X))\right]\\
        =~&\E[Z(g_{0,D}(W) - g_{D}(W))(\theta_1^2(X)- \theta_2^2(X))] - \E\left[\frac{(1-\pi_0(W))\pi_0(W)}{1-\pi(W)}(g_{0,D}(W) - g_{D}(W))(\theta_1^2(X) - \theta_2^2(X))\right]\\
        =~&\E\left[\left(Z-\frac{(1-\pi_0(W))\pi_0(W)}{1-\pi(W)}\right)(g_{0,D}(W) - g_{D}(W))(\theta_1^2(X) - \theta_2^2(X))\right]\\
        =~&\E\left[\frac{\pi_0(W) - \pi(W)}{1-\pi(W)}(g_{0,D}(W) - g_{D}(W))(\theta_1^2(X) - \theta_2^2(X))\right]\\
    \end{align*}
    Similarly,
    \begin{align*}
        ~&\E\left[\widehat{Z}(\eta)(\Delta Y - g_{Y}(W))(\theta_1(X) - \theta_2(X)) - \widehat{Z}(\eta_0)(\Delta Y - g_{0,Y}(W))(\theta_1(X) - \theta_2(X))\right]\\
        =~&\E\left[\frac{\pi_0(W) - \pi(W)}{1-\pi(W)}(g_{0,Y}(W) - g_{Y}(W))(\theta_1(X) - \theta_2(X))\right]
    \end{align*}
    Thus, we have shown that:
    \begin{align*}
        ~&|\mathcal{L}_{IV}(\theta_1; \eta) - \mathcal{L}_{IV}(\theta_2; \eta) - (\mathcal{L}_{IV}(\theta_2; \eta_0) - \mathcal{L}_{IV}(\theta_2; \eta_0))|\\
        =~& \left|\E\left[\frac{\pi_0(W) - \pi(W)}{1-\pi(W)}(g_{0,D}(W) - g_{D}(W))(\theta_1^2(X) - \theta_2^2(X)) - 2\frac{\pi_0(W) - \pi(W)}{1-\pi(W)}(g_{0,Y}(W) - g_{Y}(W))(\theta_1(X) - \theta_2(X))\right]\right|\\
        \leq~& \E\left[\E\left[\left(\frac{\pi_0(W) - \pi(W)}{1-\pi(W)}\right)(g_{0,D}(W) - g_{D}(W))\,\middle\vert\,X\right]^2\right]^{\frac{1}{2}}\E\left[(\theta_1^2(X) - \theta_2^2(X))^2\right]^{\frac{1}{2}} \\
        ~&+ 2\E\left[\E\left[\left(\frac{\pi_0(W) - \pi(W)}{1-\pi(W)}\right)(g_{0,Y}(W) - g_{Y}(W))\,\middle\vert\,X\right]^2\right]^{\frac{1}{2}}\E\left[(\theta_1(X) - \theta_2(X))^2\right]^{\frac{1}{2}}\\
        \leq~& 4B^2 \E\left[\E\left[\left(\frac{\pi_0(W) - \pi(W)}{1-\pi(W)}\right)(g_{0,D}(W) - g_{D}(W))\,\middle\vert\,X\right]^2\right]^{\frac{1}{2}}\|\theta(X) - \theta(X)\|\\
        ~&+ 2\E\left[\E\left[\left(\frac{\pi_0(W) - \pi(W)}{1-\pi(W)}\right)(g_{0,Y}(W) - g_{Y}(W))\,\middle\vert\,X\right]^2\right]^{\frac{1}{2}}\|\theta(X) - \theta(X)\|\\
    \end{align*}
\end{proof}
We can now prove Theorem~\ref{thm:clate_rates}.
\begin{proof}[Proof of Theorem~\ref{thm:clate_rates}]
    Since results follows from Theorem \ref{thm:osl}, we first show that the minimizer of the loss in the function class $\Theta$, i.e. $\theta_*$, satisfies Assumptions~\ref{assum:osl_1}, \ref{assum:osl_2}, \ref{assum:osl_3}, and \ref{assum:osl_4} for the proposed loss $\mathcal{L}_{IV}(\theta;\eta)$ with $\|(\cdot)\|_{\Theta} = \|(\cdot)\|_{Z=1,CM}$
    First, Assumption~\ref{assum:osl_1} is satisfied when $\Theta$ is convex or when $\theta_0\in\Theta$. Now, we look at the second order directional derivative with respect to $\theta$. Following the same steps as in the proof of Porposition~\ref{prop:loss_clate}, we get:
    \begin{align*}
        ~&\partial_{\theta}^2 \mathcal{L}_{IV}(\bar{\theta}, \eta_0)[\theta - \theta_*, \theta - \theta_*]\\
        =~& \E[\widehat{Z}(\eta_0)(\Delta D-g_{0,D}(W))(\theta(X) - \theta_*(X))^2]\\
        =~&\E\left[(\theta(X) - \theta_*(X))^2|Z=1, D(1)>D(0)\right]\mathbb{P}(Z=1)\mathbb{P}(D(1)>D(0)|Z=1)\\
        =~& hk\|\theta(X) - \theta_*(W)\|_{Z=1,CM}\\ 
    \end{align*}
    Thus Assumption~\ref{assum:osl_2} is statisfied with $\beta_1 = hk$

    However, for Assumption~\ref{assum:osl_3}, we need to bound the second directional derivative for any $\eta$. Therefore, we consider the distance between $\partial_{\theta}^2 \mathcal{L}_{IV}(\bar{\theta}, \eta)[\theta - \theta_*, \theta - \theta_*] - \partial_{\theta}^2 \mathcal{L}_{IV}(\bar{\theta}, \eta_0)[\theta - \theta_*, \theta - \theta_*]$:
    \begin{align*}
        ~&\partial_{\theta}^2 \mathcal{L}_{IV}(\bar{\theta}, \eta)[\theta - \theta_*, \theta - \theta_*] - \partial_{\theta}^2 \mathcal{L}_{IV}(\bar{\theta}, \eta_0)[\theta - \theta_*, \theta - \theta_*]\\
        =~& 2\E[\widehat{Z}(\eta)(\Delta D-g_D(W))(\theta(X) - \theta_*(X))^2] - \E[\widehat{Z}(\eta_0)(\Delta D-g_{0,D}(W))(\theta(X) - \theta_*(X))^2]\\
        =~& 2\E\left[\frac{\pi_0(W) - \pi(W)}{1-\pi(W)}(g_{0,D}(W) - g_{D}(W))(\theta(X) - \theta_*(X))^2\right] \tag{By the same reasoning in the proof of Lemma ~\ref{lemma:loss_difference_iv}}\\
        \leq~& 2\E\left[\E\left[(\hat{g}_D(W) - g_{0,D}(W))\left(\frac{\pi_0(W) - \hat{\pi}(W)}{1-\hat{\pi}(W)}\right)\,\middle\vert\,X\right]^2\right]^{1/2}\|(\theta(X) - \theta_*(X))\|_4^{2} \tag{By Cauchy-Schwarz}\\
        \leq~& 8B^2\E\left[\E\left[(\hat{g}_D(W) - g_{0,D}(W))\left(\frac{\pi_0(W) - \hat{\pi}(W)}{1-\hat{\pi}(W)}\right)\,\middle\vert\,X\right]^2\right]^{1/2}\|(\theta(X) - \theta_*(X))\|^2_2\\
        \leq~& \frac{8B^2}{c}\E\left[\E\left[(\hat{g}_D(W) - g_{0,D}(W))\left(\frac{\pi_0(W) - \hat{\pi}(W)}{1-\hat{\pi}(W)}\right)\,\middle\vert\,X\right]^2\right]^{1/2}\|(\theta(X) - \theta_*(X))\|^2_{\Theta}
    \end{align*}
    Thus, for sufficiently small nuisance error, Assumption~\ref{assum:osl_3} is satisfied with $\kappa=0$, and $$\lambda = hk - \frac{8B^2}{c}\text{Error}(\pi, g_D) $$
    Lemma \ref{lemma:loss_difference_iv} implies Assumption \ref{assum:osl_4} with $r=0$, $\beta_2 = \frac{1}{c}\max\{4B^2, 2\} $, and $d(\eta, \eta_0)^2 =\text{Error}(\pi, \hat{g}_D) + \text{Error}(\pi, \hat{g}_Y)$.
    Thus invoking Theorem \ref{thm:osl}, we get that:
    \begin{align*}
    \|\theta(X) - \theta_*(X)\|^2_{\Theta} 
    &\leq \frac{4}{hk - \frac{8B^2}{c}\text{Error}(\pi, g_D)}R_{n}^2 + \left(\frac{\max(4B^2,2)}{c\left(hk - \frac{8B^2}{c}\text{Error}(\pi, g_D)\right)}\right)\left(\text{Error}(\pi, \hat{g}_Y) +\text{Error}(\pi, \hat{g}_D)\right) 
\end{align*}
\end{proof}

\section{Additional Experiment Details and Results} \label{app:add_exp}
\subsection{Experiment Setup}
Here we describe the data generating processes (DGP) for the fully synthetic experimens. We consider soome observed covariates $W$ with dimension $d_W$, and some unobserved confounding $U$, of dimension $d_U$. Let $\mu_W$, $\mu_U$ be the mean of $W$ and $U$, where each entry is sampled from a uniform ditsribution ranging from 0 to 1. Let $I_d$ denote the identity matrix with dimension $d$.
\begin{align*}
    W \sim~& \mathcal{N}(\mu_W, I_{d_X})\\
    W_{masked} \sim~& \text{Half of the dimensions of $W$ are randomly set to $0$}\\
    U \sim~& \mathcal{N}(\mu_U, I_{d_U})\\
    p =~& \frac{1}{1+exp(-\frac{1}{2} \beta_D^T (W-\mu_W)*(\alpha_U^T (U-\mu_U))^2)} \tag{$p$ is clipped s.t. $p\in[0.9,0.1]$}\\
    D \sim~& Binomial(p)\\
    \theta_0 = ~& \frac{1}{2}W_1*\mathbb{1}(W_2>0)
\end{align*}
For experiments with DGP that satisfies the conditional parallel trends assumptions:
\begin{align*}
    Y_0 =~& 5(\alpha_U^T(U-\mu_U))^2W_6 + W_2 + \epsilon_0, \quad \epsilon_0 \sim \mathcal{N}(0,0.5)\\
    Y_1 =~& 5(\alpha_U^T(U-\mu_U))^2W_6 + \mathbb{1}(W_1>0)W_1 + \beta_Y^T W_{masked} + W_3 + D*\theta_0 +\epsilon_1, \quad \epsilon_1 \sim \mathcal{N}(0,0.5)\\
\end{align*}
The results in Table ~\ref{table:mse_cpta} and \ref{table:mse_cpta_full} are generated using this process with $d_W = 20$ and $d_U=5$. We also ran experiments with higher dimensional covariates ($d_W=100$), and the results are presented in Table \ref{table:mse_cpta_hd}. The results in Table ~\ref{table:mse_Imbalanced} is generated using the same setup, but with $0.1*p$ as the treatment probabilities. These results all showcase that our proposed doubly robust CATT learner out performs the baseline methods. In addition to this DGP, we also experimented with a DGP that does not satisfy the conditional parallel trends assumptions. 
\begin{align*}
    \gamma \sim~& Uniform([-1,1])\\
    Y_0 =~& (\alpha_U^T(U-\mu_U))^2X_6 + X_2 + \epsilon_0, \quad \epsilon_0 \sim \mathcal{N}(0,0.5)\\
    m =~& |Y_0|\\
    Y_1 =~& (\alpha_U^T(U-\mu_U))^2X_6 + m\gamma^TX\odot X + \mathbb{1}(X_1>0)X_2 + D*\theta_0 + \epsilon_1, \quad \epsilon_1 \sim \mathcal{N}(0,0.5)
\end{align*}
Experiment results for this DGP is presented in Table \ref{table:mse_lagged_full}. We see that in this case, the conditional parallel trends is violated so the learner that assumes conditional parallel trends has a higher MSE than the those that assume lagged dependent outcome (as this DGP has a lagged outcome component). Moreover, we see that even when the assumptions are violated, the proposed learner is still more robust than the baseline outcome regression learner. 

For the semi-sythetic experiments on the minimum wage dataset, each dataset is constructed by first sampling 10000 units with replacement from the original dataset. We keep the covariate and pre-treatment outcome information, and generate the treatment assignment and the outcome in the post-treatment time period. The probability of receiving treatment is generated from the logitistic transformation of a linear transformation of a linear function of 2 "region" variables that are binary, and the log average payment information for year 2001 (i.e. $2*\text{(region 3)} - 2*\text{(region 4)} + (\text{(log average pay)}-10)$). The time trends, i.e.$Y_{post}(0) - Y_{pre}(0)$, is generated by $0.1*\text{(log average pay)} + 0.1*\text{(region 3)} + 0.1*\text{(years after treatment)}+ \text{(region 4)}*\text{(years after treatment)}^2 + \text{(log average pay)}^{\frac{1}{2}}*\text{(log average population)}$. The treatment effect is defined as $0.1*\text{(log average population)} + 0.1*\text{(log average population)}^{\frac{1}{2}}$.


\subsection{Additional Results}

\begin{table*}[ht]
\centering
\caption{MSE (mean ± standard deviation) over 100 simulations following the conditional parallel trends condition. Each row represent a different meta-learner, and columns represent the different nuisance function classes.}
\label{table:mse_cpta_full}
\vskip 0.15in
\begin{tabular}{|l|l|l|l|l|l|}
\hline
  & Basic & Lasso (CV) & Ridge (CV) & Random Forest & Best \\
\hline
Neural Net (CPTA OR)& 0.12 ± 0.02 & 0.12 ± 0.02 & 0.12 ± 0.02 & 0.38 ± 0.18 & 0.12 ± 0.02 \\
Neural Net (CPTA DR) & 0.1 ± 0.02 & 0.1 ± 0.03 & 0.1 ± 0.02 & 0.14 ± 0.04 & 0.1 ± 0.02 \\
Neural Net (Lagged OR) & 0.12 ± 0.02 & 0.14 ± 0.04 & 0.12 ± 0.02 & 1.27 ± 0.65 & 0.12 ± 0.02 \\
Neural Net (Lagged DR) & 0.1 ± 0.02 & 0.1 ± 0.03 & 0.1 ± 0.02 & 0.63 ± 0.4 & 0.1 ± 0.02 \\
\hline
XGBoost (OR) & 0.09 ± 0.02 & 0.09 ± 0.02 & 0.09 ± 0.02 & 0.31 ± 0.16 & 0.09 ± 0.02 \\
XGBoost (DR)  & 0.04 ± 0.01 & 0.04 ± 0.01 & 0.04 ± 0.02 & 0.06 ± 0.03 & 0.04 ± 0.01 \\
XGBoost (Lagged OR)   & 0.09 ± 0.02 & 0.11 ± 0.04 & 0.09 ± 0.02 & 1.15 ± 0.69 & 0.09 ± 0.02 \\
XGBoost (Lagged DR)  & 0.04 ± 0.01 & 0.05 ± 0.03 & 0.04 ± 0.02 & 0.54 ± 0.45 & 0.04 ± 0.01 \\
\hline
Linear (OR)   & 0.26 ± 0.07 & 0.26 ± 0.07 & 0.26 ± 0.07 & 0.51 ± 0.18 & 0.26 ± 0.07 \\
Linear (DR) & 0.26 ± 0.07 & 0.26 ± 0.07 & 0.26 ± 0.07 & 0.26 ± 0.07 & 0.26 ± 0.07 \\
Linear (Lagged OR)   & 0.26 ± 0.07 & 0.28 ± 0.08 & 0.26 ± 0.07 & 1.18 ± 0.56 & 0.26 ± 0.07 \\
Linear (Lagged DR)   & 0.26 ± 0.07 & 0.26 ± 0.07 & 0.26 ± 0.07 & 0.42 ± 0.19 & 0.26 ± 0.07 \\
\hline
\end{tabular}
\end{table*}

\begin{table*}[ht]
\centering
\caption{MSE (mean ± standard deviation) Over 100 Simulations of Imbalanced Dataset. Each row represent a different meta-learner, and columns represent the different nuisance function classes. }
\label{table:mse_Imbalanced_full}
\vskip 0.15in
\begin{tabular}{|l|l|l|l|l|l|l|}
\hline
 & No Controls & \begin{tabular}{@{}c@{}}Linear \\ Regression\end{tabular} & Lasso (CV) & Ridge (CV) & Random Forest & Best \\
\hline
Neural Net (OR) & 1.53 ± 0.74 & 0.22 ± 0.06 & 0.21 ± 0.06 & 0.21 ± 0.06 & 0.4 ± 0.15 & 0.21 ± 0.05 \\
Neural Net (DR) & 0.52 ± 0.31 & 0.18 ± 0.07 & 0.18 ± 0.05 & 0.18 ± 0.05 & 0.24 ± 0.07 & 0.18 ± 0.05 \\
Neural Net (CATE OR) & 0.66 ± 0.27 & 0.27 ± 0.08 & 0.27 ± 0.08 & 0.27 ± 0.08 & 0.51 ± 0.16 & 0.27 ± 0.08 \\
Neural Net (CATE DR) & 0.53 ± 0.22 & 0.22 ± 0.07 & 0.22 ± 0.07 & 0.21 ± 0.07 & 0.33 ± 0.11 & 0.21 ± 0.07 \\
\hline
XGBoost (OR) & 1.22 ± 0.58 & 0.21 ± 0.06 & 0.21 ± 0.06 & 0.21 ± 0.06 & 0.34 ± 0.11 & 0.21 ± 0.06 \\
XGBoost (DR) & 0.4 ± 0.14 & 0.12 ± 0.03 & 0.12 ± 0.03 & 0.12 ± 0.03 & 0.18 ± 0.06 & 0.12 ± 0.03 \\
XGBoost (CATE OR) & 0.66 ± 0.27 & 0.27 ± 0.08 & 0.27 ± 0.08 & 0.27 ± 0.08 & 0.51 ± 0.16 & 0.27 ± 0.08 \\
XGBoost (CATE DR) & 0.49 ± 0.22 & 0.15 ± 0.05 & 0.15 ± 0.04 & 0.15 ± 0.05 & 0.34 ± 0.13 & 0.15 ± 0.04 \\
\hline
\end{tabular}
\vskip -0.1in
\end{table*}

\begin{table*}[ht]
\centering
\caption{MSE (mean ± standard deviation) over 100 simulations following the conditional parallel trends condition, with 100 covariates.}
\label{table:mse_cpta_hd}
\vskip 0.15in
\begin{tabular}{|l|l|l|l|l|l|}
\hline
 & \begin{tabular}{@{}c@{}} Linear \\ Regression\end{tabular} & Lasso (CV) & Ridge (CV) & Random Forest & Best \\
\hline
Neural Net OR  & 0.21 ± 0.05 & 0.21 ± 0.05 & 0.2 ± 0.06 & 1.27 ± 0.69 & 0.21 ± 0.06 \\
Neural Net DR  & 0.18 ± 0.05 & 0.18 ± 0.06 & 0.18 ± 0.06 & 0.64 ± 0.38 & 0.18 ± 0.06 \\
Neural Net CATE OR  & 0.28 ± 0.08 & 0.28 ± 0.08 & 0.28 ± 0.08 & 0.65 ± 0.25 & 0.28 ± 0.08 \\
Neural Net CATE DR  & 0.3 ± 0.1 & 0.29 ± 0.09 & 0.29 ± 0.1 & 1.08 ± 0.71 & 0.2 ± 0.06 \\
Linear OR  & 0.27 ± 0.08 & 0.27 ± 0.08 & 0.27 ± 0.08 & 1.3 ± 0.63 & 0.27 ± 0.08 \\
Linear DR  & 0.27 ± 0.08 & 0.27 ± 0.08 & 0.27 ± 0.08 & 0.44 ± 0.13 & 0.27 ± 0.08 \\
Linear CATE OR  & 0.28 ± 0.08 & 0.27 ± 0.08 & 0.28 ± 0.08 & 0.65 ± 0.25 & 0.27 ± 0.08 \\
Linear CATE DR  & 0.29 ± 0.08 & 0.29 ± 0.08 & 0.29 ± 0.08 & 0.82 ± 0.33 & 0.28 ± 0.08 \\
XGBoost OR & 0.21 ± 0.06 & 0.2 ± 0.05 & 0.21 ± 0.05 & 0.96 ± 0.45 & 0.21 ± 0.05 \\
XGBoost DR  & 0.13 ± 0.04 & 0.12 ± 0.03 & 0.12 ± 0.03 & 0.61 ± 0.25 & 0.12 ± 0.03 \\
XGBoost CATE OR  & 0.28 ± 0.08 & 0.27 ± 0.08 & 0.28 ± 0.08 & 0.65 ± 0.25 & 0.27 ± 0.08 \\
XGBoost CATE DR  & 0.26 ± 0.09 & 0.24 ± 0.08 & 0.25 ± 0.09 & 1.18 ± 0.85 & 0.15 ± 0.05 \\
\hline
\end{tabular}
\end{table*}

\begin{table*}[ht]
\centering
\caption{MSE (mean ± standard deviation) over 100 simulations that does not satisfy the conditional parallel trends assumption. Each row represent a different meta-learner, and columns represent the different nuisance function classes.}
\label{table:mse_lagged_full}
\vskip 0.15in
\begin{tabular}{|l|l|l|l|l|l|}
\hline
 & \begin{tabular}{@{}c@{}} Linear \\ Regression\end{tabular} & Lasso (CV) & Ridge (CV) & Random Forest & Best \\
\hline
\begin{tabular}{@{}c@{}}Neural Net \\ (CPTA OR)\end{tabular}& 76.93 ± 135.25 & 76.34 ± 129.71 & 74.87 ± 127.94 & 35.49 ± 91.98 & 36.85 ± 87.85 \\
\begin{tabular}{@{}c@{}}Neural Net \\ (CPTA DR)\end{tabular}  & 17.07 ± 74.16 & 15.54 ± 54.25 & 20.41 ± 86.35 & 17.24 ± 63.37 & 18.38 ± 65.13 \\
\begin{tabular}{@{}c@{}}Neural Net \\ (Lagged OR)\end{tabular} & 70.31 ± 98.19 & 70.07 ± 93.93 & 69.98 ± 100.44 & 26.21 ± 39.93 & 24.81 ± 32.88 \\
\begin{tabular}{@{}c@{}}Neural Net \\ (Lagged DR)\end{tabular} & 4.37 ± 4.66 & 4.93 ± 5.65 & 4.94 ± 5.89 & 4.99 ± 14.46 & 5.09 ± 10.25 \\
\hline
\begin{tabular}{@{}c@{}}XGBoost \\ (CPTA OR)\end{tabular}  & 65.67 ± 122.65 & 63.5 ± 127.89 & 63.82 ± 113.59 & 29.39 ± 69.84 & 30.15 ± 85.27 \\
\begin{tabular}{@{}c@{}}XGBoost \\ (CPTA DR)\end{tabular}  & 20.49 ± 58.55 & 22.99 ± 81.52 & 23.31 ± 74.74 & 26.9 ± 128.52 & 31.11 ± 149.05 \\
\begin{tabular}{@{}c@{}}XGBoost \\ (Lagged OR)\end{tabular} & 55.62 ± 82.27 & 56.87 ± 83.95 & 53.95 ± 77.19 & 21.29 ± 32.85 & 22.39 ± 38.38 \\
\begin{tabular}{@{}c@{}}XGBoost \\ (Lagged DR)\end{tabular}  & 9.88 ± 14.34 & 9.34 ± 13.13 & 10.33 ± 21.81 & 10.76 ± 40.39 & 8.14 ± 13.38 \\
\hline
\begin{tabular}{@{}c@{}}Linear \\ (CPTA OR)\end{tabular}  & 18.41 ± 62.54 & 18.0 ± 61.84 & 18.41 ± 62.68 & 17.61 ± 65.51 & 17.61 ± 65.51 \\
\begin{tabular}{@{}c@{}}Linear \\ (CPTA DR)\end{tabular} & 14.56 ± 57.69 & 14.7 ± 58.43 & 14.56 ± 57.7 & 15.84 ± 64.12 & 15.84 ± 64.12 \\
\begin{tabular}{@{}c@{}}Linear \\ (Lagged OR)\end{tabular} & 12.08 ± 28.93 & 11.64 ± 27.55 & 12.07 ± 28.9 & 9.78 ± 21.18 & 9.78 ± 21.18 \\
\begin{tabular}{@{}c@{}}Linear \\ (Lagged DR)\end{tabular} & 4.78 ± 5.81 & 4.85 ± 5.97 & 4.78 ± 5.81 & 3.99 ± 7.18 & 3.99 ± 7.18 \\
\hline
\end{tabular}
\end{table*}

\begin{figure}[ht]
\vskip 0.1in
\begin{center}
\centerline{\includegraphics[width=\columnwidth]{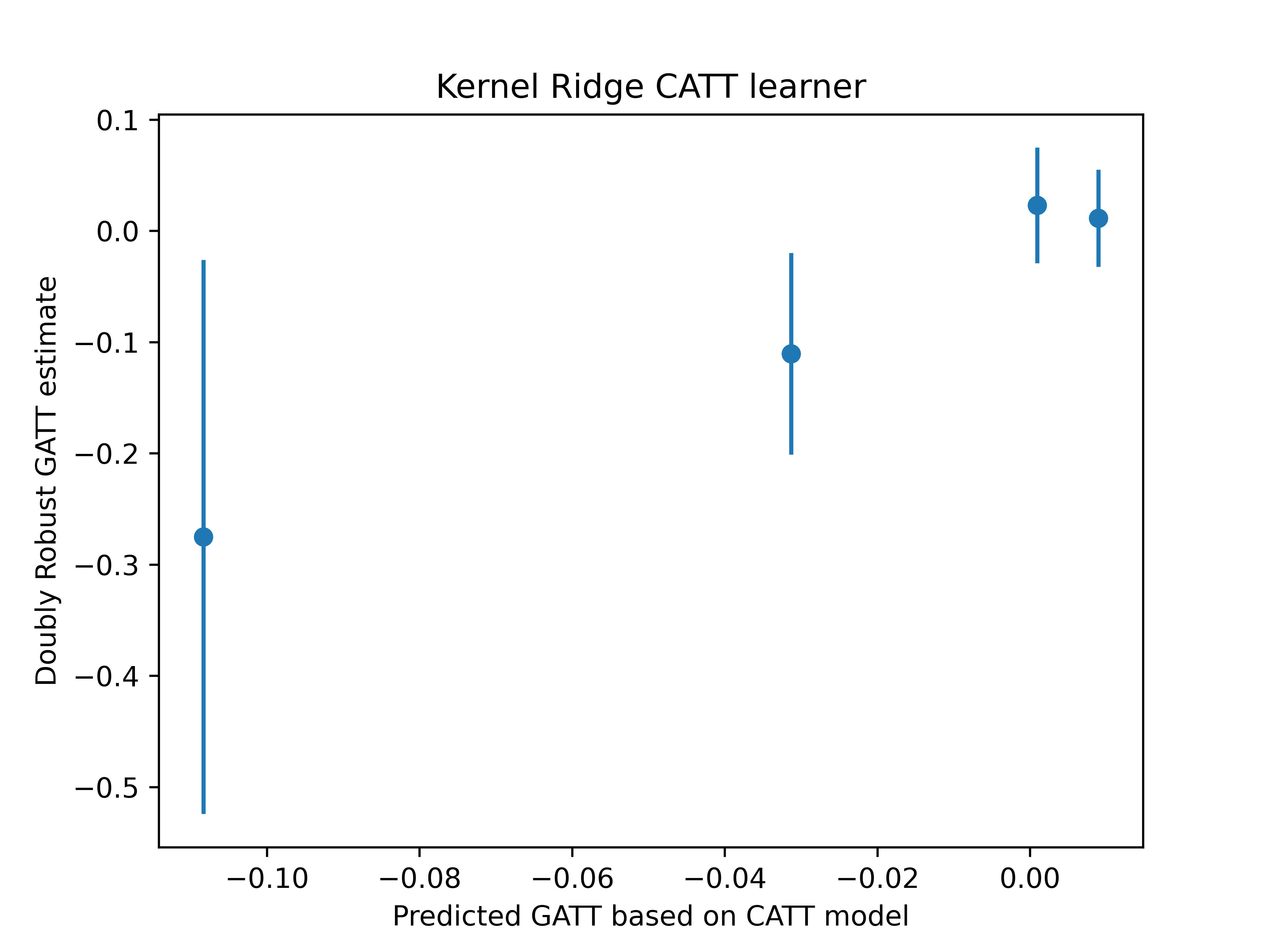}}
\caption{Calibration plot for CATT w.r.t log county population for the XGBoost doubly robust learner.}
\label{fig:cal_pop_kernel}
\end{center}
\vskip -0.2in
\end{figure}

\begin{figure}[ht]
\vskip 0.1in
\begin{center}
\centerline{\includegraphics[width=\columnwidth]{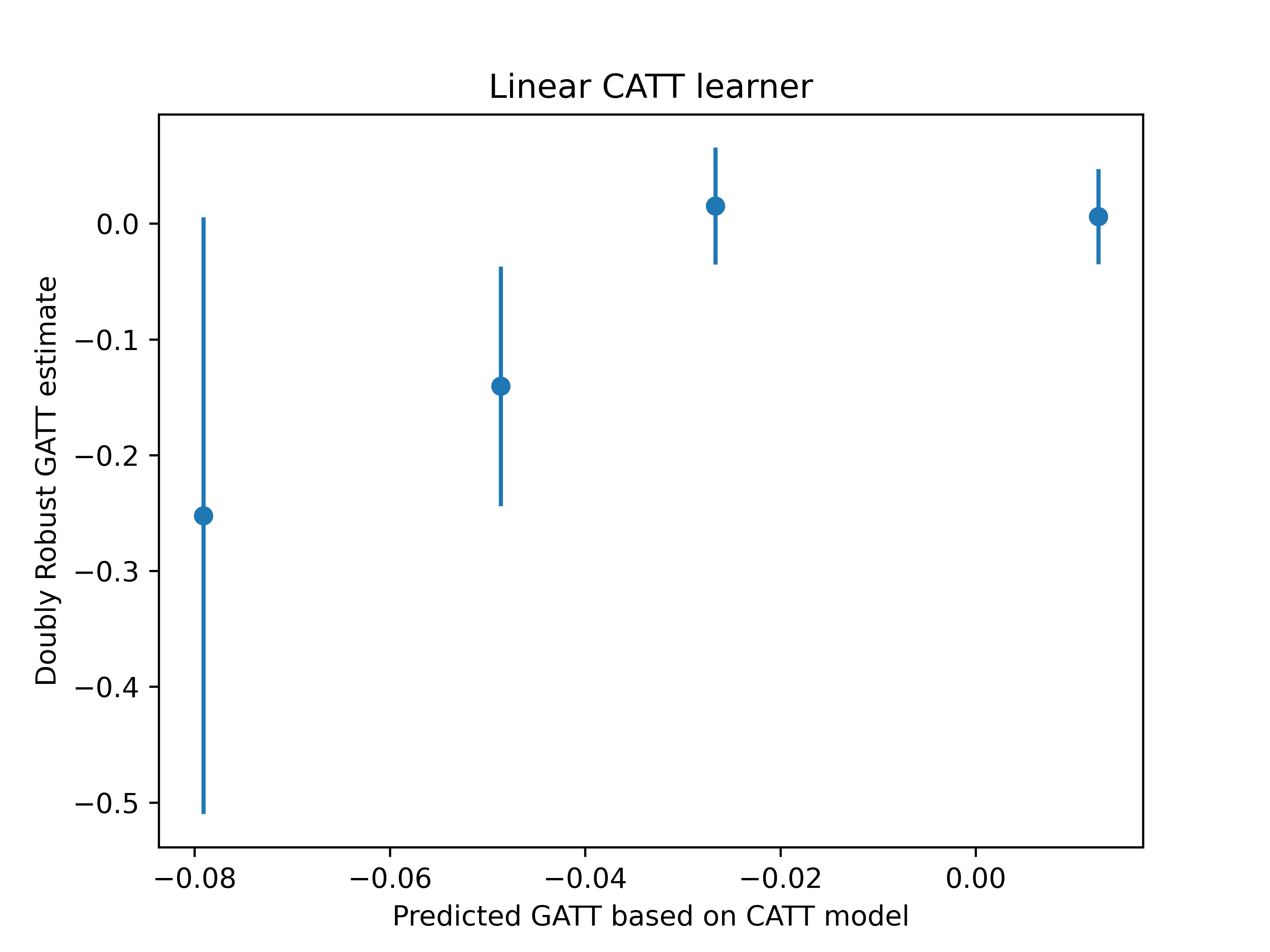}}
\caption{Calibration plot for CATT w.r.t log county population for the linear doubly robust learner.}
\label{fig:cal_pop_linear}
\end{center}
\vskip -0.2in
\end{figure}
\end{document}